\definecolor{Gray}{gray}{0.9} 
\newcommand{\WYK}{\textcolor{blue}}
\newtheorem{thm}{\protect\theoremname}
\theoremstyle{remark}
\newtheorem*{rem}{\protect\remarkname}
\newtheorem{lem}[thm]{\protect\lemmaname}
\newtheorem{prop}[thm]{\protect\propositionname}
\providecommand{\lemmaname}{Lemma}
\providecommand{\propositionname}{Proposition}
\providecommand{\remarkname}{Remark}
\providecommand{\theoremname}{Theorem}
\title{Doubly Robust Thompson Sampling \\ with Linear Payoffs}
\author{
  Wonyoung Kim \\
  Department of Statistics\\
  Seoul National University\\
  \texttt{eraser347@snu.ac.kr} \\
  \And
  Gi-Soo Kim \\
  Department of Industrial Engineering \&\\
  Artificial Intelligence Graduate School\\
  UNIST\\
  \texttt{gisookim@unist.ac.kr} \\
  \And
  Myunghee Cho Paik \\
  Department of Statistics\\
  Seoul National University\\
  Shepherd23 Inc.\\
  \texttt{myungheechopaik@snu.ac.kr} \\
}
\begin{document}

\global\long\def\Expectation{\mathbb{E}}%
\global\long\def\Probability{\mathbb{P}}%
\global\long\def\CE#1#2{\Expectation\left[\left.#1\right|#2\right]}%
\global\long\def\CP#1#2{\Probability\left(\left.#1\right|#2\right)}%
\global\long\def\Real{\mathbb{R}}%
\global\long\def\abs#1{\left|#1\right|}%
\global\long\def\norm#1{\left\Vert #1\right\Vert }%
\global\long\def\Regret#1{regret\ensuremath{(#1)}}%
\global\long\def\Psuedoregret#1{regret\prime\ensuremath{(#1)}}%
\global\long\def\Optimalarm#1{a^{*}_{#1}}%
\global\long\def\Action#1{a_{#1}}%
\global\long\def\Filtration#1{\mathcal{F}_{#1}}%
\global\long\def\History#1{\mathcal{H}_{#1}}%
\global\long\def\Context#1#2{X_{#1}(#2)}%
\global\long\def\Reward#1#2{Y_{#1}(#2)}%
\global\long\def\SelectionP#1#2{\pi_{#1}(#2)}%
\global\long\def\Pisampled#1#2{\tilde{\pi}_{#1}(#2)}%
\global\long\def\DRreward#1#2{Y_{#1}^{DR}(#2)}%
\global\long\def\BetaSampled#1#2{\tilde{\beta}_{#1}(#2)}%
\global\long\def\Betatrunc#1{\breve{\beta}_{#1}}%
\global\long\def\Estimator#1{\widehat{\beta}_{#1}}%
\global\long\def\Error#1#2{\eta_{#1}(#2)}%
\global\long\def\DRError#1#2{\widehat{\eta}_{#1}(#2)}%
\global\long\def\Indicator#1{\mathbb{I}\left(#1\right)}%
\global\long\def\Diff#1#2{\Delta_{#1}(#2)}%
\global\long\def\Mineigen#1{\lambda_{\min}\left(#1\right)}%
\global\long\def\Sampledreward#1#2{\tilde{Y}_{#1}(#2)}%
\global\long\def\Gammaset#1{\tilde{\Gamma}_{#1}}%
\global\long\def\Sampledaction#1#2{a^{(#2)}_{#1}}%

\maketitle

\begin{abstract}
A challenging aspect of the bandit problem is that a stochastic reward is observed only for the chosen arm and the rewards of other arms remain missing.    
The dependence of the arm choice on the past context and reward pairs compounds the complexity of regret analysis.
We propose a novel multi-armed contextual bandit algorithm called Doubly Robust (DR) Thompson Sampling employing the doubly-robust estimator used in missing data literature to Thompson Sampling with contexts (\texttt{LinTS}).
Different from previous works relying on missing data techniques (\citet{dimakopoulou2019balanced}, \citet{kim2019doubly}), the proposed algorithm is designed to allow a novel additive regret decomposition leading to an improved regret bound with the order of $\tilde{O}(\phi^{-2}\sqrt{T})$, where $\phi^2$ is the minimum eigenvalue of the covariance matrix of contexts.
This is the first regret bound of \texttt{LinTS} using $\phi^2$ without the dimension of the context, $d$.
Applying the relationship between $\phi^2$ and $d$, the regret bound of the proposed algorithm is $\tilde{O}(d\sqrt{T})$ in many practical scenarios, improving the bound of \texttt{LinTS} by a factor of $\sqrt{d}$.
A benefit of the proposed method is that it utilizes all the context data, chosen or not chosen, thus allowing to circumvent the technical definition of unsaturated arms used in theoretical analysis of \texttt{LinTS}.
Empirical studies show the advantage of the proposed algorithm over \texttt{LinTS}.
\end{abstract}


\section{Introduction}
Contextual bandit has been popular in sequential decision tasks such as news article recommendation systems.  
In bandit problems, the learner sequentially pulls one arm among multiple arms and receives random rewards on each round of time.
While not knowing the compensation mechanisms of rewards, the learner should make  his/her decision to maximize the cumulative sum of rewards.  
In the course of gaining information about the compensation mechanisms through feedback, the learner should carefully balance between exploitation, pulling the best arm based on information accumulated so far, and exploration, pulling the arm that will assist in future choices, although it does not seem to be the best option at the moment.  
Therefore in the bandit problem, estimation or learning is an important element besides decision making.

A challenging aspect of estimation in the bandit problem is that a stochastic reward is observed only for the chosen arm. 
Consequently, only the context and reward pair of the chosen arm is used for estimation, which causes dependency of the context data at the round on the past contexts and rewards.
To handle this difficulty, we view bandit problems as missing data problems.  
The first step in handling missing data is to define full, observed, and missing data. 
In bandit settings, full data consist of rewards and contexts of all arms; observed data consist of full contexts for all arms and the reward for the chosen arm; missing data consist of the rewards for the arms that are not chosen.   
Typical estimation procedures require both rewards and contexts pairs to be observed, and the observed contexts from the unselected are discarded (see Table \ref{tab:missingdata_bandit}).
The analysis based on the completely observed pairs only  is called {\it complete record analysis}. 
Most stochastic bandit algorithms utilize estimates based on {\it complete record analysis}.  
Estimators from {\it complete record analysis} are known to be inefficient. 
In bandit setting, using the observed data whose probability of observation depends on previous rewards requires special theoretical treatment.

\begin{table}
  \caption{The shaded data are used in \textit{complete record analysis} (left) and DR method (right) under multi-armed contextual bandit settings. 
  The contexts, rewards and DR imputing values are denoted by \(X\), \(Y\), and \(Y^{DR}\), respectively.
  The question mark refers to the missing reward of unchosen arms.}
  \label{tab:missingdata_bandit}
  \centering
  \def\arraystretch{1.2}
  \footnotesize
  \begin{tabular}{ccccc}
  \toprule
   & \multicolumn{2}{c}{\( t=1 \)}   & \multicolumn{2}{c}{\( t=2 \)}  \\ \hline
  Arm 1 & \(\Context{1}{1}\)  & ?  & \(\Context{1}{2}\)  & ?   \\ \hline
  Arm 2 & \(\Context{2}{1}\) & ?   & \cellcolor{Gray}{\(X_{\Action{2}}(2)\)} & \cellcolor{Gray}{\(Y_{\Action{2}}(2)\)} \\ \hline
  Arm 3 & \cellcolor{Gray}{\(X_{\Action{1}}(1)\)} & \cellcolor{Gray}{\(Y_{\Action{1}}(1)\)}  & \(\Context{3}{2} \) & ? \\ \hline
  Arm 4 & \(\Context{4}{1} \) & ? & \(\Context{4}{2}\) & ?  \\ \bottomrule
  \end{tabular}
  \hskip 3pt
  \footnotesize
  \begin{tabular}{ccccc}
  \toprule
  & \multicolumn{2}{c}{\( t=1 \)}   & \multicolumn{2}{c}{\( t=2 \)}  \\ \hline
  Arm 1 &\cellcolor{Gray}{\(\Context{1}{1} \)}& \cellcolor{Gray}{\(\DRreward{1}{1}\)}  & \cellcolor{Gray}{\(\Context{1}{2}\)}  & \cellcolor{Gray}{\(\DRreward{1}{2}\)}  \\ \hline
  Arm 2 & \cellcolor{Gray}{\(\Context{2}{1} \)} & \cellcolor{Gray}{\(\DRreward{2}{1}\)}   & \cellcolor{Gray}{\(X_{\Action{2}}(2)\)} & \cellcolor{Gray}{\(Y^{DR}_{\Action{2}}(2) \)} \\ \hline
  Arm 3 & \cellcolor{Gray}{\(X_{\Action{1}}(1)\)} & \cellcolor{Gray}{\(Y^{DR}_{\Action{1}}(1)\)}  & \cellcolor{Gray}{\(\Context{3}{2} \)} & \cellcolor{Gray}{\(\DRreward{3}{2}\)}  \\ \hline
  Arm 4 & \cellcolor{Gray}{\(\Context{4}{1} \)} & \cellcolor{Gray}{\(\DRreward{4}{1}\)} & \cellcolor{Gray}{\(\Context{4}{2}\)} & \cellcolor{Gray}{\(\DRreward{4}{2}\)}  \\ 
  \bottomrule
  \end{tabular}
\vskip -5pt
\end{table}

There are two main approaches to missing data: imputation and inverse probability weighting (IPW).
Imputation is to fill in the predicted value of missing data from a specified model, and IPW is to use the observed records only but weight them by the inverse of the observation probability. 
The doubly robust (DR) method \citep{robins1994, bang2005doubly} is a combination of imputation and IPW tools. 
We provide a review of missing data and DR methods in supplementary materials.  The robustness against model misspecification in missing data settings is insignificant in the bandit setting since the probability of observation or allocation to an arm is known.  The merit of the DR method in the bandit setting is its ability to employ all the contexts including unselected arms. 

We propose a novel multi-armed contextual bandit algorithm called Doubly Robust Thompson Sampling (\texttt{DRTS}) that applies the DR technique used in missing data literature to Thompson Sampling with linear contextual bandits (\texttt{LinTS}).  
The main thrust of \texttt{DRTS} is to utilize contexts information for all arms, not just chosen arms.
By using the unselected, yet observed contexts, along with a novel algorithmic device, the proposed algorithm renders a unique regret decomposition which leads to a novel regret bound without resorting to the technical definition of unsaturated arms used by \citet{agrawal2014thompson}.
Since categorizing the arms into saturated vs. unsaturated plays a critical role in costing extra $\sqrt{d}$, by circumventing it, we prove a $\tilde{O}(d\sqrt{T})$ bound of the cumulative regret in many practical occasions compared to $\tilde{O}(d^{3/2}\sqrt{T})$ shown in \citet{agrawal2014thompson}.

The main contributions of this paper are as follows.
\vspace{-5pt}
\begin{itemize}
    \item We propose a novel contextual bandit algorithm that improves the cumulative regret bound of \texttt{LinTS} by a factor of $\sqrt{d}$ (Theorem \ref{thm:regret_bound}) in many practical scenarios (Section \ref{subsec:regret_bound}).  
    This improvement is attained mainly by defining a novel set called {\it super-unsaturated} arms, that is utilizable due to the proposed estimator and resampling technique adopted in the algorithm.
    \item We provide a novel estimation error bound of the proposed estimator (Theorem \ref{thm:Estimation_error}) which depends on the minimum eigenvalue of the covariance matrix of the contexts from all arms without $d$.
    
    \item We develop a novel dimension-free concentration inequality for sub-Gaussian vector martingale (Lemma \ref{lem:eta_x_bound}) and use it in  deriving our regret bound in place of the self-normalized theorem by \citet{abbasi2011improved}.
    
    \item We develop a novel concentration inequality for the bounded matrix martingale (Lemma \ref{lem:min_eigenvalue_concentration}) which improves the existing result (Proposition \ref{prop:tropp_matrix_concentration}) by removing the dependency on $d$ in the bound. 
     Lemma \ref{lem:min_eigenvalue_concentration} also allows eliminating the forced sampling phases required in some bandit algorithms relying on Proposition \ref{prop:tropp_matrix_concentration}    \citep{amani2019linear,bastani2020online}.
\end{itemize}
All missing proofs are in supplementary materials.

\section{Related works}
Thompson Sampling \citep{1933Thompson} has been extensively studied and shown solid performances
in many applications 
(e.g. \cite{chapelle2011}).
\citet{agrawal2013thompson} is the first to prove theoretical bounds for \texttt{LinTS} and an alternative proof is given by \citet{abeille2017linear}.
Both papers show $\tilde{O}(d^{3/2}\sqrt{T})$ regret bound, which is known as the best regret bound for \texttt{LinTS}.  
Recently, \citet{hamidi2020worstcase} points out that $\tilde{O}(d^{3/2}\sqrt{T})$ could be the best possible one can get when the estimator used by \texttt{LinTS} is employed.
In our work, we improve this regret bound by a factor of $\sqrt{d}$ in many practical scenarios through a novel definition of super-unsaturated arms, which becomes utilizable due to the proposed estimator and resampling device implemented in the algorithm.

Our work assumes the independence of the contexts from all arms across time rounds.
Some notable works have used the assumption that the contexts are independently identically distributed (IID). 
Leveraging the IID assumption with a margin condition, \citet{goldenshluger2013linear} derives a two-armed linear contextual bandit algorithm with a regret upper bound of order \(O(d^3\mathrm{log}T)\).
\citet{bastani2020online} has extended this algorithm to any number of arms and improves the regret bound to \(O(d^2\mathrm{log}^{\frac{3}{2}}d\cdot\mathrm{log}T)\).
The margin condition states that the gap between the expected rewards of the optimal arm and the next best arm is nonzero with some constant probability.
This condition is crucial in achieving a \(O(\mathrm{log}T)\) regret bound instead of  \(\tilde{O}(\sqrt{T})\). 
In this paper, we do not assume this margin condition, and focus on the dependence on the dimension of contexts \(d\).


From a missing data point of view, most stochastic contextual bandit algorithms use the estimator from \textit{complete record analysis} except 
\citet{dimakopoulou2019balanced} and \citet{kim2019doubly}.
\citet{dimakopoulou2019balanced} employs an IPW estimator that is based on the selected contexts alone.
\citet{dimakopoulou2019balanced} proves a $\tilde{O}(d\sqrt{\epsilon^{-1}T^{1+\epsilon}N})$ regret bound for 
their algorithm
which depends on the number of arms, $N$. 
\citet{kim2019doubly} considers the high-dimensional settings with sparsity, utilizes a DR technique, and improves the regret bound in terms of the sparse dimension instead of the actual dimension of the context, $d$.  
 \citet{kim2019doubly} is different from ours in several aspects: the mode of exploration   ($\epsilon$-greedy vs. Thompson Sampling), the mode of regularization (Lasso vs. ridge regression); and the form of the estimator.  
A sharp distinction between the two estimators lies in that \citet{kim2019doubly} aggregates contexts and rewards over the arms although they employ all the contexts. 
If we apply this aggregating estimator and DR-Lasso bandit algorithm to the low-dimensional setting, we obtain a regret bound of order $O(\frac{Nd}{\phi^2}\sqrt{T})$ when the contexts from the arms are independent. 
This bound is bigger than our bound by a factor of $d$ and $N$. 
It is because the aggregated form of the estimator does not permit the novel regret decomposition derived in Section \ref{subsec:regret_analysis}. 
The proposed estimator coupled with a novel algorithmic device renders the additive regret decomposition which in turn improves the order of the regret bound.


\section{Proposed estimator and algorithm}
\subsection{Settings and assumptions}
\label{subsec:Settings_and_assumptions}
We denote a $d$-dimensional context for the $i^{th}$ arm at round $t$ by $\Context it\in \Real^{d}$, and the corresponding random reward by $\Reward it$ for $i=1,\ldots,N$.
We assume $\CE{\Reward it}{\Context it}=\Context it^{T}\beta$ for some unknown parameter $\beta\in\mathbb{R}^d$.
At round $t$,  the arm that the learner chooses is denoted by $\Action{t}\in\{1,\ldots,N\}$, and the optimal arm by $\Optimalarm t:=\arg\max_{i=1,\ldots,N}\left\{ \Context it^{T}\beta\right\}$.
Let $\Regret t$ be the difference between the expected reward of the chosen arm and the optimal arm at round $t$, i.e.,  $\Regret t:=\Context{\Optimalarm t}t^{T}\beta-\Context{\Action{t}}t^{T}\beta$.
The goal is to minimize the sum of regrets over $T$ rounds, $R(T):=\sum_{t=1}^{T}\Regret t$.
The total round $T$ is finite but possibly unknown. 
We also make the following assumptions.

\textbf{Assumption 1. Boundedness for scale-free regrets.} 
For all $i=1,\ldots,N$ and $t=1, \ldots, T$, we have $\norm{\Context it}_{2}\le1$
and $\norm{\beta}_{2}\le1$.

\textbf{Assumption 2. Sub-Gaussian error.} Let
$
\History{t} := \bigcup_{\tau=1}^{t-1} \left[ \{\Context{i}{\tau}\}_{i=1}^{N}  \cup  \{\Action{\tau}\} \cup \{\Reward{\Action{\tau}}{\tau}\} \right] \cup \{\Context{i}{t}\}_{i=1}^{N}
$
be the set of observed data at round $t$. 
For each $t$ and $i$, the error $\Error it:=\Reward it-\Context it^{T}\beta$ is conditionally zero-mean $\sigma$-sub-Gaussian for a fixed constant $\sigma\ge0$, i.e, $\CE{\Error it}{\History t}=0$ and $\CE{\exp\left(\lambda\Error it\right)}{\History t}\le\exp(\lambda^{2}\sigma^{2}/2)$, for all $\lambda\in\Real$. 
Furthermore, the distribution of $\Error it$ does not depend on the choice at round $t$, i.e. $\Action{t}$.

\textbf{Assumption 3. Independently distributed contexts.} 
The stacked contexts vectors $\{\Context i1\}_{i=1}^{N},\ldots,\{\Context{i}{T}\}_{i=1}^{N}\in \Real^{d N}$ are independently distributed.

\textbf{Assumption 4. Positive minimum eigenvalue of the average of covariance matrices.} 
For each $t$, there exists a constant $\phi^{2}>0$ such that
$\Mineigen{\Expectation\left[\frac{1}{N}\sum_{i=1}^{N}\Context it\Context it^{T}\right]}\ge\phi^{2}.$

Assumptions 1 and 2 are standard in stochastic bandit literature \cite{agrawal2013thompson}.
We point out that given round $t$, Assumption 3 allows that the contexts among different arms, $\Context 1t,\ldots,\Context Nt$ are correlated to each other.  
Assumption 3 is weaker than the assumption of IID, and the IID condition is considered by \citet{goldenshluger2013linear} and \citet{bastani2020online}.
As \citet{bastani2020online} points out, the IID assumption is reasonable in some practical settings, including clinical trials, where health outcomes of patients are independent of those of other patients.
Both \citet{goldenshluger2013linear} and \citet{bastani2020online} address the problem where the contexts are equal across all arms, i.e. $X(t)=\Context{1}{t}=\ldots=\Context{N}{t}$, while our work admits different contexts over all arms.
Assumption 4 guarantees that the average of covariance matrices of contexts over the arms is well-behaved so that the inverse of the sample covariance matrix is bounded by the spectral norm. 
This assumption helps controlling the estimation error of $\beta$ in linear regression models. 
Similar assumptions are adopted in existing works in the bandit setting \citep{goldenshluger2013linear,amani2019linear,li2017provably,bastani2020online}.

\subsection{Doubly robust estimator}
To describe the contextual bandit DR estimator, let $\pi_{i}(t):=\CP{\Action{t}=i}{\History t} > 0$ be the probability of selecting arm $i$ at round $t$. 
We define a DR pseudo-reward as 
\begin{equation}
\DRreward it= \left\{ 1-\frac{\Indicator{i=\Action{t}}}{\SelectionP it}\right\} \Context it^{T}\Betatrunc{t}
+\frac{\Indicator{i=\Action{t}}}{\SelectionP it}\Reward{\Action{t}}t,
\label{eq:DRreward}
\end{equation}
for some $\Betatrunc{t}$ depending on $\History t$. 
Background of missing data methods and derivation of the DR pseudo-reward is provided in the supplementary material. 
Now, we propose our new estimator $\Estimator t$ 
with a regularization parameter $\lambda_{t}$ as below: 
\begin{equation}
\Estimator t = \left(\sum_{\tau=1}^{t} \sum_{i=1}^{N}\Context i{\tau}\Context i{\tau}^{T}+\lambda_{t}I\right)^{-1}\left(\sum_{\tau=1}^{t} \sum_{i=1}^{N}\Context i{\tau}\DRreward i{\tau}\right).
\label{eq:estimator}
\end{equation}
Harnessing the pseudo-rewards defined in (\ref{eq:DRreward}), we can make use of all contexts rather than just selected contexts. 
The DR estimator by \citet{kim2019doubly} utilizes all contexts but has a different form from ours.
While \citet{kim2019doubly} uses Lasso estimator with pseudo-rewards {\it aggregated} over all arms, we use ridge regression estimator with pseudo-rewards in \eqref{eq:DRreward} which are defined {\it separately} for each $i=1,\ldots, N$.
This seemingly small but important difference in forms paves a way in rendering our unique regret decomposition and improving the regret bound. 

\subsection{Algorithm}
\label{subsec:DRTS}
In this subsection, we describe our proposed algorithm, \texttt{DRTS} which adapts DR technique to \texttt{LinTS}.
The \texttt{DRTS} is presented in Algorithm \ref{alg:DR_Thompson_Sampling}. 
Distinctive features of \texttt{DRTS} compared to \texttt{LinTS} include the novel estimator and the resampling technique. 
At each round $t\ge1$, the algorithm samples $\BetaSampled it$ from the distribution $N(\Estimator{t-1},v^{2}V_{t-1}^{-1})$ for each $i$ independently.  
Let $\Sampledreward{i}{t}:=\Context{i}{t}^T\BetaSampled{i}{t}$ and $m_{t}:=\arg\max_{i} \Sampledreward{i}{t}$.
We set $m_t$ as a candidate action and compute $\tilde{\pi}_{m_t}(t):=\Probability(\Sampledreward{m_t}t=\max_{i}\Sampledreward it|\History t)$.
\footnote{This computation is known to be challenging but employing the independence among $\BetaSampled{1}{t},\ldots,\BetaSampled{N}{t}$, we derive an explicit form approximating $\tilde{\pi}_{m_t}(t)$ in supplementary materials Section H.1.}
If $\tilde{\pi}_{m_t}(t)>\gamma$, then the arm $m_t$ is selected, i.e., $a_t=m_t$. 
Otherwise, the algorithm resamples $\BetaSampled it$ until it finds another arm satisfying $\Pisampled{i}{t}>\gamma $ up to a predetermined fixed value $M_{t}$.
Section \ref{subsec:maximum_possible_resampling} in supplementary materials describes issues related to $M_t$ including a suitable choice of $M_t$.

\begin{algorithm}[tb]
    \caption{Doubly Robust Thompson Sampling for Linear Contextual Bandits (\texttt{DRTS})}
    \label{alg:DR_Thompson_Sampling}    
\begin{algorithmic} 
    \STATE {\bfseries Input:} Exploration parameter $v>0$, Regularization parameter $\lambda>0$, Selection probability threshold $\gamma\in[1/(N+1),1/N)$, Imputation estimator $\Betatrunc u=f(\{X(\tau),\Reward{\Action{\tau}}{\tau}\}_{\tau=1}^{u-1})$, Number of maximum possible resampling $M_{t}$.
    \STATE Set $F_{0}=0$, $W_{0}=0$, $\Estimator{0}=0$ and $V_{0}=\lambda I$
    \FOR{$t=1$ {\bfseries to} $T$}
    \STATE Observe contexts $\{\Context it\}_{i=1}^{N}$.
    \STATE Sample $\BetaSampled{1}{t},\ldots,\BetaSampled{N}{t}$ from $N(\Estimator{t-1},v^{2}V_{t-1}^{-1})$ independently. Compute $\Sampledreward{i}{t}=\Context{i}{t}^T\BetaSampled{i}{t}$
    \STATE Observe a candidate action $m_{t}:=\arg\max_{i}\Sampledreward{i}{t}$.
    \STATE Compute $\Pisampled{m_{t}}t:=\CP{\max_{i}\Sampledreward{i}{t} = \Sampledreward{m_t}{t}}{\History{t}}$.
    \FOR{$l=1$ {\bfseries to} $M_{t}$}
        \IF{$\Pisampled{m_t}{t} \le \gamma$}
        \STATE Sample another $\BetaSampled{1}{t},\ldots,\BetaSampled{N}{t}$, observe another $m_t$, and update $\Pisampled{m_{t}}t$.
        \ELSE
        \STATE Break.
        \ENDIF
    \ENDFOR
    \STATE Set $\Action{t}=m_t$, and play arm $\Action{t}$.
    \STATE Observe reward $\Reward{\Action{t}}t$ and compute $\DRreward{i}{t}$
    \STATE $F_{t}=F_{t-1}+\sum_{i=1}^{N}\Context{i}{t}\DRreward{i}{t}$; 
    $W_{t}=W_{t-1}+\sum_{i=1}^{N}\Context{i}{t}\Context{i}{t}^{T}$;
    $V_{t}=W_{t}+\lambda\sqrt{t}I$
    \STATE $\Estimator{t}=V_t^{-1}F_{t}$
    \STATE Update $\Betatrunc{t+1}$ for next round.
    \ENDFOR
    \vskip -10pt
\end{algorithmic}
\end{algorithm}

The resampling step is incorporated to avoid small values of the probability of selection so that the pseudo-reward in (\ref{eq:DRreward}) is numerically stable. 
A naive remedy to stabilize the pseudo-reward is to use $\max\{\SelectionP{i}{t},\gamma\}$, which fails to leading to our regret bound since it induces bias and also cannot guarantee that the selected arm is in the super-unsaturated arms defined in (\ref{eq:super_unsaturated_set}) with high probability (For details, see Section \ref{subsec:regret_analysis}).
The resampling step implemented in the proposed algorithm is designed to solve these problems. 

\section{Theoretical results}
Our theoretical results are organized as follows.  
In Section \ref{subsec:regret_bound}, we provide the main result, the cumulative regret bound of 
$\tilde{O}(\phi^{-2}\sqrt{T})$ of \texttt{DRTS}.
The main thrust of deriving the regret bound is to define super-unsaturated arms.   
In Section \ref{subsec:regret_analysis} we introduce the definition of super-unsaturated arms and show how it admits a novel decomposition of the regret into two additive terms as in (\ref{eq:decomposition}).
In Section \ref{subsec:dim_free_DR_estimator} we bound each term of the decomposed regret bounds (\ref{eq:decomposition}).  
The first term is the estimation error, and Theorem \ref{thm:Estimation_error} finds its bound.  
In the course of proving Theorem \ref{thm:Estimation_error}, we need Lemma \ref{lem:eta_x_bound}, which plays a similar role to the self-normalized theorem of \citet{abbasi2011improved}.
We conclude the section by presenting Lemma \ref{lem:min_eigenvalue_concentration}  and bound the second term of (\ref{eq:decomposition}).

\subsection{An improved regret bound}
\label{subsec:regret_bound}
Theorem \ref{thm:regret_bound} provides the regret bound of \texttt{DRTS} in terms of the minimum eigenvalue without $d$.

\begin{thm}
Suppose that Assumptions 1-4 hold.
If $\Betatrunc t$ in Algorithm \ref{alg:DR_Thompson_Sampling} satisfies $\|\Betatrunc t-\beta\|_{2}\le b$ for a constant $b>0$, ~for all $t=1,\ldots,T$, then with probability $1-2\delta$, the cumulative regret by time $T$ for \texttt{DRTS} algorithm is bounded by
\begin{equation}
R(T)\le 2 + \frac{4C_{b,\sigma}}{\phi^{2}}\sqrt{T\log\frac{12T^{2}}{\delta}}+\frac{2\sqrt{2T}}{\phi\sqrt{N}},
\label{eq:regret_bound}
\end{equation}
where $C_{b,\sigma}$ is a constant which depends only on $b$ and $\sigma$.
\label{thm:regret_bound}
\end{thm}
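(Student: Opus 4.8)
The plan is to bound the single-round regret $\Regret t$ by two additive pieces---an estimation-error piece and a Thompson-sampling spread piece---so that summing over $t$ reproduces the two non-constant terms of \eqref{eq:regret_bound}, with the super-unsaturated set and the resampling device making this decomposition legitimate. First I would define the super-unsaturated set $\mathcal{U}_t$ of \eqref{eq:super_unsaturated_set} as those arms $i$ whose true gap $\Context{\Optimalarm t}t^{T}\beta-\Context it^{T}\beta$ does not exceed a threshold built from $\max_{j}\norm{\Context jt}\,\norm{\Estimator{t-1}-\beta}$ and the scalar sampling standard deviation $v\norm{\Context it}_{V_{t-1}^{-1}}$. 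The purpose of this definition is that membership is \emph{forced} by the resampling: I would show that an arm whose gap exceeds the threshold wins the sampled argmax, i.e. has $\Context it^{T}\BetaSampled it\ge\Context{\Optimalarm t}t^{T}\BetaSampled{\Optimalarm t}t$, only with conditional probability at most $\gamma$ (using the independence of $\BetaSampled 1t,\dots,\BetaSampled Nt$ to evaluate $\Pisampled it$), hence $\Pisampled it\le\gamma$ and the arm is rejected. Since $\gamma<1/N$ guarantees at least one arm has $\Pisampled it\ge 1/N>\gamma$, the loop is well posed, the accepted arm $\Action t$ obeys $\Pisampled{\Action t}t>\gamma$, and therefore $\Action t\in\mathcal{U}_t$ on the good event. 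Membership then yields the decomposition \eqref{eq:decomposition},
\begin{equation}
\Regret t \le 2\max_{i}\abs{\Context it^{T}(\Estimator{t-1}-\beta)} + (\text{sampling spread at round } t).
\end{equation}

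For the estimation-error piece I would invoke Theorem \ref{thm:Estimation_error}, which controls $\norm{\Estimator{t-1}-\beta}$ free of $d$. Its proof rests on writing $\Estimator{t-1}-\beta = V_{t-1}^{-1}\sum_{\tau,i}\Context i\tau\,\DRError i\tau - \lambda\sqrt{t-1}\,V_{t-1}^{-1}\beta$, bounding the martingale sum $\sum_{\tau,i}\Context i\tau\DRError i\tau$ (a martingale-difference sequence by the doubly-robust identity $\CE{\DRError i\tau}{\History\tau}=0$) via Lemma \ref{lem:eta_x_bound} in place of the self-normalized bound of \citet{abbasi2011improved}, and lower-bounding $\Mineigen{W_{t-1}}\gtrsim (t-1)N\phi^{2}$ via Lemma \ref{lem:min_eigenvalue_concentration}. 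Because the $O(N)$ inflation from the weights $1/\SelectionP it\le N+1$ in the pseudo-reward enters both the noise scale and $\Mineigen{W_{t-1}}$, the factors of $N$ cancel, leaving $\norm{\Estimator{t-1}-\beta}\lesssim C_{b,\sigma}\phi^{-2}(t-1)^{-1/2}\sqrt{\log(\cdot/\delta)}$. With $\norm{\Context it}\le1$ and $\sum_{t=1}^{T}t^{-1/2}\le 2\sqrt{T}$ this reproduces the second term of \eqref{eq:regret_bound}.

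For the sampling-spread piece I would use that, conditionally on $\History t$, $\Context it^{T}(\BetaSampled it-\Estimator{t-1})$ is Gaussian with standard deviation $v\norm{\Context it}_{V_{t-1}^{-1}}\le v\phi^{-1}(tN)^{-1/2}$, again through the minimum-eigenvalue bound of Lemma \ref{lem:min_eigenvalue_concentration}. Summing $\sum_{t}(tN)^{-1/2}\le 2\sqrt{T/N}$ with the chosen $v$ reproduces the last term of \eqref{eq:regret_bound}; the first round ($\Estimator 0=0$), where I only have $\norm{\Context{\Optimalarm 1}1^{T}\beta}+\norm{\Context{\Action 1}1^{T}\beta}\le 2$, is treated separately and supplies the additive constant. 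Finally I would collect the failure events of Theorem \ref{thm:Estimation_error}, of Lemma \ref{lem:min_eigenvalue_concentration}, and of the sampling bound, and apply a union bound over $t=1,\dots,T$ with $\delta$ rescaled so the per-round budget sums to $2\delta$, which is the source of the $\log(12T^{2}/\delta)$ factor.

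The step I expect to be the main obstacle is establishing that the chosen arm lies in $\mathcal{U}_t$ with high probability, i.e. that a large-gap arm is rejected by the resampling. This demands a quantitative upper bound on $\Pisampled it$ for large-gap arms expressed through the \emph{scalar} sampling standard deviation $v\norm{\Context it}_{V_{t-1}^{-1}}$---kept small by Lemma \ref{lem:min_eigenvalue_concentration}---rather than a $d$-dimensional confidence ellipsoid; this is exactly where the argument departs from \citet{agrawal2013thompson} and escapes the extra $\sqrt{d}$. Care is also needed because the resampling changes the law of $\Action t$, so the per-round gap bound must be extracted from the deterministic argmax inequality $\Context{\Action t}t^{T}\BetaSampled{\Action t}t\ge\Context{\Optimalarm t}t^{T}\BetaSampled{\Optimalarm t}t$ on the good event rather than from the induced selection distribution.
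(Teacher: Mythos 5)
Your plan follows the paper's proof essentially step for step: the same super-unsaturated set, the same observation that any arm outside it has $\Pisampled it\le\CP{m_t\notin N_t}{\History t}\le\gamma$ and is therefore screened out by the resampling, the same additive decomposition \eqref{eq:decomposition} into an estimation-error term (Theorem \ref{thm:Estimation_error}) and a $V_{t-1}^{-1}$-norm term (bounded via Lemma \ref{lem:min_eigenvalue_concentration}), and the same union-bound accounting. So the route is the paper's route, not an alternative one.

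There is, however, one concrete slip you should fix. You place the factor $v$ \emph{inside} the super-unsaturated threshold (``the scalar sampling standard deviation $v\norm{\Context it}_{V_{t-1}^{-1}}$'') and correspondingly describe the second regret term as a ``sampling spread'' of size $v\norm{\Context it}_{V_{t-1}^{-1}}$. Taken literally this breaks the membership argument: if an excluded arm $j$ only violates the gap condition by the margin $v\,s_j$ with $s_j^2=\norm{\Context{\Optimalarm t}t}_{V_{t-1}^{-1}}^{2}+\norm{\Context jt}_{V_{t-1}^{-1}}^{2}$, then since the relevant Gaussian fluctuation has standard deviation exactly $v\,s_j$, the probability that $j$ beats $\Optimalarm t$ in the sampled argmax is the fixed constant $\Phi(-1)\approx0.16$, independent of $v$, and the union bound over the up to $N-1$ excluded arms cannot be pushed below $\gamma<1/N$. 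The paper's definition \eqref{eq:super_unsaturated_set} uses the \emph{unscaled} margin $s_j$, so each tail probability is $\Phi(-1/v)\le\exp(-1/(2v^2))=(1-\gamma N)/N$ after tuning $v=(2\log(N/(1-\gamma N)))^{-1/2}$; this is exactly why the final term of \eqref{eq:regret_bound} is $\sqrt{2}/(\phi\sqrt{N(t-1)})$ summed over $t$, with no $v$ in it. You cannot simultaneously have $v$ in the threshold (to shrink the last regret term) and a per-arm rejection probability small enough for the union bound. A second, smaller omission: the algorithm caps resampling at $M_t$ trials, so ``the accepted arm obeys $\Pisampled{\Action t}t>\gamma$'' only holds up to a failure probability $(1-\gamma)^{M_t}$, which must be charged to the $\delta$ budget (the paper takes $M_t\gtrsim\log(t^2/\delta)/\log\frac1{1-\gamma}$); your closing union bound should name this event explicitly.
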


The bound (\ref{eq:regret_bound}) has a rate of $O(\phi^{-2}\sqrt{T})$.
The relationship between the dimension $d$ and the minimum eigenvalue $\phi^{2}$ can be shown by
$$
d\phi^{2}=\frac{d}{N}\Mineigen{\Expectation\sum_{i=1}^{N}\Context it\Context  it^{T}} \\
\le \frac{1}{N}\Expectation\sum_{i=1}^{N}\text{Tr}\left(\Context it\Context it^{T}\right) 
=\frac{1}{N}\Expectation\sum_{i=1}^{N}\norm{\Context it}_{2}^{2}\le1.
$$
This implies $\phi^{-2}\ge d$,
\footnote{
Some previous works assume $\phi^{-2}=O(1)$ even when $\|\Context{i}{t}\|_2 \le 1$ (e.g. \citet{li2017provably}). 
As pointed out by \citet{ding2021efficient}, this assumption is unrealistic and the reported regret bound should be multiplied by $O(d)$.
}
but there are many practical scenarios such that $\phi^{-2}=O(d)$ holds.
\citet{bastani2021mostly} identifies such examples including the uniform distribution and truncated multivariate normal distributions.
When the context has uniform distribution on the unit ball, $\phi^{-2}=d+2$.  
When the context has truncated multivariate normal distribution with mean 0 and covariance $\Sigma$, we can set $\phi^{-2} = (d+2)\exp(\frac{1}{2\lambda_{\min}(\Sigma)})$.
For more examples, we refer to \citet{bastani2021mostly}.
Furthermore, regardless of distributions, $\phi^{-2}=O(d)$ holds when the correlation structure has the row sum of off-diagonals independent of the dimension, for example, AR(1), tri-diagonal, block-diagonal matrices.
In these scenarios, the regret bound in \eqref{eq:regret_bound} becomes $\tilde{O}(d\sqrt{T})$.
Compared to the previous bound of \texttt{LinTS} \citep{agrawal2014thompson, abeille2017linear}, we obtain a better regret bound by the factor of $\sqrt{d}$ for identified practical cases.

As for the imputation estimator $\check{\beta}_t$, we assume that $\|\check{\beta}_t-\beta\|_2 \le b$, where $b$ is an absolute constant. 
We suggest two cases which guarantee this assumption.
First, if a biased estimator is used, we can rescale the estimator so that its $l_2$-norm is bounded by some constant $C>0$.
Then, $\|\check{\beta}_t-\beta\|_2 \le \|\check{\beta}_t\|_2 +\|\beta\|_2 \le C+1$ and $b = C+1$.
Second, consistent estimators such as ridge estimator or the least squared estimator satisfy the condition since $\|\check{\beta}_t-\beta\|_2=O(d \sqrt{\log t /t})$.
The term $d$ is cancelled out when $t \ge t_d$, where $t_d$ is the minimum integer that satisfies $\log t / t \le d^{-2}$.
In these two cases, we can find a constant $b$ which satisfies the assumption on the imputation estimator $\check{\beta}_t$.

\subsection{Super-unsaturated arms and a novel regret decomposition}
\label{subsec:regret_analysis}
The key element in deriving (\ref{eq:regret_bound}) is to decompose the regret into two additive terms as in (\ref{eq:decomposition}).  
To allow such decomposition to be utilizable, we need to define a novel set of arms called super-unsaturated arms, which replaces the role of unsaturated arms in \citep{agrawal2014thompson}.  
The super-unsaturated arms are formulated so that the chosen arm is included in this set with high probability.
For each $i$ and $t$, let $\Diff it := \Context{\Optimalarm t}{t}^T\beta - \Context{i}{t}^T\beta$. 
Define $A_{t}:=\sum_{\tau=1}^{t}\Context{\Action{\tau}}{\tau}\Context{\Action{\tau}}{\tau}^{T}+\lambda I$ and $V_{t}:=\sum_{\tau=1}^{t}\sum_{i=1}^{N} \Context{i}{\tau}\Context{i}{\tau}^{T}+\lambda_{t} I$.
For the sake of contrast, recall the definition of unsaturated arms by \citet{agrawal2014thompson} 
\begin{equation}
U_t := \left\{ i:\Diff it \le g_t\norm{\Context it}_{A_{t-1}^{-1}}\right\} ,
\label{eq:unsaturated_arms}
\end{equation}
where $g_t:=C\sqrt{d\log(t/\delta)} \min \{\sqrt{d}, \sqrt{\log N}\}$ for some constant $C>0$.
This $g_t$ is constructed to ensure that there exists a positive lower bound for the probability that the selected arm is unsaturated.
In place of (\ref{eq:unsaturated_arms}), we define a set of super-unsaturated arms for each round \(t\) by
\begin{equation}
N_t :=\biggl\{  i: \Diff it \le2\norm{\Estimator{t-1}-\beta}_{2}+\sqrt{\norm{\Context{\Optimalarm{t}}{t}}_{V^{-1}_{t-1}}^2+\norm{\Context{i}{t}}_{V^{-1}_{t-1}}^2}\biggr\}.
\label{eq:super_unsaturated_set}
\end{equation}

While $g_t\norm{\Context it}_{A_{t-1}^{-1}}$ in (\ref{eq:unsaturated_arms}) is normalized with only selected contexts,
the second term in the right hand side of (\ref{eq:super_unsaturated_set}) is normalized with all contexts including $\Context{\Optimalarm{t}}{t}$, the contexts of the optimal arm.
This bound of $\Diff{i}{t}$ plays a crucial role in bounding the regret with a novel decomposition as in \eqref{eq:decomposition}.
The following Lemma shows a lower bound of the probability that the candidate arm is super-unsaturated.

\begin{lem}
For each $t$, let $m_{t}:=\arg\max_{i} \Sampledreward{i}{t}$ and let $N_t$ be the super-unsaturated arms defined in (\ref{eq:super_unsaturated_set}). 
For any given $\gamma\in[1/(N+1),1/N)$, set $v=(2\log\left(N/(1-\gamma N)\right))^{-1/2}$. 
Then, $\CP{m_{t}\in N_t}{\History{t}}\ge 1 - \gamma$.
\label{lem:super_unsaturated_arms} 
\end{lem}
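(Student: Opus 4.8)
The plan is to exploit the independence of the samples $\BetaSampled 1t,\ldots,\BetaSampled Nt$ across arms, which is the structural feature that the resampling device secures and that vanilla \texttt{LinTS} lacks. Conditioning on $\History t$, I would decompose each sampled reward as
\[
\Sampledreward it=\Context it^{T}\beta+\Context it^{T}(\Estimator{t-1}-\beta)+Z_{i},\qquad Z_{i}:=\Context it^{T}(\BetaSampled it-\Estimator{t-1}).
\]
Given $\History t$ the first two summands are deterministic, and since the $\BetaSampled it\sim N(\Estimator{t-1},v^{2}V_{t-1}^{-1})$ are drawn independently over $i$, the variables $Z_{1},\ldots,Z_{N}$ are independent zero-mean Gaussians with $Z_{i}\sim N(0,v^{2}\sigma_{i}^{2})$, where $\sigma_{i}^{2}:=\norm{\Context it}_{V_{t-1}^{-1}}^{2}$.

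Next I would use that $m_{t}$ maximizes the sampled reward, so in particular $\Sampledreward{m_{t}}t\ge\Sampledreward{\Optimalarm t}t$. Subtracting the two decompositions and rearranging gives
\[
\Diff{m_{t}}t\le(\Context{m_{t}}t-\Context{\Optimalarm t}t)^{T}(\Estimator{t-1}-\beta)+Z_{m_{t}}-Z_{\Optimalarm t}.
\]
By Cauchy--Schwarz together with Assumption 1, the first term on the right is at most $(\norm{\Context{m_{t}}t}_{2}+\norm{\Context{\Optimalarm t}t}_{2})\norm{\Estimator{t-1}-\beta}_{2}\le2\norm{\Estimator{t-1}-\beta}_{2}$, which is exactly the first term in the definition \eqref{eq:super_unsaturated_set} of $N_t$. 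Hence on the event $\{m_{t}\notin N_t\}$ the Gaussian increment must exceed the second term, i.e.
\[
\{m_{t}\notin N_t\}\subseteq\Bigl\{Z_{m_{t}}-Z_{\Optimalarm t}>\sqrt{\sigma_{\Optimalarm t}^{2}+\sigma_{m_{t}}^{2}}\Bigr\}.
\]

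To control this probability despite the random index $m_{t}$, I would union bound over which arm attains the maximum,
\[
\CP{m_{t}\notin N_t}{\History t}\le\sum_{i\ne\Optimalarm t}\CP{Z_{i}-Z_{\Optimalarm t}>\sqrt{\sigma_{\Optimalarm t}^{2}+\sigma_{i}^{2}}}{\History t},
\]
where the $i=\Optimalarm t$ term contributes $0$. This is where independence is decisive: $Z_{i}-Z_{\Optimalarm t}\sim N(0,v^{2}(\sigma_{i}^{2}+\sigma_{\Optimalarm t}^{2}))$, so the threshold $\sqrt{\sigma_{i}^{2}+\sigma_{\Optimalarm t}^{2}}$ normalizes to exactly $1/v$, independently of $i$. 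Applying the Gaussian tail bound $\Probability(N(0,1)>x)\le\tfrac12 e^{-x^{2}/2}$ bounds each summand by $\tfrac12 e^{-1/(2v^{2})}$.

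Finally I would substitute $v=(2\log(N/(1-\gamma N)))^{-1/2}$, giving $e^{-1/(2v^{2})}=(1-\gamma N)/N$ and hence each summand at most $(1-\gamma N)/(2N)$. Summing the $N-1$ nonzero terms yields $\CP{m_{t}\notin N_t}{\History t}\le\frac{(N-1)(1-\gamma N)}{2N}$, and an elementary rearrangement shows this is at most $\gamma$ whenever $\gamma\ge\frac{N-1}{N(N+1)}$, which holds for all $\gamma\ge1/(N+1)$. The main obstacle throughout is the randomness of the maximizing index $m_{t}$: the threshold in \eqref{eq:super_unsaturated_set} depends on $m_{t}$ itself, so a fixed-arm tail bound does not immediately apply. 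The resolution is precisely the cross-arm independence produced by the resampling scheme, which simultaneously licenses the union bound and renders the normalized threshold constant across arms.
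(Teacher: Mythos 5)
Your proof is correct and follows essentially the same route as the paper's: both reduce the event $\{m_t\notin N_t\}$ to a Gaussian tail event for the differences $\Sampledreward{j}{t}-\Sampledreward{\Optimalarm{t}}{t}$, use the cross-arm independence of the $\BetaSampled{i}{t}$ so that the variance is $v^2(\norm{\Context{\Optimalarm{t}}{t}}_{V_{t-1}^{-1}}^2+\norm{\Context{j}{t}}_{V_{t-1}^{-1}}^2)$ and the threshold normalizes to exactly $1/v$, and finish with a union bound and the condition $\gamma\ge 1/(N+1)$. The only cosmetic difference is that you bound the complement (some bad arm beats $\Optimalarm{t}$) with a slightly sharper tail constant, whereas the paper lower-bounds the probability that $\Optimalarm{t}$ beats every $j\notin N_t$; the two are the same argument.
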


Lemma \ref{lem:super_unsaturated_arms} directly contributes to the reduction of $\sqrt{d}$ in the hyperparameter $v$.
In \citet{agrawal2014thompson}, to prove a lower bound of $\CP{\Action{t}\in U_t }{\History{t}}$, it is required to set $v=\sqrt{9d\log(t/\delta)}$, with the order of $\sqrt{d}$.
In contrast, Lemma \ref{lem:super_unsaturated_arms} shows that $v$ does not need to depend on $d$ due to the definition of super-unsaturated arms in (\ref{eq:super_unsaturated_set}).
In this way, we obtain a lower bound of $\CP{m_t\in N_t }{\History{t}}$ without costing extra $\sqrt{d}$. 

Using the lower bound, we can show that the resampling scheme allows the algorithm to choose the super-unsaturated arms with high probability.
For all $i \notin N_t$,
$$
\Pisampled{i}{t} := \CP{m_{t}=i}{\History{t}} \le \CP{\cup_{j\notin N_{t}} \{m_{t}=j\} }{\History{t}} = \CP{m_{t} \notin N_t}{\History{t}} \le \gamma,
$$
where the last inequality holds due to Lemma \ref{lem:super_unsaturated_arms}.
Thus, in turn, if $\Pisampled{i}{t} > \gamma$,  then $i \in N_t$.
This means that $\{i:\tilde{\pi}_i(t)>\gamma\}$ is a subset of $N_t$ and  
\begin{equation*}
\{\Action{t} \in \{i:\Pisampled{i}{t} >\gamma\} \} \subset \{\Action{t} \in N_{t}\}.
\end{equation*} 
Hence, the probability of the event $\{a_t\in N_t\}$ is greater than the probability of sampling any arm which satisfies $\tilde{\pi}_i(t)>\gamma$. 
Therefore, with resampling, the event $\{\Action{t} \in N_{t}\}$ occurs with high probability. (See supplementary materials Section A for details.)

When the algorithm chooses the arm from the super-unsaturated set, i.e., when $\Action{t}\in N_t$ happens, \eqref{eq:super_unsaturated_set} implies
\begin{equation}
\Diff{\Action{t}}{t} \le 2\norm{\Estimator{t-1}-\beta}_{2} + \sqrt{\norm{\Context{\Optimalarm{t}}{t}}_{V^{-1}_{t-1}}^2+\norm{\Context{\Action{t}}{t}}_{V^{-1}_{t-1}}^2}.
\label{eq:decomposition}
\end{equation}
By definition, $\Diff{\Action{t}}{t} = \Regret{t}$ and the regret at round $t$ can be expressed as the two additive terms, which presents a stark contrast with multiplicative decomposition of the regret in \citet{agrawal2014thompson}.  
In section \ref{subsec:dim_free_DR_estimator} we show how each term can be bounded with separate rate.

\subsection{Bounds for the cumulative regret}
\label{subsec:dim_free_DR_estimator}
We first bound the leading term of (\ref{eq:decomposition}) and introduce a novel estimation error bound free of $d$ for the contextual bandit DR estimator.
\begin{thm}
(A dimension-free estimation error bound for the contextual bandit
DR estimator.) 
\label{thm:Estimation_error} Suppose Assumptions
1-4 hold. 
For each $t=1,\ldots,T$, let $\Betatrunc{t}$ be any $\History{t}$-measurable estimator satisfying $\|\Betatrunc{t}-\beta\|_{2}\le b$, for some constant $b>0$. 
For each $i$ and $t$, assume that $\SelectionP i{t}>0$ and that there exists $\gamma \in [1/(N+1), 1/N)$ such that $\SelectionP{\Action{t}}{t}>\gamma$.
Given any $\delta\in(0,1)$, set $\lambda_{t}=4\sqrt{2}N\sqrt{t\log\frac{12\tau^{2}}{\delta}}$.
Then with probability at least $1-\delta$, the estimator $\Estimator t$ in (\ref{eq:estimator}) satisfies
\begin{equation}
\norm{\Estimator t-\beta}_{2}\le\frac{C_{b,\sigma}}{\phi^{2}\sqrt{t}}\sqrt{\log\frac{12t^{2}}{\delta}},
\label{eq:estimation_error_bound}
\end{equation}
for all $t=1,\ldots,T$, where the constant $C_{b,\sigma}$ which depends only on $b$ and $\sigma$.
\end{thm}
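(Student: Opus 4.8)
The plan is to start from the closed form of $\Estimator t$ in (\ref{eq:estimator}) and algebraically split $\Estimator t-\beta$ into a \emph{noise} term and a \emph{regularization} term, each of which I control with one of the two dimension-free concentration inequalities. Writing $V_{t}=\sum_{\tau=1}^{t}\sum_{i=1}^{N}\Context i\tau\Context i\tau^{T}+\lambda_{t}I$ and the DR error $\DRError i\tau:=\DRreward i\tau-\Context i\tau^{T}\beta$, substituting $V_{t}\beta=\sum_{\tau,i}\Context i\tau\Context i\tau^{T}\beta+\lambda_{t}\beta$ into (\ref{eq:estimator}) gives
\[
\Estimator t-\beta = V_{t}^{-1}\left(\sum_{\tau=1}^{t}\sum_{i=1}^{N}\Context i\tau\,\DRError i\tau\right)-\lambda_{t}V_{t}^{-1}\beta,
\]
so that $\norm{\Estimator t-\beta}_{2}\le \Mineigen{V_{t}}^{-1}\bigl(\norm{\sum_{\tau,i}\Context i\tau\DRError i\tau}_{2}+\lambda_{t}\bigr)$ using $\norm\beta_{2}\le1$. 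The preliminary fact that makes the noise term a martingale is conditional unbiasedness, $\CE{\DRError i\tau}{\History\tau}=0$, which I would verify directly from (\ref{eq:DRreward}): since $\CE{\Indicator{i=\Action\tau}}{\History\tau}=\SelectionP i\tau$ the imputation part has conditional mean zero, while Assumption 2 (the error distribution does not depend on the choice) gives $\CE{\Indicator{i=\Action\tau}\Reward{\Action\tau}\tau/\SelectionP i\tau}{\History\tau}=\Context i\tau^{T}\beta$. Hence $\{\sum_{i}\Context i\tau\DRError i\tau\}_{\tau}$ is a vector-valued martingale difference sequence adapted to $\{\History{\tau+1}\}$.

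Next I would bound the noise term. Splitting on whether $i=\Action\tau$: for $i\neq\Action\tau$ the indicator vanishes and $\DRError i\tau=\Context i\tau^{T}(\Betatrunc\tau-\beta)$ is bounded by $b$; for $i=\Action\tau$ one computes $\DRError i\tau=(1-1/\SelectionP i\tau)\Context i\tau^{T}(\Betatrunc\tau-\beta)+\Error{\Action\tau}\tau/\SelectionP i\tau$, where $1/\SelectionP{\Action\tau}\tau<N+1$ because $\SelectionP{\Action\tau}\tau>\gamma\ge 1/(N+1)$. Thus each increment $\sum_{i}\Context i\tau\DRError i\tau$ is a bounded-plus-sub-Gaussian vector with conditional sub-Gaussian parameter of order $N(b+\sigma)$. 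Applying Lemma \ref{lem:eta_x_bound} (the dimension-free concentration for sub-Gaussian vector martingales, used in place of the self-normalized bound of \citet{abbasi2011improved}) yields, with high probability and free of $d$, $\norm{\sum_{\tau,i}\Context i\tau\DRError i\tau}_{2}\le C'_{b,\sigma}\,N\sqrt{t\log(12t^{2}/\delta)}$.

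To lower-bound $\Mineigen{V_{t}}$, note that by Assumption 4, $\Mineigen{\Expectation[\sum_{i}\Context i\tau\Context i\tau^{T}]}\ge N\phi^{2}$, and by Assumption 3 the summands over $\tau$ are independent matrices with $\norm{\sum_{i}\Context i\tau\Context i\tau^{T}}\le N$. Lemma \ref{lem:min_eigenvalue_concentration} (the dimension-free matrix-martingale concentration replacing Proposition \ref{prop:tropp_matrix_concentration}) then gives $\Mineigen{W_{t}}\ge N\phi^{2}t-O(N\sqrt{t\log(12t^{2}/\delta)})$ with high probability, and the regularization $\lambda_{t}=4\sqrt{2}N\sqrt{t\log(12t^{2}/\delta)}$ is calibrated exactly to this fluctuation so that $\Mineigen{V_{t}}\ge\tfrac12 N\phi^{2}t$. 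Combining, the noise term is at most $C'_{b,\sigma}N\sqrt{t\log(12t^{2}/\delta)}/(\tfrac12 N\phi^{2}t)$ and the regularization term is $\lambda_{t}/\Mineigen{V_{t}}$ of the same order; the factors of $N$ cancel and both collapse to $C_{b,\sigma}\phi^{-2}t^{-1/2}\sqrt{\log(12t^{2}/\delta)}$, which is (\ref{eq:estimation_error_bound}). A union bound over the two concentration events and over $t=1,\ldots,T$, with a $\delta/(ct^{2})$ allocation (the source of the $12t^{2}$ term, since $\sum_{t}t^{-2}<\infty$), delivers the uniform-in-$t$ statement.

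The step I expect to be the main obstacle is obtaining the correct $N$-scaling so that it cancels: the inverse-probability weight inflates $\DRError{\Action\tau}\tau$ by a factor up to $N+1$, while $\Mineigen{V_{t}}$ grows like $N\phi^{2}t$, and this precise cancellation — together with the two genuinely dimension-free concentration lemmas — is what removes both $N$ and $d$ from the final bound. The delicate supporting points are verifying conditional unbiasedness through Assumption 2's independence of the error from the choice, and ensuring the calibration of $\lambda_{t}$ to the matrix fluctuation keeps $V_{t}$ well-conditioned uniformly over the rounds used in the union bound.
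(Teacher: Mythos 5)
Your overall architecture is exactly the paper's: the same split of $\Estimator t-\beta$ into the regularization term $\lambda_{t}V_{t}^{-1}\beta$ and the noise term $V_{t}^{-1}\sum_{\tau,i}\Context i\tau\DRError i\tau$, the same use of Lemma \ref{lem:min_eigenvalue_concentration} to get $\Mineigen{V_{t}}\ge N\phi^{2}t$ with $\lambda_{t}$ calibrated to the matrix fluctuation, the same $\delta/t^{2}$ allocation, and the same cancellation of $N$ between the inverse-probability weight (via $\gamma^{-1}\le N+1$) and $\Mineigen{V_{t}}$.

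The one step that fails as written is applying Lemma \ref{lem:eta_x_bound} to the whole increment $\sum_{i}\Context i\tau\DRError i\tau$. That lemma requires the increment to have the form $\eta(\tau)X(\tau)$ with a \emph{scalar} sub-Gaussian $\eta(\tau)$ and a \emph{predictable} vector $X(\tau)$ with $\norm{X(\tau)}_{2}\le1$; your increment is a sum of $N$ vectors whose coefficients depend on the realized action, so its direction is not $\Filtration{\tau-1}$-measurable and it is not scalar-times-predictable-vector. The paper repairs this by splitting $\DRError i\tau\Context i\tau=D_{i}(\tau)+E_{i}(\tau)$ with $D_{i}(\tau)=\bigl(1-\Indicator{i=\Action{\tau}}/\SelectionP i\tau\bigr)\Context i\tau\Context i\tau^{T}(\Betatrunc{\tau}-\beta)$ and $E_{i}(\tau)=\bigl(\Indicator{i=\Action{\tau}}/\SelectionP i\tau\bigr)\Error i\tau\Context i\tau$: only $\sum_{i}E_{i}(\tau)=\SelectionP{\Action{\tau}}{\tau}^{-1}\Error{\Action{\tau}}{\tau}\Context{\Action{\tau}}{\tau}$ fits the hypotheses of Lemma \ref{lem:eta_x_bound} (with sub-Gaussian parameter $\sigma/\gamma$ and $X(\tau)=\Context{\Action{\tau}}{\tau}$, predictable with respect to $\Filtration{\tau-1}:=\History{\tau}\cup\{\Action{\tau}\}$), while the bounded part $D_{\tau}=\sum_{i}D_{i}(\tau)$ is handled by the same underlying device directly (the Hilbert-space reduction to an $\Real^{2}$ martingale, Lemma \ref{lem:dim_reduction}, followed by Azuma--Hoeffding). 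A second point you gloss over: the increment bound $\bigl(N+\SelectionP{\Action{\tau}}{\tau}^{-1}\bigr)b$ is only controlled on the event $\{\SelectionP{\Action{\tau}}{\tau}>\gamma\}$, so Azuma cannot be applied naively; the paper invokes the coupling construction of Lemma \ref{lem:chung_lemma} to replace the martingale by one with deterministic increment bounds agreeing with it on that event. Both repairs are technical rather than conceptual, and the rest of your argument, including the verification of conditional unbiasedness and the final combination of rates, matches the paper's proof.
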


In bandit literature, estimation error bounds typically include a term involving $d$ which emerges from using the following two Lemmas:
(i) the self-normalized bound for vector-valued martingales \citep[Theorem 1]{abbasi2011improved},
and (ii) the concentration inequality for the covariance matrix \citep[Corollary 5.2]{tropp2015introduction}. 
Instead of using (i) and (ii), we develop the two dimension-free
bounds in Lemmas \ref{lem:eta_x_bound} and  \ref{lem:min_eigenvalue_concentration}, to replace (i) and (ii), respectively. 
With the two Lemmas, we eliminate the dependence on $d$ and express the estimation error bound with $\phi^{2}$ alone.

\begin{lem}
\label{lem:eta_x_bound} (A dimension-free bound for vector-valued martingales.) 
Let $\{\Filtration{\tau}\}_{\tau=1}^{t}$ be a filtration and $\{\eta(\tau)\}_{\tau=1}^{t}$ be a real-valued stochastic process such that $\eta(\tau)$ is $\Filtration{\tau}$-measurable. 
Let $\left\{ X(\tau)\right\} _{\tau=1}^{t}$ be an $\Real^{d}$-valued stochastic process where $X(\tau)$ is $\Filtration{\tau-1}$-measurable and $\norm{X(\tau)}_{2}\le1$. 
Assume that $\{\eta(\tau)\}_{\tau=1}^{t} $ are $\sigma$-sub-Gaussian as in Assumption 2.
Then with probability at least $1-\delta/t^2$, there exists an absolute constant $C>0$ such that
\begin{equation}
\norm{\sum_{\tau=1}^{t}\eta(\tau)X(\tau)}_{2}\le C \sigma\sqrt{t}\sqrt{\log\frac{4t^{2}}{\delta}}.
\label{eq:etaX_bound}
\end{equation}
\end{lem}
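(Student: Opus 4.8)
The plan is to control the squared norm $\norm{S_t}_2^2$, where $S_\tau := \sum_{s=1}^{\tau}\eta(s)X(s)$, and to exploit that the ambient dimension $d$ never enters because the only quantities that carry information across rounds are scalars. First I would expand the recursion $S_\tau = S_{\tau-1} + \eta(\tau)X(\tau)$ to obtain the additive decomposition
\[
\norm{S_t}_2^2 = \sum_{\tau=1}^{t}\eta(\tau)^2\norm{X(\tau)}_2^2 + 2\sum_{\tau=1}^{t}\eta(\tau)\,g_\tau, \qquad g_\tau := X(\tau)^T S_{\tau-1},
\]
and record two structural facts. Since $\norm{X(\tau)}_2\le 1$, the first sum is at most $D_t:=\sum_{\tau=1}^t\eta(\tau)^2$; since $X(\tau)$ and $S_{\tau-1}$ are $\Filtration{\tau-1}$-measurable, the scalar $g_\tau$ is predictable with $|g_\tau|\le\norm{S_{\tau-1}}_2$, so $M_t:=\sum_\tau\eta(\tau)g_\tau$ is a martingale. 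The crucial observation is that $g_\tau$ is a single real number, so controlling $M_t$ uses only $\norm{X(\tau)}_2\le 1$ and never a covering of the unit sphere in $\Real^d$; this is precisely what removes the $\sqrt d$ that a self-normalized or $\varepsilon$-net argument would incur.

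Next I would bound the two terms separately, each on an event of probability $1-\delta/(2t^2)$. For $D_t$, the conditional $\sigma$-sub-Gaussianity of Assumption~2 makes $\eta(\tau)^2$ conditionally sub-exponential with $\CE{\eta(\tau)^2}{\Filtration{\tau-1}}\le\sigma^2$, so a Bernstein-type inequality for sub-exponential martingale differences gives $D_t\le a$ with $a = O\!\left(\sigma^2 t\log(t^2/\delta)\right)$. For $M_t$, I would introduce the exponential process $N_\tau := \exp\!\left(\lambda M_\tau - \tfrac{1}{2}\lambda^2\sigma^2\sum_{s\le\tau}g_s^2\right)$, which is a nonnegative supermartingale by the conditional sub-Gaussian moment-generating-function bound together with the predictability of $g_\tau$.

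The heart of the argument, and the \emph{main obstacle}, is the circular dependence of the quadratic variation $\sum_\tau g_\tau^2\le\sum_\tau\norm{S_{\tau-1}}_2^2$ on the very norms I am trying to bound. I would resolve this with a stopping-time (self-bounding) device: fix a threshold $z$, set $\tau^\ast:=\inf\{s:\norm{S_s}_2^2>z\}$, and apply optional stopping to the bounded stopping time $\tau^\ast\wedge t$ with the \emph{fixed} choice $\lambda = 1/(2\sigma^2 t)$. On $\{\tau^\ast\le t\}$ minimality of $\tau^\ast$ forces $\sum_{\tau\le\tau^\ast}g_\tau^2\le\tau^\ast z\le tz$, so on the event $N_{\tau^\ast\wedge t}<1/\delta''$ one gets $2M_{\tau^\ast}\le \lambda\sigma^2 tz + \tfrac{2}{\lambda}\log(1/\delta'') = \tfrac{z}{2} + 4\sigma^2 t\log(1/\delta'')$. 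Combining with $D_{\tau^\ast}\le a$ yields $z<\norm{S_{\tau^\ast}}_2^2\le a+\tfrac{z}{2}+4\sigma^2 t\log(1/\delta'')$, a contradiction once $z=2a+8\sigma^2 t\log(1/\delta'')$; hence $\Probability(\norm{S_t}_2^2>z)\le\delta''+\Probability(D_t>a)$. Taking $\delta''=\delta/(2t^2)$ makes this $z$ of order $\sigma^2 t\log(t^2/\delta)$, and taking square roots while absorbing absolute constants into $C$ delivers $\norm{S_t}_2\le C\sigma\sqrt{t}\sqrt{\log(4t^2/\delta)}$ with probability at least $1-\delta/t^2$. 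The points I would check most carefully are the supermartingale property of $N_\tau$ under the stated measurability (rather than independence), the validity of optional stopping for $\tau^\ast\wedge t$, and that the fixed, non-optimized $\lambda$ still produces the correct $\sqrt{t\log}$ rate.
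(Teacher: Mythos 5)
Your proposal is correct, but it takes a genuinely different route from the paper. The paper's proof treats $S_u=\sum_{\tau\le u}\eta(\tau)X(\tau)$ as a Hilbert-space-valued martingale and invokes Lemma 2.3 of \citet{lee2016} (a Kallenberg--Sztencel-type reduction) to produce an $\Real^2$-valued martingale with the same norm and the same increment norms; since $\norm{\eta(u)X(u)}_2\le\abs{\eta(u)}$, each of the two scalar coordinates has conditionally $C\sigma$-sub-Gaussian increments (via the domination lemma, Lemma F.5), and Azuma--Hoeffding applied twice gives the bound. You instead expand $\norm{S_t}_2^2$ into the predictable-quadratic-variation term $\sum_\tau\eta(\tau)^2\norm{X(\tau)}_2^2$ plus the scalar martingale $2\sum_\tau\eta(\tau)X(\tau)^TS_{\tau-1}$, and you break the circular dependence of the cross term's variation on $\norm{S_{\tau-1}}_2$ with a stopping-time self-bounding argument applied to an exponential supermartingale. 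Both arguments are dimension-free for the same underlying reason --- only the scalars $\norm{X(\tau)}_2\le1$ and $X(\tau)^TS_{\tau-1}$ enter, never a union bound over coordinates or a net over the sphere --- and both yield the same $\sigma\sqrt{t\log(t^2/\delta)}$ rate. The paper's route is shorter once the dimension-reduction lemma is granted and, importantly, reuses the identical template for the matrix-martingale bound (Lemma \ref{lem:min_eigenvalue_concentration}), whereas yours is elementary and self-contained at the cost of the extra bookkeeping around the stopping time and the separate sub-exponential control of $\sum_\tau\eta(\tau)^2$. The three points you flag at the end are indeed the ones to verify, and all go through: the supermartingale property only needs $g_\tau$ to be $\Filtration{\tau-1}$-measurable together with the conditional MGF bound of Assumption 2; optional stopping is valid for the bounded stopping time $\tau^\ast\wedge t$; and the fixed $\lambda=1/(2\sigma^2t)$ suffices because the target deviation already scales like $\sigma\sqrt{t}$.
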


Compared to Theorem 1 of \citet{abbasi2011improved},
our bound (\ref{eq:etaX_bound}) does not involve $d$,  yielding a dimension-free bound for vector-valued martingales.
However, the bound (\ref{eq:etaX_bound}) has $\sqrt{t}$ term which comes from using $\norm{\cdot}_{2}$ instead of the self-normalized norm $\norm{\cdot}_{V_{t}^{-1}}$. 

To complete the proof of Theorem \ref{thm:Estimation_error}, we need the following condition,
\begin{equation}
\Mineigen{V_{t}}\ge ct,
\label{eq:min_eigen_problem}
\end{equation}
for some constant $c>0$.
\citet{li2017provably} points out that satisfying (\ref{eq:min_eigen_problem}) is challenging.
To overcome this difficulty, \citet{amani2019linear} and \citet{bastani2020online} use an assumption on the covariance matrix of contexts and a concentration inequality for matrix to prove (\ref{eq:min_eigen_problem}), described as follows.
\begin{prop} 
\label{prop:tropp_matrix_concentration}
\citep[Theorem 5.1.1]{tropp2015introduction}
Let $P(1),$ $\ldots,P(t)\in\Real^{d\times d}$ be the symmetric matrices such that $\lambda_{\min}(P(\tau)) \ge 0$, $\lambda_{\max}(P(\tau))\le L$ and $\lambda_{\min}(\Expectation[P(\tau)])\ge\phi^{2}$, for all $\tau=1,2,\ldots,t$. 
Then,
\begin{equation}
\Probability\left(\Mineigen{\sum_{\tau=1}^{t}P(\tau)}\le\frac{t\phi^{2}}{2}\right)\le d\exp\left(-\frac{t\phi^{2}}{8L}\right).\label{eq:tropp_matrix_concentration}
\end{equation}
\end{prop}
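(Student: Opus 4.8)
The plan is to prove this via the matrix Laplace transform (matrix Chernoff) method, since the statement is precisely the lower-tail matrix Chernoff inequality specialized to deviation parameter $1/2$. Write $Y:=\sum_{\tau=1}^{t}P(\tau)$ and fix $\theta<0$. Because $x\mapsto e^{\theta x}$ is decreasing for $\theta<0$, we have $\lambda_{\max}(e^{\theta Y})=e^{\theta\lambda_{\min}(Y)}$, so the event $\{\lambda_{\min}(Y)\le s\}$ coincides with $\{\lambda_{\max}(e^{\theta Y})\ge e^{\theta s}\}$. Applying Markov's inequality together with $\lambda_{\max}(A)\le\mathrm{tr}(A)$ for the positive semidefinite matrix $A=e^{\theta Y}$ gives
$$\Probability\left(\Mineigen{Y}\le s\right)\le e^{-\theta s}\,\Expectation\left[\mathrm{tr}\,e^{\theta Y}\right].$$
It then remains to control the trace moment generating function on the right.

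The key step, which I expect to be the main obstacle, is the subadditivity of the matrix cumulant generating function: using independence of $P(1),\dots,P(t)$, Lieb's concavity theorem (equivalently, Tropp's master tail bound) yields
$$\Expectation\left[\mathrm{tr}\,e^{\theta Y}\right]\le\mathrm{tr}\exp\left(\sum_{\tau=1}^{t}\log\Expectation\left[e^{\theta P(\tau)}\right]\right).$$
This is the nontrivial analytic ingredient; I would invoke it as a black box (it is exactly the content of the cited reference) rather than reprove Lieb's theorem from scratch.

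Next I would bound each matrix MGF. Since $\lambda_{\min}(P(\tau))\ge0$ and $\lambda_{\max}(P(\tau))\le L$ we have $0\preceq P(\tau)\preceq LI$, and convexity of $x\mapsto e^{\theta x}$ on $[0,L]$ gives the scalar chord bound $e^{\theta x}\le 1+\frac{e^{\theta L}-1}{L}x$ there. The transfer rule for matrix functions lifts this to $e^{\theta P(\tau)}\preceq I+g(\theta)P(\tau)$ with $g(\theta):=\frac{e^{\theta L}-1}{L}$, hence $\Expectation[e^{\theta P(\tau)}]\preceq I+g(\theta)\Expectation[P(\tau)]$. Using operator monotonicity of $\log$ together with $\log(I+A)\preceq A$, and then the fact that $g(\theta)<0$ combined with $\lambda_{\min}(\Expectation[P(\tau)])\ge\phi^{2}$ (multiplication by the negative scalar $g(\theta)$ reverses the semidefinite order), I obtain
$$\sum_{\tau=1}^{t}\log\Expectation\left[e^{\theta P(\tau)}\right]\preceq g(\theta)\sum_{\tau=1}^{t}\Expectation[P(\tau)]\preceq g(\theta)\,t\phi^{2}I.$$

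Finally, monotonicity of $\mathrm{tr}\exp$ under the semidefinite order gives $\mathrm{tr}\exp(g(\theta)t\phi^{2}I)=d\,e^{g(\theta)t\phi^{2}}$, so
$$\Probability\left(\Mineigen{Y}\le s\right)\le d\exp\left(-\theta s+\frac{e^{\theta L}-1}{L}t\phi^{2}\right).$$
Substituting $s=t\phi^{2}/2$ and minimizing over $\theta<0$, the stationarity condition $e^{\theta L}=1/2$ gives the optimizer $\theta^{*}=-(\log 2)/L$, at which the exponent equals $\frac{t\phi^{2}}{2L}(\log 2-1)$. Since $\log 2-1\le-\tfrac14$, this exponent is at most $-t\phi^{2}/(8L)$, which yields the claimed bound $d\exp\left(-t\phi^{2}/(8L)\right)$ and completes the proof. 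The only genuinely hard part is the Lieb/subadditivity step; every other step is a routine manipulation of operator-monotone functions and a scalar optimization.
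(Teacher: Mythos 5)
The paper states this proposition without proof, importing it verbatim from Tropp (2015, Theorem 5.1.1), and your argument is precisely the standard matrix-Chernoff proof given there: the matrix Laplace transform, the Lieb/master-bound subadditivity step, the chord bound $e^{\theta P(\tau)}\preceq I+\frac{e^{\theta L}-1}{L}P(\tau)$, and the optimization at $e^{\theta L}=1/2$ with $\tfrac12(\log 2-1)\le -\tfrac18$ --- every step checks out, including the correct handling of the order reversal under the negative scalar $g(\theta)$. The one hypothesis worth stating explicitly is the independence of the random matrices $P(1),\dots,P(t)$, which the proposition leaves implicit but which your subadditivity step genuinely requires.
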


To prove (\ref{eq:min_eigen_problem}) using (\ref{eq:tropp_matrix_concentration}) with probability at least $1-\delta$, for $\delta\in(0,1)$, it requires $t\ge\frac{8L}{\phi^{2}}\log\frac{d}{\delta}$.
Thus, one can use (\ref{eq:tropp_matrix_concentration}) only after $O(\phi^{-2}\log d)$ rounds. 
Due to this requirement, \citet{bastani2020online} implements the forced sampling techniques for $O\left(N^{2}d^{4}(\log d)^{2}\right)$ rounds, and \citet{amani2019linear} forces to select arms randomly for $O\left(\phi^{-2}\log d\right)$ rounds. 
These mandatory exploration phase empirically prevents the algorithm choosing the optimal arm.
An alternative form of matrix Chernoff inequality for adapted sequences is  Theorem 3 in \citet{tropp2011user}, 
but the bound also has a multiplicative factor of $d$. 
Instead of applying Proposition \ref{prop:tropp_matrix_concentration} to prove (\ref{eq:min_eigen_problem}), we utilize a novel dimension-free concentration inequality stated in the following Lemma.

\begin{lem}
\label{lem:min_eigenvalue_concentration} 
(A dimension-free concentration bound for symmetric bounded matrices.) 
Let $\norm A_{F}$ be a Frobenious norm of a matrix $A$. 
Let $\left\{ P(\tau)\right\}_{\tau=1}^{t} \in \Real^{d\times d}$ be the symmetric matrices adapted to a filtration $\{\Filtration{\tau}\}_{\tau=1}^{t}$. 
For each $\tau=1,\ldots,t$, suppose that $\norm{P(\tau)}_{F}\le c$, for some $c>0$ and $\Mineigen{\CE{P(\tau)}{\Filtration{\tau-1}}}\ge\phi^{2}>0$, almost surely. 
For given any $\delta\in(0,1)$, set $\lambda_{t}\ge4\sqrt{2}c\sqrt{t}\sqrt{\log\frac{4t^2}{\delta}}$.
Then with probability at least $1-\delta/t^2$, 
\begin{equation}
\Mineigen{\sum_{\tau=1}^{t}P(\tau)+\lambda_{t}I}\ge\phi^{2}t.
\label{eq:min_eigen_lower_bound}
\end{equation}
\end{lem}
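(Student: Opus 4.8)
The plan is to split $\sum_{\tau=1}^{t} P(\tau)$ into its predictable part and a matrix martingale, bound each with Weyl's inequality, and control the martingale part in \emph{Frobenius} norm so that no factor of $d$ ever enters. Define $D(\tau) := P(\tau) - \CE{P(\tau)}{\Filtration{\tau-1}}$ and $S_t := \sum_{\tau=1}^{t} \CE{P(\tau)}{\Filtration{\tau-1}}$, so that $\sum_{\tau=1}^{t} P(\tau) = S_t + M_t$ with $M_t := \sum_{\tau=1}^{t} D(\tau)$. Since each $P(\tau)$ is symmetric, so is each conditional expectation, and by subadditivity of the minimum eigenvalue over symmetric matrices (Weyl's inequality, $\lambda_{\min}(A+B)\ge\lambda_{\min}(A)+\lambda_{\min}(B)$) together with the hypothesis $\lambda_{\min}(\CE{P(\tau)}{\Filtration{\tau-1}})\ge\phi^2$, I would first establish the deterministic bound $\Mineigen{S_t}\ge \phi^2 t$.

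Applying Weyl once more gives $\Mineigen{S_t+M_t+\lambda_t I}\ge \Mineigen{S_t}+\Mineigen{M_t}+\lambda_t \ge \phi^2 t - \|M_t\|_{\mathrm{op}} + \lambda_t$, where I used $\Mineigen{M_t}\ge -\|M_t\|_{\mathrm{op}}$. Because the operator norm is dominated by the Frobenius norm, it therefore suffices to show that $\|M_t\|_{F}\le \lambda_t$ holds with probability at least $1-\delta/t^2$; the target inequality $\Mineigen{\sum_{\tau=1}^{t} P(\tau)+\lambda_t I}\ge \phi^2 t$ then follows at once on that event.

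The heart of the argument, and the step I expect to be the main obstacle, is the dimension-free tail bound on $\|M_t\|_{F}$. The key observation is that $\{D(\tau)\}$ is a martingale difference sequence in the Hilbert space $(\Real^{d\times d},\langle\cdot,\cdot\rangle_{F})$ with bounded increments: by Jensen's inequality $\|\CE{P(\tau)}{\Filtration{\tau-1}}\|_{F}\le c$, hence $\|D(\tau)\|_{F}\le 2c$ almost surely. I would then invoke a Hoeffding--Azuma-type concentration inequality for martingales valued in a separable Hilbert space (Pinelis' inequality), which yields $\Probability(\|M_t\|_{F}\ge r)\le 2\exp(-r^2/(8c^2 t))$ with no dependence on the ambient dimension $d$. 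This is precisely where the matrix-Chernoff route of Proposition \ref{prop:tropp_matrix_concentration} loses a factor $d$ through the trace of the matrix exponential, and where treating a symmetric matrix as a vector in the Frobenius geometry pays off. Setting the right-hand side equal to $\delta/t^2$ gives the threshold $r = 2\sqrt{2}\,c\sqrt{t}\sqrt{\log(2t^2/\delta)}$.

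Finally I would verify that the prescribed $\lambda_t = 4\sqrt{2}\,c\sqrt{t}\sqrt{\log(4t^2/\delta)}$ exceeds this $r$, which is immediate since both the leading constant and the logarithmic factor in $\lambda_t$ dominate their counterparts in $r$. Combining this with the Weyl bound from the second step closes the proof on the event $\{\|M_t\|_{F}\le \lambda_t\}$, which occurs with probability at least $1-\delta/t^2$. The only genuine subtlety to check is that Pinelis' inequality applies to the adapted, conditionally-centered differences $D(\tau)$ with the stated almost-sure bound on $\|D(\tau)\|_F$; everything else is Weyl's inequality and a routine calibration of constants.
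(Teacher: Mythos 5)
Your proposal is correct, and its skeleton coincides with the paper's: both center each $P(\tau)$ at its conditional expectation, use superadditivity of $\Mineigen{\cdot}$ over symmetric matrices to extract the $\phi^{2}t$ term, pass from $\Mineigen{M_t}\ge-\norm{M_t}_{\mathrm{op}}\ge-\norm{M_t}_{F}$ to a Frobenius-norm tail bound, and exploit the Hilbert-space structure of $(\Real^{d\times d},\norm{\cdot}_{F})$ to keep the bound dimension-free. The one genuine divergence is the concentration tool for the matrix martingale: the paper invokes Lemma 2.3 of Lee et al.\ to replace the $\Real^{d\times d}$-valued martingale by an $\Real^{2}$-valued one with matching norms and increment norms, then applies Azuma--Hoeffding to each of the two coordinates and a union bound, arriving at $4\exp(-\lambda_t^{2}/(32c^{2}t))$; you instead apply Pinelis' inequality for martingales in a $(2,1)$-smooth (Hilbert) space directly to $M_t$, obtaining $2\exp(-r^{2}/(8c^{2}t))$ in one step. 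Both are legitimate and dimension-free; yours is slightly cleaner and yields marginally better constants (prefactor $2$ versus $4$, hence a threshold $2\sqrt{2}c\sqrt{t}\sqrt{\log(2t^{2}/\delta)}$ that is strictly below the prescribed $\lambda_t$), while the paper's reduction-to-$\Real^{2}$ device has the advantage of being reused verbatim in Lemma \ref{lem:eta_x_bound} and of requiring only the elementary Azuma--Hoeffding inequality. Your supporting observations — Jensen's inequality giving $\norm{\CE{P(\tau)}{\Filtration{\tau-1}}}_{F}\le c$ and hence increments bounded by $2c$, and the final calibration of $\lambda_t$ against the Pinelis threshold — are all sound.
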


Lemma \ref{lem:min_eigenvalue_concentration} shows that setting $\lambda_{t}$ with $\sqrt{t}$ rate guarantees (\ref{eq:min_eigen_problem}) for all $t\ge1$.
We incorporate  $\lambda_t$ stated in Lemma \ref{lem:min_eigenvalue_concentration} in our estimator (\ref{eq:estimator}), and show in Section \ref{sec:simulation} that the DR estimator regularized with $\lambda_{t}$ outperforms estimators from other contextual bandit algorithms in early rounds.

We obtain the bounds free of $d$ in Lemmas \ref{lem:eta_x_bound} and \ref{lem:min_eigenvalue_concentration}  mainly by applying  Lemma 2.3 in \citet{lee2016} which states that any Hilbert space martingale can be reduced to $\Real^{2}$.
Thus, we can project the vector-valued (or the matrix) martingales to $\Real^{2}$-martingales, and reduce the dimension from $d$ (or $d^{2}$) to $2$. 
Then we apply Azuma-Hoeffding inequality just twice, instead of $d$ times. 
In this way, Lemma \ref{lem:min_eigenvalue_concentration} provides a novel dimension-free bound for the covariance matrix.

Lemmas \ref{lem:eta_x_bound} and \ref{lem:min_eigenvalue_concentration} can be applied to other works to improve the existing bounds.  
For example, using these Lemmas, the estimation error bound of \citet{bastani2020online} can be improved by a factor of $\log d$.
Proposition EC.1 of \citet{bastani2020online} provides an estimation error bound for the ordinary least square estimator by using Proposition \ref{prop:tropp_matrix_concentration} and bounding all values of $d$ coordinates.
By applying Lemmas \ref{lem:eta_x_bound} and \ref{lem:min_eigenvalue_concentration}, one does not have to deal with each coordinate and eliminate dependence on $d$.

Using Lemma \ref{lem:min_eigenvalue_concentration}, we can bound the second term of the regret in (\ref{eq:decomposition}) as follows.
For $j=1,\ldots,N$
\begin{equation}
\norm{\Context{j}{t}}_{V_{t-1}^{-1}} 
\le\norm{\Context jt}_{2}\sqrt{\norm{V_{t-1}^{-1}}_{2} }
\le\Mineigen{V_{t-1}}^{-1/2} 
\le\frac{1}{\sqrt{\phi^{2}N(t-1)}}.
\label{eq:second_term}
\end{equation}
Finally, we are ready to bound $\Regret{t}$ in (\ref{eq:decomposition}).

\begin{lem}
\label{lem:regret_t_bound}
Suppose the assumptions in Theorem \ref{thm:regret_bound} hold.
Then with probability at least $1-2\delta$,
\begin{equation}
\Regret{t} \le \frac{2C_{b,\sigma}}{\phi^{2}\sqrt{t-1}}\sqrt{\log\frac{12t^{2}}{\delta}} + \frac{\sqrt{2}}{\phi \sqrt{N(t-1)}},
\label{eq:super_saturated_regret}
\end{equation}
for all $t=2,\ldots,T$.
\end{lem}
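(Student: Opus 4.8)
The plan is to start from the additive regret decomposition in~\eqref{eq:decomposition} and bound its two terms separately using the results already in hand, then glue the pieces together with a union bound. Since $\Regret{t}=\Diff{\Action{t}}{t}$ holds by definition, the whole argument rests on the event $\{\Action{t}\in N_t\}$, on which~\eqref{eq:super_unsaturated_set} yields
\[
\Regret{t}\le 2\norm{\Estimator{t-1}-\beta}_2+\sqrt{\norm{\Context{\Optimalarm{t}}{t}}_{V_{t-1}^{-1}}^2+\norm{\Context{\Action{t}}{t}}_{V_{t-1}^{-1}}^2}.
\]
First I would record that, thanks to Lemma~\ref{lem:super_unsaturated_arms} together with the resampling step (the inclusion $\{\Action{t}\in\{i:\Pisampled{i}{t}>\gamma\}\}\subset\{\Action{t}\in N_t\}$ established just before~\eqref{eq:decomposition}), the event $\{\Action{t}\in N_t\}$ can be made to hold simultaneously for all $t$ with probability at least $1-\delta$ for a suitable resampling budget $M_t$.

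For the leading term I would invoke Theorem~\ref{thm:Estimation_error} at index $t-1$: on an event of probability at least $1-\delta$ we have, for every $t$,
\[
2\norm{\Estimator{t-1}-\beta}_2\le\frac{2C_{b,\sigma}}{\phi^2\sqrt{t-1}}\sqrt{\log\frac{12(t-1)^2}{\delta}}\le\frac{2C_{b,\sigma}}{\phi^2\sqrt{t-1}}\sqrt{\log\frac{12t^2}{\delta}},
\]
where the last step is mere monotonicity of the logarithm. This matches the first summand of~\eqref{eq:super_saturated_regret} verbatim; note that the hypotheses of Theorem~\ref{thm:Estimation_error}, in particular $\SelectionP{\Action{t}}{t}>\gamma$, are guaranteed by the resampling mechanism, so no extra assumption is required.

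For the second term I would use that both $\Context{\Optimalarm{t}}{t}$ and $\Context{\Action{t}}{t}$ are among the contexts $\{\Context{j}{t}\}_{j=1}^N$, so the chain of inequalities in~\eqref{eq:second_term}---driven by $\Mineigen{V_{t-1}}\ge\phi^2 N(t-1)$, which is exactly the conclusion of Lemma~\ref{lem:min_eigenvalue_concentration} applied to $P(\tau)=\sum_{i=1}^N\Context{i}{\tau}\Context{i}{\tau}^T$ (whose conditional expectation has minimum eigenvalue at least $N\phi^2$ by Assumption~4)---bounds each of the two squared norms by $1/(\phi^2 N(t-1))$. Summing them and taking the square root gives $\sqrt{2/(\phi^2 N(t-1))}=\sqrt{2}/(\phi\sqrt{N(t-1)})$, the second summand of~\eqref{eq:super_saturated_regret}. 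Since the minimum-eigenvalue event underlying~\eqref{eq:second_term} is the same one already used inside Theorem~\ref{thm:Estimation_error}, it carries no additional probability cost.

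Finally I would combine the pieces by a union bound: the super-unsaturated selection event ($1-\delta$) and the estimation-error event ($1-\delta$, which also supplies the eigenvalue bound for the second term) together hold with probability at least $1-2\delta$, and on their intersection the two displayed bounds add up to~\eqref{eq:super_saturated_regret} for all $t=2,\ldots,T$ at once. The step I expect to be most delicate is the probability bookkeeping for $\{\Action{t}\in N_t\}$ holding for \emph{every} $t$ simultaneously: one must argue that the resampling budget $M_t$ is large enough that failing to find an arm with $\Pisampled{i}{t}>\gamma$ is negligible across the whole horizon, and that the resampling does not invalidate the estimator conditions required by Theorem~\ref{thm:Estimation_error}. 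Everything else is a direct substitution of the previously proved bounds.
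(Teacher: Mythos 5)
Your proposal is correct and follows essentially the same route as the paper: apply the decomposition \eqref{eq:decomposition} on the event $\{\Action{t}\in N_{t}\}$ secured by Lemma \ref{lem:super_unsaturated_arms} and the resampling budget, bound the first term by Theorem \ref{thm:Estimation_error} and the second by \eqref{eq:second_term}, and observe that the eigenvalue event of Lemma \ref{lem:min_eigenvalue_concentration} is already contained in the event of Theorem \ref{thm:Estimation_error}, so the union bound costs only $2\delta$ rather than $3\delta$. Your flagged ``delicate step'' (uniform control of the resampling failure over all rounds) is exactly what the paper handles by choosing $M_{t}$ so that the per-round failure probability is $\delta/t^{2}$, summing to at most $\delta$.
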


\begin{proof}
Since $\Action{t}$ is shown to be super-unsaturated with high probability, we can use \eqref{eq:decomposition} to have $\Regret t\le 2\| \Estimator{t-1}-\beta\|_{2} + \sqrt{ \|\Context{\Optimalarm{t}}{t}\|_{V_{t-1}^{-1}}^2+\|\Context{\Action{t}}{t}\|_{V_{t-1}^{-1}}^2},
$ for all $t=2,\ldots,T$.
We see that the first term is bounded by Theorem \ref{thm:Estimation_error}, and the second term by (\ref{eq:second_term}).
Note that to prove Theorem 1, Lemma \ref{lem:min_eigenvalue_concentration} is invoked, and the event (\ref{eq:min_eigen_lower_bound}) of Lemma \ref{lem:min_eigenvalue_concentration} is a subset of that in (\ref{eq:estimation_error_bound}).   
Therefore (\ref{eq:super_saturated_regret}) holds with probability at least $1-2\delta$ instead of $1-3\delta$.
Details are given in supplementary materials.
\end{proof}

\begin{figure*}[t]
\begin{center}
\includegraphics[width=0.32\linewidth]{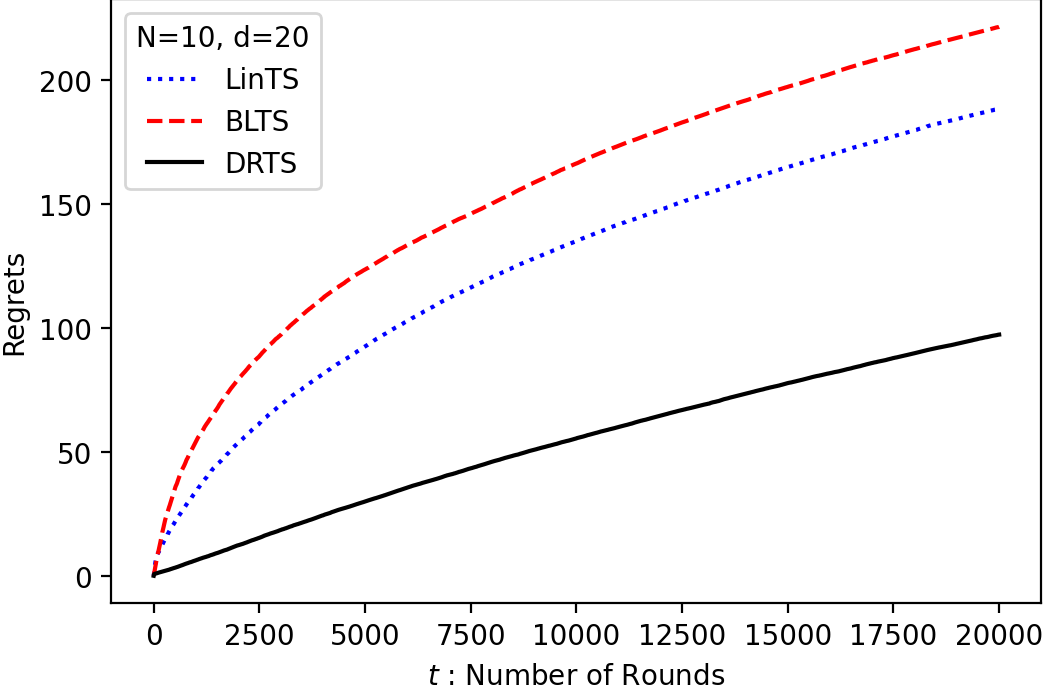}
\includegraphics[width=0.32\linewidth]{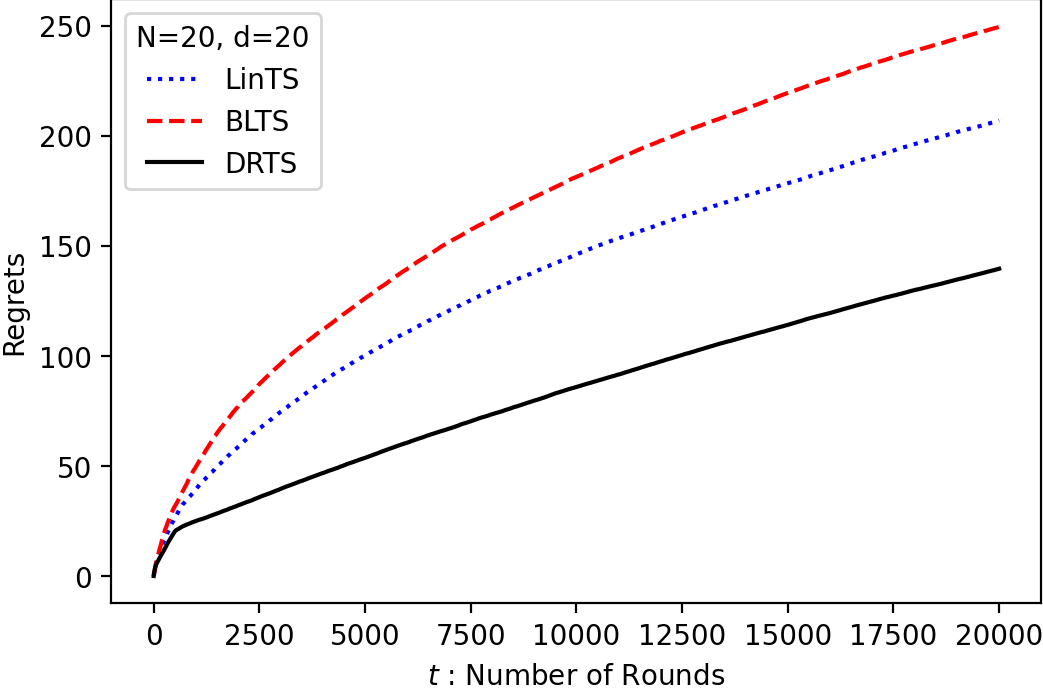}
\includegraphics[width=0.32\linewidth]{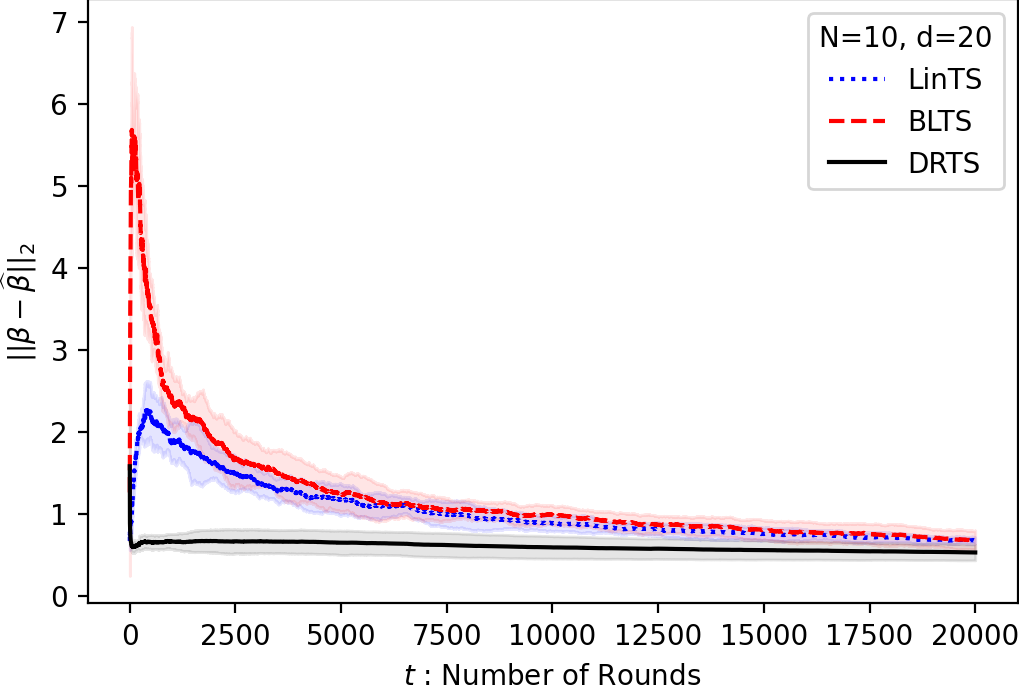}
\\
\includegraphics[width=0.32\linewidth]{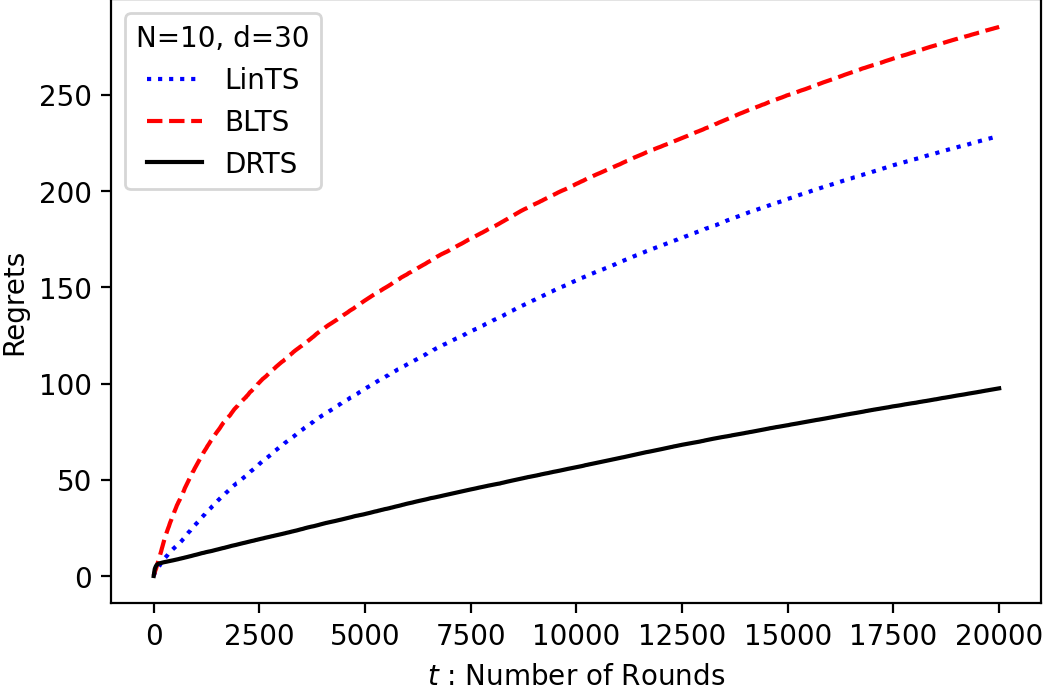}
\includegraphics[width=0.32\linewidth]{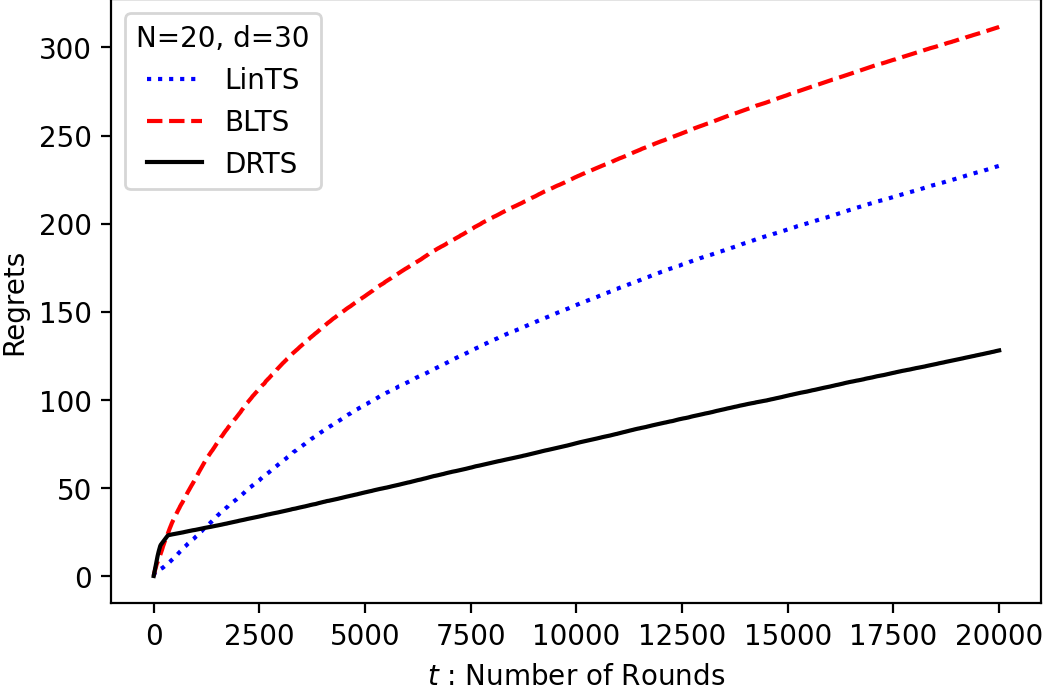}
\includegraphics[width=0.32\linewidth]{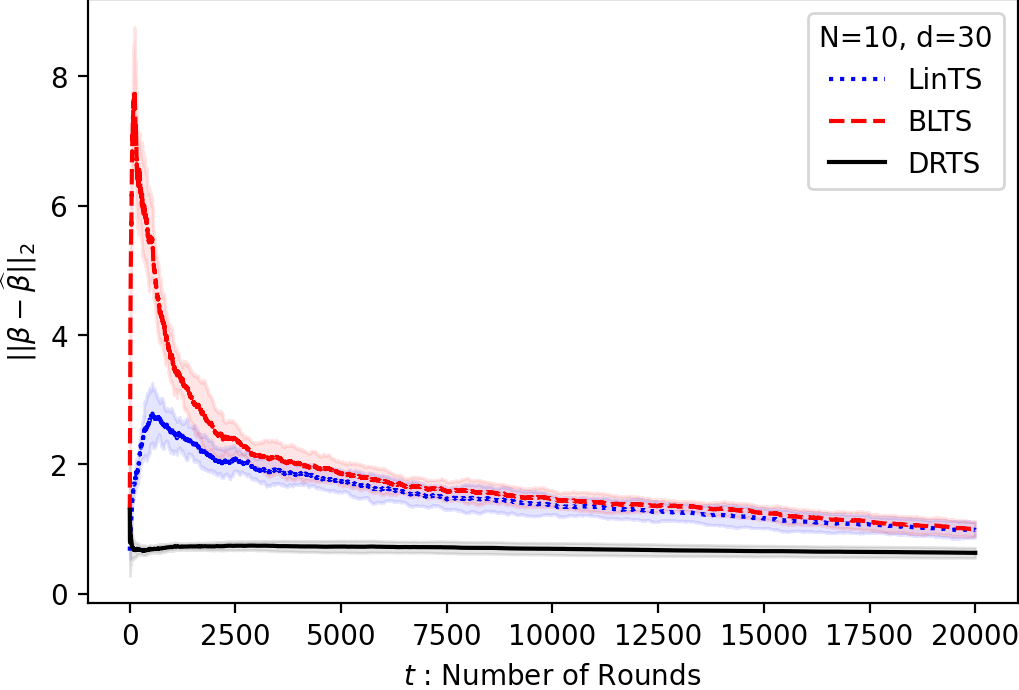}
\caption{
A Comparison of cumulative regrets and estimation errors of \texttt{LinTS}, \texttt{BLTS} and \texttt{DRTS}. 
Each line shows the averaged cumulative regrets (estimation errors, resp.) and the shaded area in the right two figures represents the standard deviations over 10 repeated experiments.
}
\label{fig:results}
\end{center}
\end{figure*}

Lemma \ref{lem:regret_t_bound} shows that the regret at round $t$ does not exceed a $O(\phi^{-2}t^{-1/2})$ bound when $a_t\in N_t$, which is guaranteed in our algorithm via resampling with high probability (See Section \ref{subsec:maximum_possible_resampling} for details).
This concludes the proof of Theorem \ref{thm:regret_bound}.

\section{Simulation studies}
\label{sec:simulation}
In this section, we compare the performances of the three algorithms: (i) \texttt{LinTS} \citep{agrawal2013thompson}, (ii) \texttt{\texttt{BLTS}} \citep{dimakopoulou2019balanced}, and (iii) the proposed \texttt{DRTS}.
We use simulated data described as follows.
The number of arms \(N\) is set to \(10\) or \(20\), and the dimension of contexts \(d\) is set to \(20\) or \(30\).
For each element of the contexts \(j=1,\cdots,d\), we generate \([X_{1j}(t),\cdots,X_{Nj}(t)]\) from a normal distribution \(\mathcal{N}(\mu_N, V_N)\) with mean \(\mu_{10}=[-10,-8,\cdots, -2,2,\cdots,8, 10]^T\), or \(\mu_{20}=[-20,-18,\cdots, -2,2,\cdots,18, 20]^T\), and the covariance matrix \(V_N\in\mathbb{R}^{N\times N}\) has \(V_{N}(i,i)=1\) for every \(i\) and \(V_{N}(i,k)=\rho\) for every \(i\neq k\). 
We set \(\rho=0.5\) and truncate the sampled contexts to satisfy \(\|X_{i}(t)\|_{2} \le 1\). 
To generate the stochastic rewards, we sample \(\eta_i(t)\) independently from \(\mathcal{N}(0,1)\). 
Each element of \(\beta\) follows a uniform distribution, \(\mathcal{U}(-{1}/{\sqrt{d}}, {1}/{\sqrt{d}})\).

All three algorithms have \(v\) as an input parameter which controls the variance of \(\BetaSampled{i}{t}\).
\texttt{BLTS} and \texttt{DRTS} require a positive threshold \(\gamma\) which truncates the selection probability. 
We consider $v \in \{0.001, 0.01, 0.1, 1\}$ in all three algorithms, $\gamma \in \{0.01, 0.05, 0.1\}$ for \texttt{BLTS}, and set $\gamma = 1/(N+1)$ in \texttt{DRTS}.
Then we report the minimum regrets among all combinations.
The regularization parameter is \(\lambda_{t}=\sqrt{t}\) in \texttt{DRTS} and \(\lambda_{t}=1\) in both \texttt{LinTS} and \texttt{BLTS}.
To obtain an imputation estimator $\check{\beta}_t$ required in \texttt{DRTS}, we use ridge regression with $\{\Context{\Action{\tau}}{\tau},\Reward{\Action{\tau}}{\tau}\}_{\tau=1}^{t-1}$, for each round $t$.
Other implementation details are in supplementary materials.

Figure \ref{fig:results} shows the average of the cumulative regrets and the estimation error \(\|\widehat{\beta}_t-\beta\|_2\) of the three algorithms based on 10 replications.
The figures in the two left columns show the average cumulative regret according to the number of rounds with the best set of hyperparameters for each algorithm.
The total rounds are \(T=20000\).
The figures in the third columns show the average of the estimation error \(\|\widehat{\beta}_t-\beta\|_{2}\).
In the early stage, the estimation errors of \texttt{LinTS} and \texttt{BLTS} increase rapidly, while that of \texttt{DRTS} is stable.
The stability of the DR estimator follows possibly by using full contexts and the regularization parameter \(\lambda_{t}=\sqrt{t}\).
This yields a large margin of estimation error among \texttt{LinTS}, \texttt{BLTS} and \texttt{DRTS}, especially when the dimension is large.

\section{Conclusion}
\label{sec:conclusion}
In this paper, we propose a novel algorithm for stochastic contextual linear bandits.
Viewing the bandit problem as a missing data problem, we use the DR technique to employ all contexts including those that are not chosen.
With the definition of super-unsaturated arms, we show a regret bound which only  depends on the minimum eigenvalue of the sample covariance matrices.
This new bound has $\tilde{O}(d\sqrt{T})$ rate in many practical scenarios, which is improved by a factor of $\sqrt{d}$ compared to the previous \texttt{LinTS} regret bounds.
Simulation studies show that the proposed algorithm performs better than other \texttt{LinTS} algorithms in a large dimension.

\section*{Acknowledgements}
This work is supported by the National Research Foundation of Korea (NRF) grant funded by the Korea government (MSIT, No.2020R1A2C1A01011950) (Wonyoung Kim and Myunghee Cho Paik), and by the Institute of Information \& communications Technology Planning \& Evaluation (IITP) grant funded by the Korea government (MSIT) (No.2020-0-01336, Artificial Intelligence Graduate School Program(UNIST)) and the National Research Foundation of Korea (NRF) grant funded by the Korea government (MSIT, No.2021R1G1A100980111) (Gi-Soo Kim).
Wonyoung Kim was also supported by Hyundai Chung Mong-koo foundation.

\bibliography{ref}
\bibliographystyle{plainnat}

\clearpage
\appendix

\section*{Checklist}
\begin{enumerate}

\item For all authors...
\begin{enumerate}
  \item Do the main claims made in the abstract and introduction accurately reflect the paper's contributions and scope?
    \answerYes{}
  \item Did you describe the limitations of your work?
    \answerYes{See supplementary materials.}
  \item Did you discuss any potential negative societal impacts of your work?
    \answerNA{Since this work focuses on the theoretical regret bound of an algorithm, the societal impacts of the results are too indirect and too broad to discuss.}
  \item Have you read the ethics review guidelines and ensured that your paper conforms to them?
    \answerYes{}
\end{enumerate}

\item If you are including theoretical results...
\begin{enumerate}
  \item Did you state the full set of assumptions of all theoretical results?
    \answerYes{See Section \ref{subsec:Settings_and_assumptions}}
	\item Did you include complete proofs of all theoretical results?
    \answerYes{See supplementary materials}
\end{enumerate}

\item If you ran experiments...
\begin{enumerate}
  \item Did you include the code, data, and instructions needed to reproduce the main experimental results (either in the supplemental material or as a URL)?
    \answerYes{See supplementary materials}
  \item Did you specify all the training details (e.g., data splits, hyperparameters, how they were chosen)?
    \answerYes{See Section \ref{sec:simulation}}
	\item Did you report error bars (e.g., with respect to the random seed after running experiments multiple times)?
    \answerYes{Only in the estimation error results in Figure \ref{fig:results} for good visualization.}
	\item Did you include the total amount of compute and the type of resources used (e.g., type of GPUs, internal cluster, or cloud provider)?
    \answerNo{The computation in our experiments is too light to characterize and our work focuses on the theoretical analysis of our algorithm.}
\end{enumerate}

\item If you are using existing assets (e.g., code, data, models) or curating/releasing new assets...
\begin{enumerate}
  \item If your work uses existing assets, did you cite the creators?
    \answerNA{}
  \item Did you mention the license of the assets?
    \answerNA{}
    \item Did you include any new assets either in the supplemental material or as a URL?
    \answerNA{}
  \item Did you discuss whether and how consent was obtained from people whose data you're using/curating?
    \answerNA{}
  \item Did you discuss whether the data you are using/curating contains personally identifiable information or offensive content?
    \answerNA{}
\end{enumerate}

\item If you used crowdsourcing or conducted research with human subjects...
\begin{enumerate}
  \item Did you include the full text of instructions given to participants and screenshots, if applicable?
    \answerNA{}
  \item Did you describe any potential participant risks, with links to Institutional Review Board (IRB) approvals, if applicable?
    \answerNA{}
  \item Did you include the estimated hourly wage paid to participants and the total amount spent on participant compensation?
    \answerNA{}
\end{enumerate}

\end{enumerate}

\clearpage
\section{Detailed analysis of the resampling}
In this section, we give details about the issues which can be raised from the resampling in our algorithm.
\subsection{Precise definition of action selection}
We give precise definition of the action at round $t$, $\Action{t}$.
For each round $t\ge2$, given $\History{t}$, let $\Sampledaction{t}{1}, \Sampledaction{t}{2},\ldots,\Sampledaction{t}{M_{t}}$ to be maximum possible sequence of actions to be resampled.
These actions are IID, with $\CP{\Sampledaction t1=i}{\History{t}}=\Pisampled it$ for $i=1,\ldots,N$.
Define a subset of arms $\Gammaset{t}:=\{i:\Pisampled{i}{t}>\gamma\}$ and
a stopping time 
\begin{equation}
\mathcal{T}:=\inf\{m \ge 1: \Sampledaction{t}{m} \in \Gammaset{t} \}
\label{eq:stopping_time}
\end{equation} 
with respect to the filtration $\Filtration{m}:=\History{t}\cup\{\Sampledaction t1,\ldots,\Sampledaction tm\}$.
Since the algorithm stops resampling when the candidate action is in $\Gammaset{t}$, the stopping time $\mathcal{T}$ is the actual number of resampling in algorithm.
Thus we can write the action after resampling as $\Action{t}:=\Sampledaction{t}{\min\{\mathcal{T},M_{t}\}}$.

\subsection{Computing the probability of selection}
The probability of selection $\SelectionP{i}{t}:=\CP{\Action{t}=i}{\History{t}}$ is not the same as $\Pisampled{i}{t}$ due to resampling.
This might cause the problem of computing $\SelectionP{i}{t}$ which is essential to compute $\DRreward{i}{t}$.
However, with the precise definition of $\Action{t}$, we can derive a closed form for $\SelectionP{i}{t}$.

First, we consider two cases separately: (i) the case when the resampling succeeds and (ii) the case when the resampling fails and the maximum possible number of resampling runs out.
In case (i), $\Action{t}\in\Gammaset{t}$, and for any $i\in\Gammaset{t}$, we have
\begin{equation}
\begin{split}
\CP{\Action t=i}{\History t}
=&\CP{\mathcal{T}\le M_{t},\;\Sampledaction{t}{\mathcal{T}}=i}{\History t}\\
=&\sum_{m=1}^{M_{t}}\CP{\mathcal{T}=m,\;\Sampledaction tm=i}{\History t}\\
=&\sum_{m=1}^{M_{t}}\CP{\Sampledaction tm=i}{\History t}\left(\prod_{j=0}^{m-1}\CP{\Sampledaction tj\notin\Gammaset t}{\History t}\right)\\
=&\Pisampled it\sum_{m=1}^{M_{t}}\left(1-\sum_{i\in\Gammaset t}\Pisampled it\right)^{m-1}\\
=&\Pisampled it \frac{1-\left(1-\sum_{i\in\Gammaset t}\Pisampled it \right)^{M_t}}{\sum_{i\in\Gammaset t}\Pisampled it}.
\end{split}
\label{eq:pi_computation_success}
\end{equation}

Now, for the case (ii) $\Action{t}\notin\Gammaset{t}$, and for any $i\notin\Gammaset{t}$, we have
\begin{equation}
\begin{split}
\CP{a_{t}=i}{\History t}&=\CP{\mathcal{T}>M_{t},\Sampledaction t{M_{t}}=i}{\History t}\\
&=\CP{\bigcap_{m=1}^{M_{t}-1}\left\{ \Sampledaction tm\notin\Gammaset t\right\} ,\Sampledaction t{M_{t}}=i}{\History t}\\
&=\left(1-\sum_{i\in\Gammaset t}\Pisampled it\right)^{M_{t}-1}\hspace{-13pt}\Pisampled it.
\end{split}
\label{eq:pi_computation_fail}
\end{equation}
With (\ref{eq:pi_computation_success}) and (\ref{eq:pi_computation_fail}), we can compute $\SelectionP{i}{t}$ for all $i=1,\ldots,N$.

\subsection{The number of maximum possible resampling}
\label{subsec:maximum_possible_resampling}
The proposed algorithm attempts resampling up to $M_t$ times to find an arm in $\{i:\tilde{\pi}_i(t)>\gamma\}$.  
The main point in selecting $M_t$ is to bound the probability that the resampling fails in finding an arm whose selection probability exceeds $\gamma$ for some $\delta$, i.e., 
\begin{equation}
\label{eq:resampling_probability_bound}
\mathbb{P}(a_t\notin\{i:\tilde{\pi}_i(t) > \gamma\})\le\delta/t^2.
\end{equation}
Intuitively, as $M_t$ increases, we have more opportunities for resampling and the probability that the resampling fails in finding arms in $\{i:\tilde{\pi}_i(t)>\gamma\}$ decreases.
Since $\gamma < 1/N$, there exists $j$ such that $\tilde{\pi}_j(t)>\gamma$, and the probability that the resampling fails is less than $1-\gamma$ in each resampling trial.

Specifically, we can achieve \eqref{eq:resampling_probability_bound} by choosing $M_t$ as a minimum integer that exceeds $\log\frac{t^2}{\delta}/\log\frac{1}{1-\gamma}$.
For any given $\delta\in(0,1)$, the event $\{\Action{t}\in \Gammaset{t}\}$ occurs with probability at least $1-\delta/t^2$.
By (\ref{eq:stopping_time}), we have
\begin{equation*}
\CP{\Action t\notin\Gammaset t}{\History t}=\CP{\mathcal{T}>M_{t}}{\History t}=\CP{\bigcap_{m=1}^{M_{t}}\left\{ \Sampledaction tm\notin\Gammaset t\right\} }{\History t}=\left(1-\sum_{i\in\Gammaset t}\Pisampled it\right)^{M_{t}}.
\end{equation*}
Since $\gamma < 1/N$, there exists at least one arm in $\Gammaset{t}$, and thus
$\CP{\Action t\notin\Gammaset t}{\History t}\le\left(1-\gamma\right)^{M_{t}}$.
If we set $M_{t}$ as a minimum integer that exceeds $\left(\log\frac{t^{2}}{\delta}\right)\left(\log\frac{1}{1-\gamma}\right)^{-1}$ then \eqref{eq:resampling_probability_bound} holds.
Thus, by choosing $M_t$ for each round that satisfies \eqref{eq:resampling_probability_bound}, the algorithm finds an arm $j$ such that $\tilde{\pi}_j(t)>\gamma$ in all rounds with high probability.

Selecting an arm from the set $\{i:\tilde{\pi}_i(t)>\gamma\}$ with high probability is crucial in achieving the regret bound of order $\tilde{O}(\phi^{-2}\sqrt{T})$ for two reasons.
First, it guarantees that the arm is super-unsaturated and our novel regret decomposition \eqref{eq:decomposition} holds to achieve our regret bound. 
Let $N_t$ be the set of super-unsaturated arm defined in (\ref{eq:super_unsaturated_set}).
With Lemma \ref{lem:super_unsaturated_arms}, we prove that if $\Pisampled{i}{t}>\gamma$ then $i\in N_{t}$, which implies $\Gammaset{t} \subseteq N_t$, and thus
\begin{equation*}
\CP{\Action{t} \in N_{t} }{\History{t}} 
\ge \CP{\Action{t} \in \Gammaset{t} }{\History{t}}.
\end{equation*}
Thus we can conclude that $\Action{t}$ is super-unsaturated with probability at least $1-\delta/t^2$ with $M_{t}$ defined in Section \ref{subsec:maximum_possible_resampling}.
Second, the inverse probability, $\pi_{a_t}(t)^{-1}$ is bounded by $\gamma^{-1}$ which appears in $Y^{DR}_i(t)$ and the proof of Theorem \ref{thm:Estimation_error}.
From \eqref{eq:pi_computation_success} we can deduce $\pi_{a_t}(t) \ge \tilde{\pi}_{a_t}(t) > \gamma$, for $a_t\in\Gammaset{t}$.
This shows that the assumptions regarding $\SelectionP{\Action{t}}{t}$ in Theorem \ref{thm:Estimation_error} hold.

\section{Technical Lemmas}
\begin{lem}
\label{lem:Bernstein_Concentration} \citep[Theorem 2.19]{wainwright_2019}
(Bernstein Concentration) Let
$\{D_{k},\mathfrak{S}_{k}\}_{k=1}^{\infty}$ be a martingale difference
sequence and suppose $D_{k}$ is $\sigma$-sub-Gaussian in an adapted
sense, i.e. for all $\lambda\in\Real$, $\CE{e^{\lambda D_{k}}}{\mathfrak{S}_{k-1}}\le e^{\lambda^{2}\sigma^{2}/2}$
almost surely. Then for all $x\ge0$, 
\[
\Probability\left(\abs{\sum_{k=1}^{n}D_{k}}\ge x\right)\le2\exp\left(-\frac{x^{2}}{2n\sigma^{2}}\right).
\]
\end{lem}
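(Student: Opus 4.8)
The plan is to establish the one-sided tail bound via the Chernoff method and then symmetrize. First I would fix $\lambda>0$ and apply Markov's inequality to the exponentiated sum, which gives $\Probability(\sum_{k=1}^{n}D_{k}\ge x)\le e^{-\lambda x}\Expectation[\exp(\lambda\sum_{k=1}^{n}D_{k})]$. This reduces the problem to controlling the moment generating function of the partial sum uniformly in $\lambda$.

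The central step is to bound $\Expectation[\exp(\lambda\sum_{k=1}^{n}D_{k})]$ by peeling off one increment at a time. Conditioning on $\mathfrak{S}_{n-1}$ and using the tower property, the factor $\exp(\lambda\sum_{k=1}^{n-1}D_{k})$ is $\mathfrak{S}_{n-1}$-measurable and can be pulled out of the inner conditional expectation, while the adapted $\sigma$-sub-Gaussian hypothesis supplies $\CE{e^{\lambda D_{n}}}{\mathfrak{S}_{n-1}}\le e^{\lambda^{2}\sigma^{2}/2}$. Iterating this estimate $n$ times yields $\Expectation[\exp(\lambda\sum_{k=1}^{n}D_{k})]\le e^{n\lambda^{2}\sigma^{2}/2}$.

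Combining the two estimates gives $\Probability(\sum_{k=1}^{n}D_{k}\ge x)\le\exp(-\lambda x+n\lambda^{2}\sigma^{2}/2)$ for every $\lambda>0$, and I would then optimize the exponent over $\lambda$ by taking $\lambda=x/(n\sigma^{2})$, producing the one-sided tail $\exp(-x^{2}/(2n\sigma^{2}))$. Since $\{-D_{k}\}$ is again a martingale difference sequence that is $\sigma$-sub-Gaussian in the adapted sense, the identical argument controls the lower tail $\Probability(\sum_{k=1}^{n}D_{k}\le -x)$, and a union bound over the two tails supplies the factor $2$ in the final estimate.

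The statement is classical, so I do not anticipate a genuine obstacle; the only point requiring care is the moment generating function peeling, where one must invoke measurability of the partial sum with respect to $\mathfrak{S}_{n-1}$ together with the tower property so that the conditional sub-Gaussian bound factors out cleanly at each stage, rather than assuming independence of the $D_{k}$.
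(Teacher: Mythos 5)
Your proof is correct and is exactly the standard Chernoff--Azuma argument; the paper gives no proof of its own for this lemma (it is quoted directly from Theorem 2.19 of Wainwright, whose proof proceeds by the same iterated-conditioning bound on the moment generating function followed by optimization over $\lambda$ and a two-sided union bound). No gaps.
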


\begin{lem}
\label{lem:Azuma_Hoeffding_ineqaulity} \citep{azuma1967weighted}
(Azuma-Hoeffding inequality) If a super-martingale $(Y_{t};t\ge0)$
corresponding to filtration $\Filtration t$, satisfies $\abs{Y_{t}-Y_{t-1}}\le c_{t}$
for some constant $c_{t}$, for all $t=1,\ldots,T$, then for any
$a\ge0$, 
\[
\Probability\left(Y_{T}-Y_{0}\ge a\right)\le e^{-\frac{a^{2}}{2\sum_{t=1}^{T}c_{t}^{2}}}.
\]
\end{lem}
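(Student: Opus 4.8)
The plan is to use the exponential Markov (Chernoff) method combined with a telescoping bound on the moment generating function of the increments. Writing $D_{t}:=Y_{t}-Y_{t-1}$ for the supermartingale differences, the hypotheses give $\CE{D_{t}}{\Filtration{t-1}}\le0$ and $\abs{D_{t}}\le c_{t}$. The first step is to note that for any $\lambda>0$, Markov's inequality applied to the nonnegative variable $e^{\lambda(Y_{T}-Y_{0})}$ yields
\[
\Probability\left(Y_{T}-Y_{0}\ge a\right)\le e^{-\lambda a}\,\Expectation\left[e^{\lambda(Y_{T}-Y_{0})}\right]=e^{-\lambda a}\,\Expectation\left[e^{\lambda\sum_{t=1}^{T}D_{t}}\right].
\]
The task then reduces to bounding the moment generating function $\Expectation[e^{\lambda\sum_{t}D_{t}}]$ and optimizing the free parameter $\lambda$ at the end.

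The main obstacle, and the only step requiring genuine work, is the per-increment conditional bound (Hoeffding's lemma in adapted form): I claim that $\CE{e^{\lambda D_{t}}}{\Filtration{t-1}}\le e^{\lambda^{2}c_{t}^{2}/2}$. To establish this I would invoke convexity of $x\mapsto e^{\lambda x}$ on $[-c_{t},c_{t}]$ to write, for $\abs{x}\le c_{t}$,
\[
e^{\lambda x}\le\frac{c_{t}-x}{2c_{t}}\,e^{-\lambda c_{t}}+\frac{c_{t}+x}{2c_{t}}\,e^{\lambda c_{t}}.
\]
Taking $\CE{\cdot}{\Filtration{t-1}}$ and observing that, for $\lambda>0$, the coefficient multiplying $x$ equals $(e^{\lambda c_{t}}-e^{-\lambda c_{t}})/(2c_{t})\ge0$, the supermartingale property $\CE{D_{t}}{\Filtration{t-1}}\le0$ only decreases the right-hand side, leaving $\CE{e^{\lambda D_{t}}}{\Filtration{t-1}}\le\cosh(\lambda c_{t})\le e^{\lambda^{2}c_{t}^{2}/2}$, where the final inequality is the elementary Taylor comparison $\cosh u\le e^{u^{2}/2}$. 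The delicate point is precisely that the supermartingale (rather than martingale) hypothesis is usable only because that coefficient is nonnegative for $\lambda>0$; this is exactly why the resulting tail bound is one-sided, holding for the upper deviation with $a\ge0$.

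With the per-step bound in hand, I would telescope by repeated conditioning. Setting $Z_{t}:=e^{\lambda\sum_{s=1}^{t}D_{s}}$ and using that $Z_{t-1}$ is $\Filtration{t-1}$-measurable,
\[
\Expectation[Z_{t}]=\Expectation\left[Z_{t-1}\,\CE{e^{\lambda D_{t}}}{\Filtration{t-1}}\right]\le e^{\lambda^{2}c_{t}^{2}/2}\,\Expectation[Z_{t-1}],
\]
so iterating from $t=T$ down to $t=1$ gives $\Expectation[e^{\lambda\sum_{t}D_{t}}]\le\exp\bigl(\tfrac{\lambda^{2}}{2}\sum_{t=1}^{T}c_{t}^{2}\bigr)$. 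Substituting into the Chernoff bound produces $\Probability(Y_{T}-Y_{0}\ge a)\le\exp\bigl(-\lambda a+\tfrac{\lambda^{2}}{2}\sum_{t}c_{t}^{2}\bigr)$, and minimizing the exponent over $\lambda>0$ at $\lambda^{\star}=a/\sum_{t}c_{t}^{2}$ yields the claimed bound $\exp\bigl(-a^{2}/(2\sum_{t=1}^{T}c_{t}^{2})\bigr)$. The remaining manipulations are routine.
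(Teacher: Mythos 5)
Your proof is correct: the Chernoff--Markov step, the convexity (Hoeffding's lemma) bound with the sign observation that makes the supermartingale hypothesis usable for $\lambda>0$, the telescoping of the conditional moment generating function, and the optimization at $\lambda^{\star}=a/\sum_{t}c_{t}^{2}$ are all sound, and you correctly identify why the bound is one-sided. The paper states this lemma as a cited result from \citet{azuma1967weighted} without proof, and your argument is essentially the standard proof given in that reference, so there is nothing to reconcile.
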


\begin{lem}
\label{lem:dim_reduction}\citep[Lemma 2.3]{lee2016}  Let $\left\{ N_{t}\right\} $
be a martingale on a Hilbert space $(\mathcal{H},\norm{\cdot}_{\mathcal{H}})$.
Then there exists a $\Real^{2}$-valued martingale $\left\{ P_{t}\right\} $
such that for any time $t\ge0$, $\norm{P_{t}}_{2}=\norm{N_{t}}_{\mathcal{H}}$
and $\norm{P_{t+1}-P_{t}}_{2}=\norm{N_{t+1}-N_{t}}_{\mathcal{H}}$.
\end{lem}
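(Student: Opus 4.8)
The plan is to construct $\{P_t\}$ inductively, at each step reproducing in $\Real^{2}$ the ``triangle'' formed in $\mathcal{H}$ by $N_t$, $N_{t+1}$, and the martingale difference $D_{t+1}:=N_{t+1}-N_t$. The only geometric data that the two required identities constrain are the three side lengths $\norm{N_t}_{\mathcal{H}}$, $\norm{N_{t+1}}_{\mathcal{H}}$, $\norm{D_{t+1}}_{\mathcal{H}}$, and since these satisfy the triangle inequality they can always be realized by a planar triangle. The subtlety, and the reason a naive ``copy the side lengths'' argument fails, is that a planar triangle is determined by its side lengths only up to a reflection, and choosing the correct reflection at each step is precisely what is needed to retain the martingale property; I will resolve this by an auxiliary independent random sign.

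Concretely, I would first decompose the difference along and orthogonal to $N_t$: on $\{N_t\neq 0\}$, writing $u_t:=N_t/\norm{N_t}_{\mathcal{H}}$, set $\alpha_{t+1}:=\langle D_{t+1},u_t\rangle_{\mathcal{H}}$ and $W_{t+1}:=D_{t+1}-\alpha_{t+1}u_t$, so that $W_{t+1}\perp N_t$ and $\norm{D_{t+1}}_{\mathcal{H}}^{2}=\alpha_{t+1}^{2}+\norm{W_{t+1}}_{\mathcal{H}}^{2}$. Since $N_t$ is $\Filtration{t}$-measurable and $\CE{D_{t+1}}{\Filtration{t}}=0$, taking the inner product with $u_t$ gives $\CE{\alpha_{t+1}}{\Filtration{t}}=0$, while expanding yields $\norm{N_{t+1}}_{\mathcal{H}}^{2}=\norm{N_t}_{\mathcal{H}}^{2}+2\norm{N_t}_{\mathcal{H}}\alpha_{t+1}+\norm{D_{t+1}}_{\mathcal{H}}^{2}$, recording how the radial increment $\alpha_{t+1}$ governs the growth of the norm. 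I would then enlarge the probability space with i.i.d.\ Rademacher signs $\{\varepsilon_t\}$ independent of $\{N_t\}$, work with $\mathcal{G}_t:=\sigma(\Filtration{t},\varepsilon_1,\dots,\varepsilon_t)$, and define, with $\widehat P_t:=P_t/\norm{P_t}_2$ and $\widehat P_t^{\perp}$ the unit vector obtained by rotating $\widehat P_t$ by a right angle,
\[
P_{t+1}:=P_t+\alpha_{t+1}\widehat P_t+\varepsilon_{t+1}\norm{W_{t+1}}_{\mathcal{H}}\,\widehat P_t^{\perp},
\]
starting from $P_0:=\norm{N_0}_{\mathcal{H}}e_1$.

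With this definition the two norm identities are immediate from planar geometry: the increment has squared length $\alpha_{t+1}^{2}+\norm{W_{t+1}}_{\mathcal{H}}^{2}=\norm{D_{t+1}}_{\mathcal{H}}^{2}$, giving $\norm{P_{t+1}-P_t}_2=\norm{N_{t+1}-N_t}_{\mathcal{H}}$, while $\norm{P_{t+1}}_2^{2}=\norm{P_t}_2^{2}+2\norm{P_t}_2\alpha_{t+1}+\norm{D_{t+1}}_{\mathcal{H}}^{2}$ matches the expansion above once $\norm{P_t}_2=\norm{N_t}_{\mathcal{H}}$ holds by induction. The crux is the martingale property $\CE{P_{t+1}-P_t}{\mathcal{G}_t}=0$: the radial term contributes $\CE{\alpha_{t+1}}{\mathcal{G}_t}\widehat P_t=0$ because $\varepsilon_{1:t}$ are independent of $\{N_s\}$, so conditioning on $\mathcal{G}_t$ reduces to conditioning on $\Filtration{t}$; and the tangential term contributes $\CE{\varepsilon_{t+1}\norm{W_{t+1}}_{\mathcal{H}}}{\mathcal{G}_t}\widehat P_t^{\perp}=0$ because $\varepsilon_{t+1}$ is mean-zero and independent of $\sigma(\Filtration{\infty},\varepsilon_{1:t})$. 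This is exactly where the randomization is essential: the orthogonal component $W_{t+1}$ itself satisfies $\CE{W_{t+1}}{\Filtration{t}}=0$ in $\mathcal{H}$, but its planar surrogate carries only the magnitude $\norm{W_{t+1}}_{\mathcal{H}}\ge 0$, whose conditional mean is strictly positive, so the independent symmetric sign is what restores the vanishing conditional mean.

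Finally I would dispose of the degenerate case $N_t=0$ (equivalently $P_t=0$), in which the two identities collapse to $\norm{P_{t+1}}_2=\norm{N_{t+1}}_{\mathcal{H}}=\norm{D_{t+1}}_{\mathcal{H}}$; here I set $P_{t+1}:=\varepsilon_{t+1}\norm{N_{t+1}}_{\mathcal{H}}e_1$, which has the correct norm and, by the same independence argument, zero conditional mean. The main obstacle is therefore not the geometry, which is forced by the side lengths, but reconciling adaptedness with the martingale property, and the decisive idea is to trade the reflection ambiguity for an auxiliary independent sign that annihilates the conditional mean of the tangential increment.
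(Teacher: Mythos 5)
Your construction is correct, and it is essentially the standard proof of this dimension-reduction result (originally due to Kallenberg and Sztencel; the version cited here is Lemma~2.3 of \citet{lee2016}). The paper itself does not prove this lemma --- it is imported verbatim from the reference --- so there is no in-paper argument to compare against; your write-up supplies the missing details. The key points all check out: the radial/tangential split gives $\CE{\alpha_{t+1}}{\Filtration{t}}=\langle\CE{D_{t+1}}{\Filtration{t}},u_t\rangle=0$; the norm identities follow from $\norm{D_{t+1}}_{\mathcal{H}}^{2}=\alpha_{t+1}^{2}+\norm{W_{t+1}}_{\mathcal{H}}^{2}$ and the parallelogram-type expansion of $\norm{N_{t+1}}_{\mathcal{H}}^{2}$; and you correctly identify that the independent Rademacher sign is indispensable because the planar surrogate of the tangential increment retains only the nonnegative magnitude $\norm{W_{t+1}}_{\mathcal{H}}$, whose conditional mean does not vanish. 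One remark worth making explicit if you write this up formally: the resulting $\{P_t\}$ is a martingale with respect to the enlarged filtration $\mathcal{G}_t$ on an extended probability space, which is exactly the sense in which the lemma (and its use in Theorem~\ref{thm:Estimation_error} and Lemmas~\ref{lem:eta_x_bound} and~\ref{lem:min_eigenvalue_concentration}) is applied, since only the distributions of $\norm{P_t}_2$ and $\norm{P_{t+1}-P_t}_2$ are ever used downstream.
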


\begin{lem}
\label{lem:chung_lemma}
\citep[Lemma 1, Theorem 32]{chung2006concentration}
For a filtration $\Filtration 0\subset\Filtration 1\subset\cdots\subset\Filtration T$, suppose each random variable $X_{t}$ is $\Filtration t$-measurable martingale, for $0\le t\le T$. 
Let $B_{t}$ denote the bad set associated with the following admissible condition: 
\[
\abs{X_{t}-X_{t-1}}\le c_{t},
\] 
for $1\le t\le T$, where $c_{1},\ldots,c_{n}$ are non-negative numbers. 
Then there exists a collection of random variables $Y_{0},\ldots,Y_{T}$ such that $Y_{t}$ is $\Filtration t$-measurable martingale such that 
\[
\abs{Y_{t}-Y_{t-1}}\le c_{t},
\]
and $\{\omega:Y_{t}(\omega)\neq X_{t}(\omega)\}\subset B_{t}$, for $0\le t\le T$.  
\end{lem}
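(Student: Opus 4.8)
The plan is to prove this by a direct martingale-surgery construction: I would build $Y_t$ so that it tracks $X_t$ up until the admissible condition $\abs{X_t - X_{t-1}} \le c_t$ is first in danger of failing, and then ``freezes'' in a way that preserves both the deterministic increment bound and the martingale identity. Throughout I work with the increasing adapted family $B_t := \cup_{s \le t}\{\abs{X_s - X_{s-1}} > c_s\}$, the natural reading of the bad set associated with the admissible condition, so that $B_t \in \Filtration{t}$, $B_{t-1}\subseteq B_t$, and $X_s = Y_s$ is the target value on $B_s^c$. Define the stopping time $\tau := \inf\{t \ge 1 : \omega \in B_t\}$; since $\{\tau \le t\} = B_t \in \Filtration{t}$, $\tau$ is a stopping time with respect to $\{\Filtration{t}\}$, and disagreements $\{Y_t \ne X_t\}$ will be forced to live inside $\{\tau \le t\} = B_t$.

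The construction proceeds by induction on $t$, maintaining the invariant that $Y_t$ is $\Filtration{t}$-measurable, that $\CE{Y_t}{\Filtration{t-1}} = Y_{t-1}$, that $\abs{Y_t - Y_{t-1}} \le c_t$, and that $\{Y_t \ne X_t\}\subseteq B_t$. Set $Y_0 = X_0$. In the inductive step I split on the predictable event $B_{t-1}\in\Filtration{t-1}$: on $B_{t-1}$ I set $Y_t = Y_{t-1}$, which gives a zero increment and preserves both the martingale identity and the containment since $B_{t-1}\subseteq B_t$; on $B_{t-1}^c$, where $Y_{t-1} = X_{t-1}$ by the inductive hypothesis, I keep $Y_t = X_t$ on the sub-event $\{\abs{X_t - X_{t-1}}\le c_t\}$ and otherwise assign $Y_t$ a truncated value in the window $[X_{t-1} - c_t,\, X_{t-1} + c_t]$. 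The bound $\abs{Y_t - Y_{t-1}}\le c_t$ then holds surely, and $\{Y_t \ne X_t\}\subseteq B_t$ holds because any disagreement at round $t$ occurs only on $B_{t-1}$ or on the newly added bad event $\{\abs{X_t - X_{t-1}} > c_t\}$.

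The main obstacle is reconciling the sure bound $\abs{Y_t - Y_{t-1}}\le c_t$ with the martingale identity $\CE{Y_t}{\Filtration{t-1}} = Y_{t-1}$. When the bad set is predictable, i.e. $B_t \in \Filtration{t-1}$, the freeze is clean: on $B_t^c$ one keeps $Y_t = X_t$ and the identity follows from $\CE{X_t}{\Filtration{t-1}} = X_{t-1}$ together with $Y_{t-1}=X_{t-1}$ on $B_t^c\subseteq B_{t-1}^c$, while on $B_t$ a zero increment trivially preserves both properties. The genuine difficulty is that $\{\abs{X_t - X_{t-1}} > c_t\}$ is only $\Filtration{t}$-measurable, so \emph{hard} truncation of the oversized increment perturbs the conditional mean, whereas an uncorrected stopped process $Y_t = X_{t\wedge\tau}$ retains the martingale property but violates the bound at the boundary step $t=\tau$. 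Resolving this requires compensating the truncation by an $\Filtration{t}$-measurable adjustment supported on the bad event, whose conditional mean cancels $\CE{X_t\,\Indicator{\abs{X_t-X_{t-1}}>c_t}}{\Filtration{t-1}}$, and checking that this adjustment can be taken within the window $[X_{t-1}-c_t,\,X_{t-1}+c_t]$ without enlarging $\{Y_t\ne X_t\}$ beyond $B_t$ — equivalently, that freezing can be arranged on a predictable version of the bad region. Establishing the existence of this compensator is the technical heart of the argument and where I would concentrate the effort; the four stated properties then follow from the induction, after which $Y$ is a genuine bounded-difference martingale ready for Azuma--Hoeffding (Lemma \ref{lem:Azuma_Hoeffding_ineqaulity}) in the downstream concentration arguments.
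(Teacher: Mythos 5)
The paper offers no proof of this statement---it is imported verbatim from \citet{chung2006concentration}---so there is no in-paper argument to measure your proposal against; I can only assess it on its own terms. Your sketch sets up the natural construction and correctly isolates the crux: on $B_{t-1}^{c}\cap\{\abs{X_t-X_{t-1}}>c_t\}$ you must assign $Y_t$ a value in $[X_{t-1}-c_t,\,X_{t-1}+c_t]$ whose contribution restores $\CE{Y_t}{\Filtration{t-1}}=X_{t-1}$. But the compensator you defer as ``the technical heart'' does not exist in general, so the argument cannot be completed along these lines. Take $T=1$, $\Filtration{0}$ trivial, $X_0=0$, $c_1=1$, and $X_1=-1$ with probability $3/4$, $X_1=3$ with probability $1/4$; then $\Expectation[X_1]=0$, so $X$ is a martingale, and $B_1=\{X_1=3\}$. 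The requirements $Y_0=X_0=0$, $Y_1=X_1=-1$ off $B_1$, and $\CE{Y_1}{\Filtration{0}}=0$ force $Y_1=3$ on $B_1$, which violates $\abs{Y_1-Y_0}\le c_1=1$. No admissible choice exists even at the first step.

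The conclusion is that under the reading $B_t=\bigcup_{s\le t}\{\abs{X_s-X_{s-1}}>c_s\}$---the reading your proof adopts, and the one the paper itself relies on when it invokes this lemma in the proof of Theorem \ref{thm:Estimation_error} (see the inclusion (\ref{eq:bad_set_inclusion}))---the three requirements (sure increment bound, exact martingale identity, disagreement confined to $B_t$) are mutually inconsistent. The gap in your proposal is therefore not a missing verification but an impossibility: any correct version of the lemma must give something up, e.g.\ conclude only that $Y$ is a super- or sub-martingale, enlarge the disagreement set to a predictable hull such as $\{\CP{\abs{X_t-X_{t-1}}>c_t}{\Filtration{t-1}}>0\}$, or assume the violation events are $\Filtration{t-1}$-measurable, in which case your ``clean freeze'' case already finishes the induction. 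If the goal is a rigorous route to the downstream tail bound in Theorem \ref{thm:Estimation_error}, the standard repair is to work with the process stopped at $\tau=\inf\{t:\omega\in B_t\}$ and an exponential supermartingale argument restricted to $\{\tau>T\}$, rather than to manufacture a bounded-difference martingale agreeing with $X$ off $B_t$.
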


\begin{rem}
We found a counter example where Lemma~\ref{lem:chung_lemma} does not hold.
Suppose for each $t\in[T]$, $\Filtration{t}:=\{X_1,\ldots,X_t\}$ and
\[
X_{t}-X_{t-1}:=
\begin{cases}
2-\frac{{t+1}^{2}}{\delta} & \text{with probability }\frac{\delta}{{t+1}^{2}},\\
2-\frac{1}{1-\frac{\delta}{{t+1}^{2}}} & \text{with probability }1-\frac{\delta}{{t+1}^{2}}.
\end{cases}
\]
Then $X_t$ is $\Filtration{t}$ measurable martingale  with $X_0=0$.
Set $B_t:=\{|X_t-X_{t-1}|>1\}$.
By Lemma~\ref{lem:chung_lemma}, there exists a $\Filtration{t}$-measurable martingale $Y_t$ such that $|Y_t-Y_{t-1}|<1$ and
$\{\omega:Y_{t}(\omega)\neq X_{t}(\omega)\}\subset B_{t}$, for $0\le t\le T$.
By Lemma~\ref{lem:Bernstein_Concentration},
\begin{align*}
\Probability\left(X_{T}\ge\sqrt{2T\log\frac{1}{\delta}}\right)
&\le\Probability\left(\left\{ X_{T}\ge\sqrt{2T\log\frac{1}{\delta}}\right\} \cap B_{T}^{c}\right)+\Probability\left(B_{T}\right) \\
&\le\Probability\left(Y_{T}\ge\sqrt{2T\log\frac{1}{\delta}}\right)+\sum_{t=1}^{T}\frac{\delta}{{t+1}^{2}}\le2\delta.
\end{align*}
However, by definition of $X_t$,
\begin{align*}
\Probability\left(X_{T}\ge\sqrt{2T\log\frac{1}{\delta}}\right)
&\ge\Probability\left(X_{T}\ge T\right)\\
&\ge\Probability\left(X_{T}=2T-\sum_{t=1}^{T}\frac{1}{1-\frac{\delta}{\left(t+1\right)^{2}}}\right)\\
&=1-\sum_{t=1}^{T}\frac{\delta}{\left(t+1\right)^{2}}\ge1-\delta,
\end{align*}
holds for $\delta \ge e^{-T/2}$, which is a contradiction to the first inequality.
\label{rem:chung_lemma}
\end{rem}

\begin{lem}
\label{lem:subgaussian} Suppose a random variable $X$ satisfies
$\Expectation[X]=0$, and let $\eta$ be an $\sigma$-sub-Gaussian random variable. 
If $\abs X\le\abs{\eta}$ almost surely, then $X$ is $C\sigma$-sub-Gaussian
for some absolute constant $C>0$.
\end{lem}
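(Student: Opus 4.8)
The plan is to establish the result through the standard chain of equivalent characterizations of sub-Gaussianity---tail bound, moment bound, and moment generating function (MGF) bound---transferring control from $\eta$ to $X$ via the almost-sure domination $\abs X \le \abs \eta$.

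First I would convert the assumed MGF bound on $\eta$ into a two-sided tail bound. Applying the Chernoff argument (Markov's inequality to $\exp(\lambda\eta)$ and optimizing over $\lambda$) to the hypothesis $\Expectation[\exp(\lambda\eta)]\le\exp(\lambda^2\sigma^2/2)$, which holds for all $\lambda\in\Real$, yields $\Probability(\eta\ge t)\le\exp(-t^2/(2\sigma^2))$ and, by the same argument applied to $-\eta$, the two-sided bound $\Probability(\abs\eta\ge t)\le 2\exp(-t^2/(2\sigma^2))$ for every $t>0$. Since $\abs X\le\abs\eta$ almost surely, the event $\{\abs X\ge t\}$ is contained in $\{\abs\eta\ge t\}$, so $X$ inherits the identical sub-Gaussian tail $\Probability(\abs X\ge t)\le 2\exp(-t^2/(2\sigma^2))$.

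Next I would turn this tail bound into a moment bound. Using the layer-cake identity $\Expectation\abs X^p=p\int_0^\infty t^{p-1}\Probability(\abs X\ge t)\,dt$ together with the tail bound and the change of variables $u=t^2/(2\sigma^2)$, the integral reduces to a Gamma function and gives $\Expectation\abs X^p\le(C_1\sigma)^p p^{p/2}$ for an absolute constant $C_1$ and every $p\ge1$.

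Finally I would reconstruct the MGF bound from these moments, and this is the step where the hypothesis $\Expectation[X]=0$ is indispensable. Expanding $\Expectation[\exp(\lambda X)]=1+\lambda\Expectation[X]+\sum_{p\ge2}\lambda^p\Expectation[X^p]/p!$, the first-order term vanishes precisely because $\Expectation[X]=0$; bounding $\abs{\Expectation[X^p]}\le\Expectation\abs X^p\le(C_1\sigma)^p p^{p/2}$ and applying Stirling's approximation to $p^{p/2}/p!$ shows the remaining series is summable and dominated by $\exp(C^2\sigma^2\lambda^2/2)$ for an absolute constant $C$, so $X$ is $C\sigma$-sub-Gaussian. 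The main obstacle is this last step: the centered MGF definition used in the paper carries no linear term in $\lambda$, so without $\Expectation[X]=0$ the order-$\lambda$ contribution would survive and the conclusion would fail---domination alone controls the magnitude of $X$ but says nothing about its mean---and one must additionally track the absolute constant carefully through the Stirling estimate to confirm the series collapses uniformly in $\lambda$ into the clean quadratic-exponential form.
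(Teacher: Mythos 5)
Your proof is correct, and it follows the same underlying strategy as the paper: transfer a sub-Gaussian characterization that depends only on $\abs X$ across the a.s.\ domination $\abs X\le\abs\eta$, then use $\Expectation[X]=0$ to recover the MGF bound. The difference is in which characterization is transferred and how much is proved versus cited. The paper is a three-line argument that invokes Proposition 2.5.2 of Vershynin twice as a black box: it converts the MGF bound on $\eta$ into the squared-exponential-moment bound $\Expectation\exp(\lambda^{2}\eta^{2})\le\exp(\lambda^{2}C_{1}^{2}\sigma^{2}/2)$ on a bounded $\lambda$-range, transfers that bound to $X$ pointwise (since $\exp(\lambda^{2}X^{2})\le\exp(\lambda^{2}\eta^{2})$ a.s.), and then cites the same proposition again, together with $\Expectation[X]=0$, to get back to the MGF. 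You instead transfer the two-sided tail bound and reconstruct the MGF through the moments, proving the implications by hand (Chernoff, layer-cake, Taylor expansion with Stirling). What your route buys is self-containedness and transparency about where centering enters; what the paper's route buys is brevity and a cleaner handling of the large-$\abs\lambda$ regime, which is the one place your sketch is thin --- the naive term-by-term bound on $\sum_{p\ge 2}\lambda^{p}\Expectation[X^{p}]/p!$ gives $\exp(C^{2}\sigma^{2}\lambda^{2}/2)$ directly only for $\abs\lambda\lesssim 1/(C_{1}\sigma)$, and for larger $\abs\lambda$ one needs the standard extra step (e.g.\ Young's inequality $\lambda X\le\lambda^{2}\sigma'^{2}/2+X^{2}/(2\sigma'^{2})$ combined with the squared-exponential-moment bound). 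That is a routine completion, not a gap in the idea.
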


\begin{proof}
By Proposition 2.5.2 in \citet{vershynin2018high}, there exists an absolute constant $C_{1}>0$ such that
\[
\Expectation\exp\left(\lambda^{2}\eta^{2}\right)\le\exp\left(\frac{\lambda^{2}C_{1}^{2}\sigma^2}{2}\right),\quad\forall\lambda\in\left[-\frac{\sqrt{2}}{C_{1}\sigma},\frac{\sqrt{2}}{C_{1}\sigma}\right].
\]
Since $\abs X\le\abs{\eta}$ almost surely,
\[
\Expectation\exp\left(\lambda^{2}X^{2}\right)\le\exp\left(\frac{\lambda^{2}C_{1}^{2}\sigma^2}{2}\right),\quad\forall\lambda\in\left[-\frac{\sqrt{2}}{C_{1}\sigma},\frac{\sqrt{2}}{C_{1}\sigma}\right].
\]
Since $\Expectation[X]=0$, by Proposition 2.5.2 in \citet{vershynin2018high}, there exists an absolute constant $C_{2}>0$ such that
\[
\Expectation\exp\left(\lambda X\right)\le\exp\left(\frac{\lambda^{2}C_{1}^{2}C_{2}^{2}\sigma^2}{2}\right),\quad\forall\lambda\in\Real.
\]
Setting $C=C_{1}C_{2}$ completes the proof.
\end{proof}

\section{Missing details in proof of Theorem \ref{thm:regret_bound}}
In section \ref{subsec:maximum_possible_resampling}, we prove that $\Action{t}\in \Gammaset{t}$ with probability at least $1-\delta/t^2$, for all $t\ge2$.
Thus, for any $x>0$,
\begin{equation*}
\begin{split}
\Probability\left(R(T)>x\right)\le&\Probability\left(R(T)>x,\;\bigcap_{t=2}^{T}\left\{ \Action t\in\Gammaset t\right\} \right)+\Probability\left(\bigcup_{t=2}^{T}\left\{ \Action t\notin\Gammaset t\right\} \right)\\
\le&\Probability\left(R(T)>x,\;\bigcap_{t=2}^{T}\left\{ \Action t\in\Gammaset t\right\} \right)+\delta\\
\le&\Probability\left(2+\sum_{t=2}^{T}\Regret t>x,\;\bigcap_{t=2}^{T}\left\{ \Action t\in\Gammaset t\right\} \right)+\delta
\end{split}
\end{equation*}
The last inequality holds by Assumption 1.
Since $\Gammaset{t}$ is a subset of $N_t$ and by (\ref{eq:decomposition}), 
\begin{equation}
\begin{split}
&\Probability\left(R(T)>x\right)\\
&\le
\Probability\left(2+\sum_{t=2}^{T}\left\{ 2\norm{\Estimator{t-1}-\beta}_{2}\!+\!\sqrt{\norm{\Context{\Optimalarm{t}}{t}}_{V_{t-1}^{-1}}^{2}+\norm{\Context{\Action{t}}{t}}_{V_{t-1}^{-1}}^{2}}\;\right\} >x,\;\bigcap_{t=2}^{T}\left\{ \Action t\in\Gammaset t\right\} \right)+\delta.
\end{split}
\label{eq:cumul_regret_prob}
\end{equation}
To bound the term $\norm{\Estimator{t}-\beta}_{2}$ for all $t=1,\ldots,T-1$, we use Theorem \ref{thm:Estimation_error}.
Before that, we need to verify whether the two assumptions on $\SelectionP{i}{t}$ in Theorem \ref{thm:Estimation_error} hold.

First, we show that $\SelectionP{\Action{t}}{t}>\gamma$.
When $t=1$, we have $\Pisampled{i}{1}=1/N$ for all $i$.
Since $\gamma < 1/N$, we do not need resampling and thus $\SelectionP{i}{t}=\Pisampled{i}{t}>\gamma$.
When $t\ge2$, $\Action{t}\in\Gammaset{t}$ is already concerned in (\ref{eq:cumul_regret_prob}), and thus $\Pisampled{\Action{t}}{t}>\gamma$.
From (\ref{eq:pi_computation_success}), we can deduce that $\SelectionP{i}{t}>\Pisampled{i}{t}$ for all $i\in\Gammaset{t}$, and thus $\SelectionP{\Action{t}}{t}>\gamma$.

Now, we prove that $\SelectionP{i}{t}>0$ for all $i$ and $t$.
The case of $t=1$ is already proved above.
When $t\ge2$, from \eqref{eq:pi_computation_success}, we have 
\[
\SelectionP it:=\CP{\Action t=i}{\History t}=\Pisampled it\sum_{m=1}^{M_{t}}\left(1-\sum_{i\in\Gammaset t}\Pisampled it\right)^{m-1}>\Pisampled it > \gamma,
\]
for all $i\in\Gammaset{t}$.
If there exists an arm $i\notin\Gammaset{t}$, from \eqref{eq:pi_computation_fail},
\begin{equation*}
\SelectionP{i}{t} = \left(1-\sum_{i\in\Gammaset t}\Pisampled it\right)^{M_{t}-1}\Pisampled it.
\end{equation*}
The first term is positive since there exists an arm $i\notin\Gammaset{t}$.
The second term is also positive since the distribution of $\BetaSampled{i}{t}$ has support $\Real^d$, which implies that
\begin{equation*}
\Pisampled it:=\CP{\Context it^{T}\BetaSampled it=\max_{j}\Context jt^{T}\BetaSampled jt}{\History t}>0,
\end{equation*}
for all $i$.
Thus, $\SelectionP{i}{t}>0$ for all $i$ and $t$.
This implies that the two assumptions on $\SelectionP{i}{t}$ in Theorem \ref{thm:Estimation_error} hold.
 
Now we can use Theorem \ref{thm:Estimation_error} and Lemma \ref{lem:min_eigenvalue_concentration} to have
\begin{equation*}
\norm{\Estimator{t-1}-\beta}_{2} \le\frac{C_{b,\sigma}}{\phi^{2}\sqrt{t-1}}\sqrt{\log\frac{12(t-1)^{2}}{\delta}},\quad\!\!\sqrt{\norm{\Context{\Optimalarm{t}}{t}}_{V_{t-1}^{-1}}^{2}+\norm{\Context{\Action{t}}{t}}_{V_{t-1}^{-1}}^{2}}\le\frac{1}{\phi\sqrt{N(t-1)}},
\end{equation*}
for all $t=2,\ldots,T$ with probability at least $1-\delta$.
Thus, setting 
\[
x=2+\frac{4C_{b,\sigma}}{\phi^{2}}\sqrt{T\log\frac{12T^{2}}{\delta}}+\frac{2\sqrt{T}}{\phi\sqrt{N}}
\]
in \eqref{eq:cumul_regret_prob} proves the result.

\section{Proof of Lemma \ref{lem:super_unsaturated_arms}}
\begin{proof}
First, we bring attention to the fact that the optimal arm $\Optimalarm{t}$ is in $N_t$ by definition.
Suppose that the estimated reward of the optimal arm, $\Sampledreward{\Optimalarm{t}}{t}$ is greater than $\Sampledreward{j}{t}$ for all $j \notin N_t$.
In this case, any arm $j\notin N_t$ cannot be the $m_t:=\arg\max_{i} \Sampledreward{i}{t}$.
Then we have
\begin{equation*}
\begin{split}
\CP{m_{t}\in N_t}{\History{t}} & \ge \CP{\Sampledreward{\Optimalarm{t}}{t} > \Sampledreward{j}{t}, \forall j \notin N_t}{\History{t}} \\
& = \CP{Z_{j}(t) > \{\Context{j}{t} - \Context{\Optimalarm{t}}{t}\}^{T}\Estimator{t-1}, \forall j \notin N_t}{\History{t}},
\end{split}
\end{equation*}
where $Z_j(t):=\Sampledreward{\Optimalarm{t}}{t} -\Sampledreward{j}{t} -\{\Context{\Optimalarm{t}}{t} - \Context{j}{t}\}^{T}\Estimator{t-1}$.
Note that $Z_j(t)$ is a Gaussian random variable with mean 0 and variance $v^2(\| \Context{\Optimalarm{t}}{t}\|_{V_{t-1}^{-1}}^2 + \| \Context{j}{t}\|_{V_{t-1}^{-1}}^2)$ given $\History{t}$.
For all $j\notin N_t$,
\begin{align*}
\{\Context{j}{t} - \Context{\Optimalarm{t}}{t}\}^{T}\Estimator{t-1} & = \{\Context{j}{t}- \Context{\Optimalarm{t}}{t}\}^{T}\{\Estimator{t-1} -\beta\} - \Diff{j}{t} \\
& \le 2\norm{\Estimator{t} - \beta}_2 - \Diff{j}{t}  \le -\sqrt{\norm{\Context{\Optimalarm{t}}{t}}_{V_{t-1}^{-1}}^{2}+\norm{\Context{j}{t}}_{V_{t-1}^{-1}}^{2}}.
\end{align*}
The last inequality is due to $j\notin N_t$.
Thus, we can conclude that
\begin{equation*}
\begin{split}
\CP{m_{t}\in N_t}{\History{t}} \ge & \CP{ \frac{Z_{j}(t)}{v\sqrt{\norm{\Context{\Optimalarm{t}}{t}}_{V_{t-1}^{-1}}^{2}+\norm{\Context{j}{t}}_{V_{t-1}^{-1}}^{2}}} > -\frac{1}{v}, \forall j \notin N_t}{\History{t}} \\
:=&\CP{Y_{j}>-v^{-1},\forall j\neq N_{t}}{\History t}.
\end{split}
\end{equation*}
Using the fact that 
$$
Y_{j}:=\frac{Z_{j}(t)}{v\sqrt{\norm{\Context{\Optimalarm{t}}{t}}_{V_{t-1}^{-1}}^{2}+\norm{\Context{j}{t}}_{V_{t-1}^{-1}}^{2}}}
$$ is a standard Gaussian random variable given $\History t$, we have
$$
\CP{Y_{j}\le-v^{-1}}{\History t} \le \exp\left(-\frac{1}{2v^2}\right).
$$
Setting $v=\{2\log(N/(1-\gamma N) )\}^{-1/2}$ gives
$$
\CP{Y_{j}\le-v^{-1}}{\History t} \le  \exp\left(-\log\left(N/(1-\gamma N)\right)\right) =  \frac{1-\gamma N}{N}.
$$
Thus,
\begin{align*}
\CP{m_{t}\in N_{t}}{\History t}\ge & 1-\CP{Y_{j}\le-v^{-1},\exists j\neq N_{t}}{\History t}\\
\ge & 1-\sum_{j\neq N_{t}}\CP{Y_{j}<-v^{-1}}{\History t}\\
\ge & 1-(1-\gamma N) \\
= & \gamma N \\
\ge & 1-\gamma.
\end{align*}
The last inequality holds due to $\gamma\ge1/(N+1)$.
\end{proof}

\section{Proof of Theorem \ref{thm:Estimation_error}}
\begin{proof}
Fix $t=1,\ldots,T$ and let $V_{t}:=\sum_{\tau=1}^{t}\sum_{i=1}^{N}\Context i{\tau}\Context i{\tau}^{T}+\lambda_{t}I$.
For each $i$ and $\tau$, let $\DRError i{\tau}=\DRreward i{\tau}-\Context i{\tau}^{T}\beta$.
Then 
\[
\Estimator t=\beta+V_{t}^{-1}\left(-\lambda_{t}\beta+\sum_{\tau=1}^{t}\sum_{i=1}^{N}\DRError i{\tau}\Context i{\tau}\right).
\]
To bound $\norm{\Estimator t-\beta}_{2}$, 
\begin{align*}
\norm{\Estimator t-\beta}_{2}= & \norm{V_{t}^{-1}\left(-\lambda_{t}\beta+\sum_{\tau=1}^{t}\sum_{i=1}^{N}\DRError i{\tau}\Context i{\tau}\right)}_{2}\\
\le & \norm{V_{t}^{-1}}_{2}\norm{\left(-\lambda_{t}\beta+\sum_{\tau=1}^{t}\sum_{i=1}^{N}\DRError i{\tau}\Context i{\tau}\right)}_{2}\\
= & \left\{ \Mineigen{V_{t}}\right\} ^{-1}\norm{\left(-\lambda_{t}\beta+\sum_{\tau=1}^{t}\sum_{i=1}^{N}\DRError i{\tau}\Context i{\tau}\right)}_{2}.
\end{align*}
By Assumption 1, $\norm{\beta}_{2}\le1$. Using triangle inequality,
\begin{equation}
\norm{\Estimator t-\beta}_{2}\le\left\{ \Mineigen{V_{t}}\right\} ^{-1}\lambda_{t}+\left\{ \Mineigen{V_{t}}\right\} ^{-1}\norm{\sum_{\tau=1}^{t}\sum_{i=1}^{N}\DRError i{\tau}\Context i{\tau}}_{2}.\label{eq:estimation_error_two_terms}
\end{equation}

We will bound the first term in (\ref{eq:estimation_error_two_terms}).
Let $\text{Tr}(A)$ be the trace of a matrix $A$. By the definition
of the Frobenious norm, for $\tau=1,\ldots,t$, and for $i=1,\ldots,N$,
\[
\norm{\sum_{i=1}^{N}\Context i{\tau}\Context i{\tau}^{T}}_{F}\le\sum_{i=1}^{N}\sqrt{\text{Tr}\left(\Context i{\tau}\Context i{\tau}^{T}\Context i{\tau}\Context i{\tau}^{T}\right)}\le N.
\]

By Assumptions 3 and 4, $\left\{ \sum_{i=1}^{N}\Context i{\tau}\Context i{\tau}^{T}\right\} _{\tau=1}^{t}$
are independent random variables such that $\Expectation\left[\sum_{i=1}^{N}\Context i{\tau}\Context i{\tau}^{T}\right]\ge N\phi^{2}>0$.
Let $\delta\in(0,1)$ be given. 
By Lemma \ref{lem:min_eigenvalue_concentration},
if we set $\lambda_{t}=4\sqrt{2}N\sqrt{t\log\frac{12t^2}{\delta}}$, 
\[
\left\{ \Mineigen{V_{t}}\right\} ^{-1}<\frac{1}{\phi^{2}Nt},
\]
holds with probability at least $1-\delta/(3t^2)$. 
Thus, the first term can be bounded by 
\begin{equation}
\left\{ \Mineigen{V_{t}}\right\} ^{-1}\lambda_{t}\le\frac{4\sqrt{\log\frac{12t^2}{\delta}}}{\sqrt{t}\phi^{2}}.
\label{eq:estimation_bound_1st_term}
\end{equation}

Now we will bound the second term in (\ref{eq:estimation_error_two_terms}).
Let $U_{i}(\tau):=\Context i{\tau}\Context i{\tau}^{T}(\Betatrunc{\tau}-\beta)$.
Then we can decompose $\DRError i\tau\Context i\tau$ as, 
\begin{equation}
\begin{split}
\DRError i{\tau}\Context i\tau= & U_{i}(\tau)+\frac{\Indicator{\Action{\tau}=i}}{\SelectionP i{\tau}}\left(\Reward i{\tau}-\Context i{\tau}^{T}\Betatrunc{\tau}\right)\Context i{\tau}\\
= & \left(1-\frac{\Indicator{\Action{\tau}=i}}{\SelectionP i{\tau}}\right)U_{i}(\tau)+\frac{\Indicator{\Action{\tau}=i}}{\SelectionP i{\tau}}\Error i{\tau}\Context i{\tau}\\
:= & D_{i}(\tau)+E_{i}(\tau).
\end{split}
\label{eq:DR_error_decomposition}
\end{equation}

Let $D_{\tau}:=\sum_{i=1}^{N} D_i(\tau)$. Since $U_i(\tau)$ is $\History{\tau}$-measurable, the conditional expectation of $D_{\tau}$ is
\begin{equation*}
\begin{split}
\CE{D_{\tau}}{\History{\tau}} &=  \CE{\sum_{i=1}^{N}D_{i}(\tau)}{\History{\tau}} 
=  \sum_{i=1}^{N}\CE{\left(1-\frac{\Indicator{\Action{\tau}=i}}{\SelectionP i{\tau}}\right)}{\History{\tau}}U_{i}(\tau) \\
&=  \sum_{i=1}^{N}\left(1-\frac{\SelectionP i{\tau}}{\SelectionP i{\tau}}\right)U_{i}(\tau)
=  0
\end{split}
\end{equation*}
Thus, $\left\{ \sum_{u=1}^{\tau} D_{\tau} \right\}_{\tau=1}^{t}$ is a martingale sequence on $\left(\Real^{d},\norm{\cdot}_{2}\right)$ with respect to $\History{\tau}$. 
By Lemma \ref{lem:dim_reduction}, since $\left(\Real^{d},\norm{\cdot}_{2}\right)$ is a Hilbert space,
there exists a martingale sequence $\left\{ P_{\tau}\right\} _{\tau=1}^{t}=\left\{ \left(P_{\tau}^{(1)},P_{\tau}^{(2)}\right)^{T}\right\} _{\tau=1}^{t}$
on $\Real^{2}$ such that 
\begin{equation}
\norm{\sum_{u=1}^{\tau}D_{u}}_{2}=\norm{P_{\tau}}_{2}, \quad \norm{D_{\tau}}_{2}=\norm{P_{\tau}-P_{\tau-1}}_{2}
\label{eq:hilbert_projection}
\end{equation}
and $P_0=0$, for any $\tau=1,\ldots,t$. 
Since $\norm{\Betatrunc{\tau}-\beta}_{2}\le b$, for $r=1,2$
\begin{equation*}
\begin{split}
\abs{P_{\tau}^{(r)}-P_{\tau-1}^{(r)}}\le\norm{P_{\tau}-P_{\tau-1}}_{2}=&\norm{D_{\tau}}_{2}\\=&\norm{\sum_{i=1}^{N}\left(1-\frac{\Indicator{\Action{\tau}=i}}{\SelectionP i{\tau}}\right)U_{i}(\tau)}\\
\le&\sum_{i=1}^{N}\abs{1-\frac{\Indicator{\Action{\tau}=i}}{\SelectionP i{\tau}}}\norm{U_{i}(\tau)}_{2}\\
\le&\sum_{i=1}^{N}\abs{1-\frac{\Indicator{\Action{\tau}=i}}{\SelectionP i{\tau}}}\norm{\Betatrunc{\tau}-\beta}_{2}\\
\le&\left(N-1+\frac{1}{\SelectionP{\Action{\tau}}{\tau}}-1\right)b\\
\le&\left(N+\SelectionP{\Action{\tau}}{\tau}^{-1}\right)b.
\end{split}
\end{equation*}
By Lemma \ref{lem:chung_lemma}, there exists a martingale sequence $\left\{ N_{\tau}^{(r)}\right\}_{\tau=1}^{t}$ such that $\abs{N_{\tau}^{(r)}-N_{\tau-1}^{(r)}}\le(N+\gamma^{-1})b$, for all $\tau=1,\ldots,t$ and 
\begin{equation}
\left\{ N_{t}^{(r)}\neq P_{t}^{(r)}\right\} \subset\bigcup_{\tau=1}^{t}\left\{ \abs{P_{\tau}^{(r)}-P_{\tau-1}^{(r)}}>(N+\gamma^{-1})b\right\} \subset\bigcup_{\tau=1}^{t}\left\{ \SelectionP{\Action{\tau}}{\tau}\le\gamma\right\}.
\label{eq:bad_set_inclusion}
\end{equation}

Thus, by (\ref{eq:hilbert_projection}) and (\ref{eq:bad_set_inclusion}), for any $x>0$,
\begin{equation*}
\begin{split}
\Probability\left(\norm{\sum_{u=1}^{t}D_{u}}_{2}>x,\;\bigcap_{\tau=1}^{T}\left\{ \SelectionP{\Action{\tau}}{\tau}>\gamma\right\} \right)
=&\Probability\left(\norm{P_{t}}_{2}\ge x,\;\bigcap_{\tau=1}^{T}\left\{ \SelectionP{\Action{\tau}}{\tau}>\gamma\right\} \right)\\
\le&\Probability\left(\sum_{r=1}^{2}\abs{P_{t}^{(r)}}\ge x,\;\bigcap_{\tau=1}^{t}\left\{ \SelectionP{\Action{\tau}}{\tau}>\gamma\right\} \right)\\
\le&\sum_{r=1}^{2}\Probability\left(\abs{P_{t}^{(r)}}\ge\frac{x}{2},\;\bigcap_{\tau=1}^{t}\left\{ \SelectionP{\Action{\tau}}{\tau}>\gamma\right\} \right)\\
\le&\sum_{r=1}^{2}\Probability\left(\abs{P_{t}^{(r)}}\ge\frac{x}{2},\;N_{t}^{(r)}=P_{t}^{(r)}\right)\\
\le&\sum_{r=1}^{2}\Probability\left(\abs{N_{t}^{(r)}}\ge\frac{x}{2}\right).
\end{split}
\end{equation*}
Since $N_{\tau}^{(r)}$ has bounded differences, we can apply Lemma \ref{lem:Azuma_Hoeffding_ineqaulity} to have
\begin{equation*}
\sum_{r=1}^{2} \Probability\left(\abs{N_{t}^{(r)}}\ge\frac{x}{2}\right) \le 4 \exp\left(-\frac{x^{2}}{8tb^{2}\left(N+\gamma^{-1}\right)^{2}}\right).
\end{equation*}

Thus, with probability at least $1-\delta/(3t^2)$, 
\begin{equation}
\norm{\sum_{\tau=1}^{t}D_{\tau}}_{2}\le2\sqrt{2}b(N+\gamma^{-1})\sqrt{\log \frac{12t^2}{\delta}}
\label{eq:estimation_bound_D}
\end{equation}
holds with the event $\bigcap_{t=1}^{T} \{\SelectionP{\Action{t}}{t}>\gamma\}$.

Now we will bound the $E_{i}(\tau)$ term in (\ref{eq:DR_error_decomposition}).
Under the event $\bigcap_{t=1}^{T} \{\SelectionP{\Action{t}}{t}>\gamma\}$, we have
\begin{equation*}
\sum_{\tau=1}^{t}\sum_{i=1}^{N}E_{i}(\tau)
=\sum_{\tau=1}^{t}\frac{\Error{\Action{\tau}}{\tau}}{\SelectionP{\Action{\tau}}{\tau}}\Context{\Action{\tau}}{\tau}
=\sum_{\tau=1}^{t}\frac{\Indicator{\SelectionP{\Action{t}}{t}>\gamma}\Error{\Action{\tau}}{\tau}}{\SelectionP{\Action{\tau}}{\tau}}\Context{\Action{\tau}}{\tau}
\end{equation*}
For each $\tau\ge1$, define a filtration $\Filtration{\tau-1}:=\History{\tau}\cup \{\Action{\tau}\}$.
Then $\Context{\Action{\tau}}{\tau}$ is $\Filtration{\tau-1}$-measurable.
By Assumption 2, for any $\lambda\in\Real$,
\begin{equation*}
\CE{\exp\left(\lambda\frac{\Indicator{\SelectionP{\Action{t}}{t}>\gamma}\Error{\Action{\tau}}{\tau}}{\SelectionP{\Action{\tau}}{\tau}}\right)}{\Filtration{\tau-1}}
\le \exp\left(\frac{\lambda^{2}\Indicator{\SelectionP{\Action{t}}{t}>\gamma}\sigma^2}{2\SelectionP{\Action{\tau}}{\tau}^{2}}\right)
\le\exp\left(\frac{\lambda^{2}\sigma^2}{2\gamma^{2}}\right),
\end{equation*}
almost surely.
Since $\norm{\Context{\Action{\tau}}{\tau}}_{2}\le 1$, by Lemma \ref{lem:eta_x_bound}, there exists an absolute constant $C>0$ such that, with probability at least $1-\delta/(3t^2)$,
\begin{equation}
\norm{\sum_{\tau=1}^{t}\sum_{i=1}^{N}E_{i}(\tau)}_{2}\le2 C \sigma \gamma^{-1}\sqrt{t}\sqrt{\log\frac{12t^2}{\delta}}.
\label{eq:estimation_bound_E}
\end{equation}

Thus, with (\ref{eq:estimation_bound_1st_term}), (\ref{eq:estimation_bound_D}), and (\ref{eq:estimation_bound_E}), under the event $\bigcap_{t=1}^{T}\left\{ \SelectionP{\Action{t}}{t}>\gamma  \right\}$, we have
\begin{equation}
\begin{split}
\norm{\Estimator t-\beta}_{2}\le & \frac{4\sqrt{\log\frac{12t^2}{\delta}}}{\sqrt{t}\phi^{2}}+\frac{1}{\phi^{2}Nt}\left(4\left(N+\gamma^{-1}\right)b\sqrt{t}\sqrt{\log\frac{12t^2}{\delta}}+2C\sigma\gamma^{-1}\sqrt{t}\sqrt{\log\frac{12t^2}{\delta}}\right)\\
\le & \frac{4+4b+\gamma^{-1}N^{-1}\left(4b+2C\sigma\right)}{\phi^{2}\sqrt{t}}\sqrt{\log\frac{12t^2}{\delta}} \\
\le & \frac{4+4b+2\left(4b+2C\sigma\right)}{\phi^{2}\sqrt{t}}\sqrt{\log\frac{12t^2}{\delta}} \\
:= & \frac{C_{b,\sigma}}{\phi^{2}\sqrt{t}}\sqrt{\log\frac{12t^2}{\delta}},
\end{split}
\label{eq:estimation_t_bound}
\end{equation}
with probability at least $1-\delta/t^2$.
Since (\ref{eq:estimation_t_bound}) holds for all $t=1,\ldots,T$, 
\begin{equation*}
\begin{split}
&\le \Probability\left(\bigcup_{t=1}^{T}\left\{\norm{\Estimator{t}-\beta}_{2} > \frac{C_{b,\sigma}}{\phi^{2}\sqrt{t}}\sqrt{\log\frac{12t^2}{\delta}} \right\}, \; \bigcap_{t=1}^{T}\left\{ \SelectionP{\Action{t}}{t} > \gamma\right\} \right)\\ 
&\le \Probability\left(\bigcup_{t=1}^{T}\left\{\norm{\Estimator{t}-\beta}_{2} > \frac{C_{b,\sigma}}{\phi^{2}\sqrt{t}}\sqrt{\log\frac{12t^2}{\delta}} \right\}, \; \bigcap_{t=1}^{T}\left\{ \SelectionP{\Action{t}}{t} > \gamma\right\} \right) \\
&\le \sum_{t=1}^{T} \Probability\left(\norm{\Estimator{t}-\beta}_{2} > \frac{C_{b,\sigma}}{\phi^{2}\sqrt{t}}\sqrt{\log\frac{12t^2}{\delta}}, \; \bigcap_{t=1}^{T}\left\{ \SelectionP{\Action{t}}{t} > \gamma\right\} \right) \\
&\le \delta.
\end{split}
\end{equation*}
\end{proof}

\section{Proof of Lemma \ref{lem:eta_x_bound}}
\begin{proof}
Fix a $t\ge1$. 
Since for each $\tau=1,\ldots,t$, $\CE{\eta(\tau)}{\Filtration{\tau-1}}=0$ and $X(\tau)$ is $\Filtration{\tau-1}$-measurable, the stochastic process,
\begin{equation}
\left\{ \sum_{\tau=1}^{u}\eta(\tau)X(\tau)\right\} _{u=1}^{t}
\end{equation}
is a $\Real^{d}$-martingale. 
Since $(\Real^{d},\norm{\cdot}_{2})$ is a Hilbert space, by Lemma \ref{lem:dim_reduction}, there exists
a $\Real^{2}$-martingale $\{M_{u}\}_{u=1}^{t}$ such that 
\begin{equation}
\norm{\sum_{\tau=1}^{u}\eta(\tau)X(\tau)}_{2}=\norm{M_{u}}_{2},\;\norm{\eta(u)X(u)}_{2}=\norm{M_{u}-M_{u-1}}_{2},
\end{equation}
and $M_{0}=0$.
Set $M_{u}=(M_{1}(u),M_{2}(u))^{T}$. Then for each $i=1,2$, and $u\ge2$, by the assumption $\|X(u)\|_2\le 1$,
\begin{align*}
\abs{M_{i}(u)-M_{i}(u-1)}\le & \norm{M_{u}-M_{u-1}}_{2}\\
= & \norm{\eta(u)X(u)}_{2}\\
\le & \abs{\eta(u)}.
\end{align*}
By Lemma  \ref{lem:subgaussian}, $M_{i}(u)-M_{i}(u-1)$ is $C\sigma$-sub-Gaussian for some constant $C>0$. 
By Lemma \ref{lem:Azuma_Hoeffding_ineqaulity},  for $x>0$,
\begin{align*}
\Probability\left(\abs{M_{i}(t)}>x\right)= & \Probability\left(\abs{\sum_{u=1}^{t}M_{i}(u)-M_{i}(u-1)}>x\right)\\
\le & 2\exp\left(-\frac{x^{2}}{2tC^2\sigma^2}\right),
\end{align*}
for each $i=1,2$.
Thus, with probability $1-\delta/(2t^2)$,
\[
M_{i}(t)^{2}\le2tC^2\sigma^2\log\frac{4t^2}{\delta}.
\]
In summary, with probability at least $1-\delta/t^2$,
\[
\norm{\sum_{\tau=1}^{t}\eta(\tau)X(\tau)}_{2}=\sqrt{M_{1}(t)^{2}+M_{2}(t)^{2}}\le2C\sigma\sqrt{t}\sqrt{\log\frac{4t^2}{\delta}}.
\]
\end{proof}

\section{Proof of Lemma \ref{lem:min_eigenvalue_concentration}}
\begin{proof}
For each $\tau=1,\ldots,t$, let $\Sigma_{\tau}=\CE{P(\tau)}{\Filtration{\tau-1}}$.
Since $P(\tau)$ and $\Sigma_{\tau}$ are symmetric matrices,
\[
\begin{split}
\Mineigen{\sum_{\tau=1}^{t}P(\tau)+\lambda_{t}I}=&\Mineigen{\sum_{\tau=1}^{t}P(\tau)}+\lambda_{t}\\
=&\Mineigen{\sum_{\tau=1}^{t}\left\{ P(\tau)-\Sigma_{\tau}\right\} +\sum_{\tau=1}^{t}\Sigma_{\tau}}+\lambda_{t}\\
\ge&\Mineigen{\sum_{\tau=1}^{t}\left\{ P(\tau)-\Sigma_{\tau}\right\} }+\sum_{\tau=1}^{t}\Mineigen{\Sigma_{\tau}}+\lambda_{t}\\
\ge&\Mineigen{\sum_{\tau=1}^{t}\left\{ P(\tau)-\Sigma_{\tau}\right\} }+\phi^{2}t+\lambda_{t}.
\end{split}
\]
The last inequality uses the fact that $\Mineigen{\Sigma_{\tau}}\ge\phi^2$ for all $\tau$.
\begin{equation}
\begin{split}
\Probability\left(\Mineigen{\sum_{\tau=1}^{t}P(\tau)+\lambda_{t}I}\le\phi^{2}t\right)\le&\Probability\left(\Mineigen{\sum_{\tau=1}^{t}\left\{ P(\tau)-\Sigma_{\tau}\right\} }+\lambda_{t}\le0\right)\\
=&\Probability\left(\lambda_{\max}\left(\sum_{\tau=1}^{t}\left\{ \Sigma_{\tau}-P(\tau)\right\} \right)\ge\lambda_{t}\right)\\
\le&\Probability\left(\norm{\sum_{\tau=1}^{t}\left\{ \Sigma_{\tau}-P(\tau)\right\} }_{F}\ge\lambda_{t}\right).
\label{eq:min_eigen_frobenious_norm}
\end{split}
\end{equation}
Set $S_{u}=\sum_{\tau=1}^{u}\left\{\Sigma_{\tau}-P(\tau)\right\}$.
Then $\{S_{u}\}_{u=1}^{t}$ can be regarded as a martingale sequence on $\Real^{d\times d}$ with respect to $\left\{ P(\tau)\right\} _{\tau=1}^{t}$.
Note that $\left(\Real^{d\times d},\norm{\cdot}_{F}\right)$ is a Hilbert space. 
By Lemma \ref{lem:dim_reduction}, there exists a martingale sequence $\left\{ D_{u}=(D_{1}(u),D_{2}(u))^{T}\right\} _{u=1}^{t}$ on $\Real^{2}$ such that 
\begin{equation}
\norm{S_{u}}_{F}=\sqrt{D_{1}(u)^{2}+D_{2}(u)^{2}},\quad
\norm{P(u)-\Sigma_{u}}_{F}=\norm{D_{u}-D_{u-1}}_{2},
\label{eq:projected_matrix_martingale}
\end{equation}
for any $u\ge1$, and $D_{0}=0$.
Then, for any $i=1,2$, 
\[
\abs{D_{i}(u)-D_{i}(u-1)}^{2}\le\norm{D_{u}-D_{u-1}}_{2}^{2}=\norm{P(u)-\Sigma_{u}}_{F}^{2}
\]
Since $\norm{P(u)-\Sigma_{u}}_{F}\le2c$, we can apply Lemma \ref{lem:Azuma_Hoeffding_ineqaulity}
for $D_{1}(\tau)$, and $D_{2}(\tau)$, respectively. For any $i=1,2$,
and for any $x>0$, 
\[
\Probability\left(\abs{D_{i}(t)}\ge x\right)\le2\exp\left(-\frac{x^{2}}{8c^{2}t}\right).
\]
From (\ref{eq:min_eigen_frobenious_norm}) and \eqref{eq:projected_matrix_martingale},
\begin{equation*}
\begin{split}
\Probability\left(\Mineigen{\sum_{\tau=1}^{t}P(\tau)+\lambda_{t}I}\le\phi^{2}t\right)
&\le\Probability\left(\norm{S_{t}}_{F}\ge\lambda_{t}\right) \\
&=\Probability\left(\sqrt{D_{1}(t)^{2}+D_{2}(t)^{2}} \ge \lambda_{t}\right) \\
&\le\Probability\left(\abs{D_{1}(t)}+\abs{D_{2}(t)}\ge\lambda_{t}\right) \\
&\le \Probability\left(\abs{D_{1}(t)}\ge\frac{\lambda_{t}}{2}\right) +\Probability\left(\abs{D_{2}(t)}\ge\frac{\lambda_{t}}{2}\right) \\
&\le 4\exp\left(-\frac{\lambda_{t}^{2}}{32c^{2}t}\right).
\end{split}
\end{equation*}
Thus, for any $\delta\in(0,1)$, if $\lambda_{t}\ge4\sqrt{2}c\sqrt{t}\sqrt{\log\frac{4t^2}{\delta}}$,
then with probability at least $1-\delta/t^2,$ 
\[
\Mineigen{\sum_{\tau=1}^{t}P(\tau)+\lambda_{t}I}>\phi^{2}t.
\]
\end{proof}

\section{Implementation details}
\subsection{Efficient calculation of the sampling probability}
In our proposed algorithm, we use quasi-Monte Carlo estimation to calculate the sampling probability, $\Pisampled{i}{t}$.
At round $t$, for each \(i=1,\ldots,N\), define \(Z_{i}=\frac{\Context {i}{t}^T\left(\BetaSampled{i}{t}-\Estimator{t-1}\right)}{v\norm{\Context{i}{t}}_{V_t^{-1}}}\).
Then, \(Z_1,\ldots,Z_N\) are IID standard Gaussian random variables.
For each \(i=1,\ldots,N\),
\begin{align*}
\Pisampled{i}{t}= & \CP{\Context{i}{t}^T\BetaSampled{i}{t} \ge \Context{j}{t}^T\BetaSampled{j}{t},\forall j \neq i}{\History t} \\
= & \CP{\frac{\norm{\Context{i}{t}}_{V_t^{-1}}}{\norm{\Context{j}{t}}_{V_t^{-1}}} Z_{i}\ge Z_{j}+\frac{\left(\Context jt-\Context it\right)^T\Estimator{t-1}}{v\norm{\Context{j}{t}}_{V_t^{-1}}},\forall j\neq i}{\History t}
\end{align*}
let \(f\) and \(F\) be the density and the distribution function of the standard Gaussian random variables, respectively.
Since \(Z_i\), and \(\left\{ Z_j \right\}_{j\neq i}\) are independent, the selection probability can be written as,
\[
\Pisampled{i}{t} = \int \prod_{j\neq i} F\left( \frac{\norm{\Context{i}{t}}_{V_t^{-1}}}{\norm{\Context{j}{t}}_{V_t^{-1}}} z +\frac{\left(\Context it-\Context jt\right)^T\Estimator{t-1}}{v\norm{\Context{j}{t}}_{V_t^{-1}}} \right) f(z)dz.
\]
This can be estimated by,
\begin{equation}
\label{eq:pi_estimate}
\frac{1}{M}\sum_{m=1}^{M}F\prod_{j\neq i} \left( \frac{\norm{\Context{i}{t}}_{V_t^{-1}}}{\norm{\Context{j}{t}}_{V_t^{-1}}} Z^{(m)} +\frac{\left(\Context it-\Context jt\right)^T\Estimator{t-1}}{v\norm{\Context{j}{t}}_{V_t^{-1}}} \right),
\end{equation}
where \(Z^{(m)}\) is the standard Gaussian random variables.

In this way, we can compute $\Pisampled{i}{t}$ without sampling \(\BetaSampled{i}{t}\) \(M\times N\) times from \(N(\Estimator{t-1},v_{t}I)\).
The error of the quasi Monte Carlo method is bounded by \(O\left(\frac{(\log M)^s}{M}\right)\), where \(s\) is the dimension of the domain of function to integrate.
If we sample \(\BetaSampled{i}{t}\) \(M \times N\) times, it gives \(O\left(\frac{(\log M)^{N-1}}{M}\right)\) error.
In contrast, using (\ref{eq:pi_estimate}) reduces the error to \(O\left(\frac{\log M}{M}\right)\).

In our simulation studies, we use \texttt{sobol}\_\texttt{seq} module in Python 3 to generate the quasi-Monte Carlo samples.
The number of samples is \(M=200\) in \texttt{BLTS} and \texttt{DRTS}.
We plot the estimator of \(\Pisampled{i}{t}\) using \(m=1, \ldots, 200\) quasi-Monte Carlo samples, and observe that it converges within the small errors.


\begin{figure}[t]
\centering
\includegraphics[width=\linewidth]{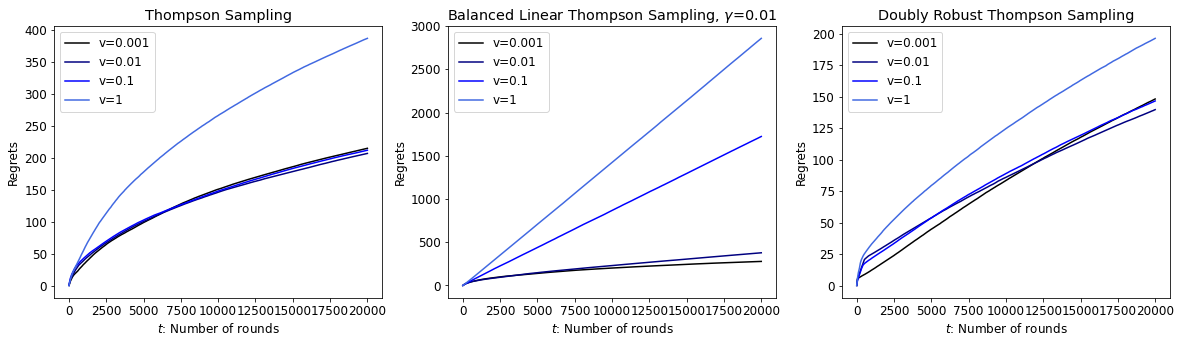}
\caption{
A comparison of the cumulative regrets of \texttt{LinTS} (left), \texttt{BLTS} (middle), and \texttt{DRTS} (right) on various \(v\) when $d=20$ and $N=20$.
Each line shows the averaged cumulative regrets over 10 repeated experiments.
}
\label{fig:hyperparameters}
\end{figure}

\subsection{Simulation results with various hyperparameters}

In this subsection, we report the performance of the three algorithms, (i) \texttt{LinTS}, (ii) \texttt{BLTS}, and (iii) the proposed \texttt{DRTS}, with various hyperparameters.
As described in Section \ref{sec:simulation}, the hyperparameter sets are $v\in\{0.001,0.01,0.1,1\}$ for all three algorithms and $\gamma\in\{0.01, 0.5, 0.1\}$ for \texttt{BLTS}.

Figure \ref{fig:hyperparameters} shows the comparison of the three algorithms on various hyperparameter \(v\), when \(d=20, N=20, \gamma=0.01\).
We find that the performance of the three algorithms do not change much when $v\leq 0.01$.
This trend is similar on different \(\gamma, N\), and \(d\).

\section{A review of approaches to missing data and doubly-robust method}
In this section, we review approaches to missing data and the doubly-robust method used in our proposed method.
First, we provide the approaches from a purely missing data point of view and how the doubly-robust method is motivated.   
In the second section, we show the procedures applying the doubly-robust method in bandit settings.

\subsection{Doubly-robust method in missing data}
There are two main approaches to missing data: imputation and inverse probability weighting (IPW).
Imputation is to fill in the predicted value of missing data from a specified model, and IPW is to use the observed records only but weight them by the inverse of the observation probability.  The doubly-robust method can be viewed as a combination of the two.

For illustrative purposes, consider the problem of estimating the marginal mean of $Y\in \mathbb{R}$, $\Expectation(Y)=:\mu$.   
Denoting $(Y_i-\mu)$ by $U_i(\mu)$, when all data are observed, $$U(\mu)=\sum_{i=1}^n U_i(\mu)=\sum_{i=1}^n (Y_i-\mu)=0,$$ gives an unbiased estimator of $\mu$, $\sum_{i=1}^n Y_i/n$,  and $U(\mu)$ is called an unbiased estimating function since $\Expectation[U(\mu)]=0$.   
Let $\delta_i$ be the observation indicator which takes value 1 if $Y_i$ is observed, 0, otherwise.  Suppose there are auxiliary variables,   $X_i\in \mathbb{R}^d$, and $X_i$'s are  observed for all $i$.  
Also denote the probability of observation by $P(\delta_i=1|X_i)=:\pi_i$.  
We assume $P(\delta_i=1|Y_i, X_i)=P(\delta_i=1|X_i)$, that is, the observation indicator is independent of $Y_i$.  This is called {\it missing at random} mechanism.  This assumption is required for the doubly robust method to be valid.  
Using the observed values only, the estimating equation for the observed data $$U_o(\mu)=\sum_{i=1}^n\delta_i U_i(\mu)=\sum_{i=1}^n \delta_i(Y_i-\mu)=0,$$ gives $\frac{\sum_{i=1}^n \delta_iY_i}{\sum_{i=1}^n \delta_i}$ as an estimator for $\mu$.  
This estimator may be biased since  $\Expectation U_o(\mu)\neq 0.$

The two main approaches modify the observed estimating function employing two new quantities,  $\Expectation(Y_i|X_i)$ and  $\pi_i$.  
These two quantities are usually unknown and we need to specify models.  
Therefore the two approaches  require assumptions for auxiliary models: the imputation model, $\Expectation(Y_i|X_i;\beta)$, and the model for observation probability, $\pi_i(\phi)$.  
The validity of each approach depends on the correct specification of the auxiliary model assumptions. 
The qualifier `auxiliary' comes from the fact that  these models are not needed when there is no missing data.  
In IPW, one constructs an unbiased estimating equation by amplifying the observed record according to the inverse of the observation probability as follows: $$\sum_{i=1}^n\frac{\delta_i}{\pi_i(\phi)}U_i(\mu)=\sum_{i=1}^n \frac{\delta_i}{\pi_i(\phi)}(Y_i-\mu).$$  
If $\pi(\phi)$ is correctly specified, i.e., $\pi=\pi(\phi)$,  $\Expectation(\sum_{i=1}^n\frac{\delta_i}{\pi_i(\phi)}U_i(\mu))=0$, hence the resulting IPW estimator is valid.  
In the imputation method, we replace missing $Y_i$ with $\Expectation(Y_i|X_i;\beta)$ and the estimator is the solution of $U^{IMP}(\mu,\beta)=0$ where
\begin{equation*}
\begin{split}
U^{IMP}(\mu,\beta)&=\sum_{i=1}^n \left[\delta_i U_i(\mu)+(1-\delta_i)\Expectation(U_i(\mu)|X_i;\beta)\right]\\
&=\sum_{i=1}^n \left[\Expectation(Y_i|X_i;\beta) +\delta_i\{Y_i-\Expectation(Y_i|X_i;\beta)\}-\mu \right].
\end{split}
\end{equation*}

The doubly robust (DR) method \citep{robins1994, bang2005doubly} was initially motivated by attempting to improve the efficiency of the IPW method.   
Note that we can construct an auxiliary unbiased estimating function $(\frac{\delta_i}{\pi_i(\phi)}-1)$.  
Geometrically we can reduce the norm of the estimating function $\frac{\delta_i}{\pi_i(\phi)}U_i(\mu)$ by subtracting the projection on to the nuisance tangent space formed from $(\frac{\delta_i}{\pi_i(\phi)}-1)$. 
The nuisance tangent space is the closed linear span of $B(\frac{\delta_i}{\pi_i(\phi)}-1)$ for some $B\in\mathbb{R}^{d}$, and the projection onto the nuisance tangent space is $$\sum_{i=1}^{n}\frac{\delta_i-\pi_i(\phi)}{\pi_i(\phi)}\Expectation(U_i|X_i;\beta).$$
After subtraction, the DR estimating function has a form
\begin{equation*}
\begin{split}
U^{DR}(\mu,\beta,\phi) &= \sum_{i=1}^n  \left[ \frac{\delta_i}{\pi_i(\phi)}U_i(\mu)+
 (1- \frac{\delta_i}{\pi_i(\phi)}) \Expectation(U_i| X_i;\beta)  \right]\\
& = \sum_{i=1}^n \left[ \Expectation(U_i| X_i;\beta)+\frac{\delta_i}{\pi_i(\phi)} \{U_i(\mu)-\Expectation(U_i(\mu)|X_i;\beta)\}\right] .
\end{split}
\end{equation*}
Note that when you replace $\delta_i$ in $U^{IMP}(\mu)$ with $\frac{\delta_i}{\pi_i(\phi)}$, you obtain $U^{DR}(\mu)$.
The DR method  requires both auxiliary models.  
However,  its validity is guaranteed when {\it either} of the models is correct.   
To verify, if the imputation model is correctly specified, i.e., $\Expectation[U_i(\mu)-\Expectation(U_i(\mu)|X_i;\beta)|X_i]=0$, we have
$$\Expectation\{U^{DR}(\mu,\beta,\phi)\}=\Expectation\sum_{i=1}^n \left[ \Expectation(U_i| X_i)-\frac{\delta_i}{\pi_i(\phi)} \{U_i(\mu)-\Expectation(U_i(\mu)|X_i)\}\right] =\sum_{i=1}^n \Expectation\Expectation(U_i| X_i)=0$$ even if the $\pi$ model is misspecified, i.e., $\pi_i(\phi) \neq \pi_i$.  
If the observation model is correctly specified, $\pi_i(\phi) = \pi_i$, then  $\Expectation(1-\frac{\delta_i}{\pi_i}|X_i)=0$, and
 $$\Expectation\{U^{DR}(\mu,\beta,\phi)\}=\sum_{i=1}^n \Expectation\left[ \frac{\delta_i}{\pi_i}U_i(\mu)+
\left\{ (1-\frac{\delta_i}{\pi_i}) \Expectation(U_i| X_i;\beta) \right\} \right]= \sum_{i=1}^n  \Expectation\left[ \frac{\delta_i}{\pi_i}U_i(\mu)\right]=0,$$ even if the imputation model is misspecified, i.e., 
$\Expectation[U_i(\mu)|X_i]\neq\Expectation[U_i(\mu)|X_i;\beta)]$.  
Therefore when   {\it either} of the models is correct, $U^{DR}(\mu)$ is unbiased and with other technical conditions, the estimator can be shown to be consistent.  
That is why the qualifier {\it doubly robust} is adopted.  
The construction of the DR estimating function is possible because we have two unbiased estimating functions.

\subsection{Application to bandit settings}
In bandit settings, the missingness is controlled since the learner selects the arm.  
Therefore, the probability of observation or selection is known and  the DR estimator is guaranteed to be valid although the imputation model for missing reward is incorrectly specified.   
The merit of the DR estimator in the bandit setting is that we can utilize the observed contexts from selected or unselected arms.   
Below we describe the DR method in the contextual bandit setting.

Let $\pi_{i}(t):=\mathbb{P}(\Action{t}=i| \mathcal{H}_t)$ be the probability of selecting arm $i$ at round $t$. 
As defined in the manuscript, the DR pseudo-reward is 
\begin{equation}
\DRreward it= \left\{ 1-\frac{\Indicator{i=\Action{t}}}{\SelectionP it}\right\} \Context it^{T}\Betatrunc{t}
+\frac{\Indicator{i=\Action{t}}}{\SelectionP it}\Reward{\Action{t}}t,
\label{eq:DRreward_supp}
\end{equation}
for some $\Betatrunc{t}$ depending on $\History t$. 
The pseudo-reward \eqref{eq:DRreward_supp} comes from the following procedures. 
First we construct an unbiased estimating function also known as the IPW score, 
\begin{equation}
\sum_{\tau=1}^{t}\sum_{i=1}^{N}\frac{\Indicator{i=\Action{\tau}}}{\SelectionP i{\tau}}\Context i{\tau}\left(\Reward i{\tau}-\Context i{\tau}^{T}\beta\right),
\label{eq:IPW_score}
\end{equation}
where only the pairs $(\Context it, \Reward it)$ from the selected arms are contributed according the weight of the inverse of $\SelectionP it$.
Setting this score equal to $0$ and solving $\beta$ gives the estimator used in \citet{dimakopoulou2019balanced}.
Now we can subtract the projection on the nuisance tangent space from (\ref{eq:IPW_score}).  
The nuisance tangent space is the closed linear span of $B(\frac{\mathbb{I}(i=a(t)}{\pi_i(t)}-1)$ for some $B\in\mathbb{R}^{d}$, and the projection onto the nuisance tangent space is
\begin{equation*}
\sum_{\tau=1}^{t}\sum_{i=1}^{N}\frac{\Indicator{i=\Action{\tau}}-\SelectionP i{\tau}}{\SelectionP i{\tau}}\Context i{\tau}\left(E(Y_{i}(\tau)|\History{\tau})-\Context i{\tau}^{T}\beta\right).
\end{equation*}
When the projection is subtracted from the (\ref{eq:IPW_score}) after replacing $E(Y_i(t)|\History t)$ with $\Context it^{T}\Betatrunc{t}$, the IPW score becomes the efficient score, 

\begin{equation}
\sum_{\tau=1}^{t}\sum_{i=1}^{N}\Context i{\tau}\left(\DRreward i{\tau}-\Context i{\tau}^{T}\beta\right).
\label{eq:efficient_score}
\end{equation}
Any $\Betatrunc{t}$ that depends on $\History t$ serves the purpose of imputation.
Due to the doubly robustness property, $\Context it^{T}\Betatrunc{t}$ does not have to be an unbiased estimator of  $E(Y_i(t)|\History t)$.
We recommend setting $\Betatrunc{t}$ as the ridge regression estimator based on the selected arms only.
The expression
(\ref{eq:efficient_score}) 
resembles the score when the rewards for all arms were observed, if $\Reward it$  is replaced with $\DRreward it$.

Our proposed estimator $\Estimator t$ is a solution of (\ref{eq:efficient_score}) with
a regularization parameter $\lambda_{t}$:
\begin{equation*}
\Estimator t = \left(\sum_{\tau=1}^{t}\sum_{i=1}^{N}\Context i{\tau}\Context i{\tau}^{T}+\lambda_{t}I\right)^{-1}\left(\sum_{\tau=1}^{t}\sum_{i=1}^{N}\Context i{\tau}\DRreward i{\tau}\right).
\end{equation*}
Harnessing the pseudo-rewards defined in (\ref{eq:DRreward_supp}), we can make use of all contexts rather than just selected contexts. 
The use of all contexts instead of $\Context{\Action{t}}t$ induces the improvement in the regret bound of the proposed algorithm.
\citet{kim2019doubly} also suggests DR estimator, but it uses Lasso estimator from the following pseudo-reward
$$\DRreward it=\bar{X}(t)^T\hat{\beta}(t-1)+\frac{1}{N}\frac{Y_{a(t)}(t)-b_{a(t)}(t)^T\hat{\beta}(t-1)}{\pi_{a(t)}(t)},$$
where $\bar{X}(t)=\frac{1}{N}\sum_{i=1}^N X_i(t)$.  
This estimator is of an aggregated form.  
As described in the text,  the estimator using the aggregated pseudo-reward does not permit the regret decomposition as equation \eqref{eq:decomposition} in the paper.

\section{Limitations of our work}
\begin{enumerate}
    \item The regret bound is constructed under the assumption that the contexts are independent over rounds (Assumption 3) and the covariance matrix is positive definite (Assumption 4).
    When using our proposed algorithm, one should check that the contexts satisfies the two assumptions.
    When the contexts violates the two assumptions, the improved regret bound \eqref{eq:regret_bound} might not hold.
    \item Our proposed algorithm, \texttt{DRTS} requires additional computations for $\Pisampled{i}{t}$, $\SelectionP{i}{t}$ and the imputation estimator $\check{\beta}$.
    To lessen this computational burdens we developed an efficient way to compute $\Pisampled{i}{t}$ and $\SelectionP{i}{t}$.
\end{enumerate}

\section{Computation of the selection probability}
\label{sec:pi_computation}
In this section, we provide details of how to compute the selection probability, $\SelectionP{i}{t}$.
Because the counter example in Remark~\ref{rem:chung_lemma} can be applied our case when $N=2$ and $\Pisampled{1}{t}=1-\delta/t^2$ and $\Pisampled{2}{t}=\delta/t^2$, for some $t\in[T]$.
To avoid this example, we adjust our $\Pisampled{i}{t}$ to $\SelectionP{i}{t}$ such that
\[
\pi_{i}(t):=
\begin{cases}
\gamma+\epsilon_{t} & \Pisampled{i}{t}>\gamma\\
\frac{\gamma}{2} & \Pisampled{i}{t}\le\gamma
\end{cases}
\]
where $\epsilon_{t}>0$ is a constant that makes $\sum_{i=1}^{N}\SelectionP{i}{t}=1$ hold for each $t\in[T]$.
In this way, we obtain 
\[
\min_{i\in[N]}\SelectionP{i}{t} \ge \frac{\gamma}{2} \ge \frac{1}{2(N+1)} > \frac{\delta}{t^2}, 
\]
and this avoids the counter example in Remark~\ref{rem:chung_lemma}.
By resampling $\SelectionP{i}{t}$ instead of $\Pisampled{i}{t}$, we obtain the same theoretical results and the regret bound in Theorem~\ref{thm:regret_bound} holds accordingly.
\end{document}